\theoremstyle{plain}
\newtheorem{theorem}{Theorem}[section]
\newtheorem{proposition}[theorem]{Proposition}
\newtheorem{lemma}[theorem]{Lemma}
\newtheorem{corollary}[theorem]{Corollary}
\theoremstyle{definition}
\newtheorem{definition}[theorem]{Definition}
\newtheorem{assumption}[theorem]{Assumption}
\theoremstyle{remark}
\newtheorem{remark}[theorem]{Remark}
\newtheorem{example}[theorem]{Example}
\crefname{hypothesis}{Hypothesis}{Hypotheses}
\crefname{fact}{Fact}{Facts}
\title[A convergence result of a continuous model of deep learning]{A convergence result of a continuous model of deep~learning via a \L{}ojasiewicz--Simon inequality}
\author[N. Isobe]{Noboru Isobe}
\address{RIKEN, Tokyo, Japan}
\email{noboru.isobe@riken.jp}
\urladdr{https://researchmap.jp/noboru-isobe?lang=en}
\thanks{This work was funded by JSPS KAKENHI 22J20130.}
\keywords{deep learning, neural ODE, gradient flow, Wasserstein space, mean-field optimal control, L(ions)-derivative, \L{}ojasiewicz--Simon inequality}
\subjclass[2020]{Primary 35B40; Secondary 49J52, 49Q22, 68T07}
\let\expandafter\oldproof\csname\string\proof\endcsname
\setlist[description]{%
  topsep=10pt,               
  itemsep=5pt,               
  font={\bfseries\rmfamily}, 
}
\newlist{myenum}{enumerate}{3}
\setlist[myenum,1]{label=\textbf{(\arabic*)},
                   ref  =\textbf{(\arabic*)}}
\setlist[myenum,2]{label=\textbf{(\alph*)},
                   ref  =\themyenumi\textbf{(\alph*)}}
\setlist[myenum,3]{label=\bfseries(\roman*),
                   ref  =\themyenumii\textbf{.(\roman*)}}
\DeclarePairedDelimiter{\absdelim}{\lvert}{\rvert}
\DeclarePairedDelimiter{\normdelim}{\lVert}{\rVert}
\DeclareDocumentCommand{\abs}{m e{_^}}{\mathchoice
    {\absdelim*{#1}\IfValueT{#2}{_{#2}}\IfValueT{#3}{^{#3}}}
    {\absdelim{#1}\IfValueT{#2}{_{#2}}\IfValueT{#3}{^{#3}}}
    {\absdelim{#1}\IfValueT{#2}{_{#2}}\IfValueT{#3}{^{#3}}}
    {\absdelim{#1}\IfValueT{#2}{_{#2}}\IfValueT{#3}{^{#3}}}}
\DeclareDocumentCommand{\norm}{m e{_^}}{\mathchoice
    {\normdelim*{#1}\IfValueT{#2}{_{#2}}\IfValueT{#3}{^{#3}}}
    {\normdelim{#1}\IfValueT{#2}{_{#2}}\IfValueT{#3}{^{#3}}}
    {\normdelim{#1}\IfValueT{#2}{_{#2}}\IfValueT{#3}{^{#3}}}
    {\normdelim{#1}\IfValueT{#2}{_{#2}}\IfValueT{#3}{^{#3}}}}
\DeclareDocumentCommand{\eval}{m e{_^}}{\mathchoice
    {\left.{#1}\right|\IfValueT{#2}{_{#2}}\IfValueT{#3}{^{#3}}}
    {{#1}\rvert\IfValueT{#2}{_{#2}}\IfValueT{#3}{^{#3}}}
    {{#1}\rvert\IfValueT{#2}{_{#2}}\IfValueT{#3}{^{#3}}}
    {{#1}\rvert\IfValueT{#2}{_{#2}}\IfValueT{#3}{^{#3}}}}
\SetMathAlphabet{\mathcal}{normal}{OMS}{cmsy}{m}{n} 
\SetMathAlphabet{\mathcal}{bold}{OMS}{cmsy}{m}{n} 
\DeclareFontFamily{U}{matha}{\hyphenchar\font45}
\DeclareFontShape{U}{matha}{m}{n}{
<-6> matha5 <6-7> matha6 <7-8> matha7
<8-9> matha8 <9-10> matha9
<10-12> matha10 <12-> matha12
}{}
\DeclareSymbolFont{matha}{U}{matha}{m}{n}
\DeclareFontFamily{U}{mathx}{\hyphenchar\font45}
\DeclareFontShape{U}{mathx}{m}{n}{
<-6> mathx5 <6-7> mathx6 <7-8> mathx7
<8-9> mathx8 <9-10> mathx9
<10-12> mathx10 <12-> mathx12
}{}
\DeclareSymbolFont{mathx}{U}{mathx}{m}{n}
\DeclareMathDelimiter{\vvvert} {0}{matha}{"7E}{mathx}{"17}%
\DeclarePairedDelimiterX{\normiii}[1]
{\vvvert}
{\vvvert}
{\ifblank{#1}{\:\cdot\:}{#1}}
\SetMathAlphabet{\mathcal}{normal}{OMS}{cmsy}{m}{n} 
\SetMathAlphabet{\mathcal}{bold}{OMS}{cmsy}{m}{n} 
\DeclareSymbolFont{EulerExtension}{U}{euex}{m}{n}
\DeclareMathSymbol{\euintop}{\mathop} {EulerExtension}{"52}
\DeclareMathSymbol{\euointop}{\mathop} {EulerExtension}{"48}
\DeclareSymbolFont{cmletters}{OML}{cmm}{m}{it}
\DeclareSymbolFont{cmsymbols}{OMS}{cmsy}{m}{n}
\DeclareSymbolFont{cmlargesymbols}{OMX}{cmex}{m}{n}
\DeclareMathSymbol{\myjmath}{\mathord}{cmletters}{"7C}
\DeclareMathSymbol{\myamalg}{\mathbin}{cmsymbols}{"71}
\DeclareMathSymbol{\mycoprod}{\mathop}{cmlargesymbols}{"60}
\let\jmath\myjmath
\numberwithin{equation}{section}
\tikzset{
    ncbar angle/.initial=90,
    ncbar/.style={
        to path=(\tikztostart)
        -- ($(\tikztostart)!#1!\pgfkeysvalueof{/tikz/ncbar angle}:(\tikztotarget)$)
        -- ($(\tikztotarget)!($(\tikztostart)!#1!\pgfkeysvalueof{/tikz/ncbar angle}:(\tikztotarget)$)!\pgfkeysvalueof{/tikz/ncbar angle}:(\tikztostart)$)
        -- (\tikztotarget)
    },
    ncbar/.default=0.5cm,
}
\tikzset{square left brace/.style={ncbar=0.5cm}}
\tikzset{square right brace/.style={ncbar=-0.5cm}}
\tikzset{round left paren/.style={ncbar=0.5cm,out=120,in=-120}}
\tikzset{round right paren/.style={ncbar=0.5cm,out=60,in=-60}}
\tikzset{block/.style={draw, thick, text width=2cm ,minimum height=1.3cm, align=center},   
line/.style={-latex}     
}
\crefname{theorem}{Theorem}{Theorems}
\crefname{appendix}{Appendix}{Appendices}
\crefname{definition}{Definition}{Definitions}
\crefname{problem}{Problem}{Problems}
\crefname{fact}{Fact}{Facts}
\crefname{proposition}{Proposition}{Propositions}
\crefname{lemma}{Lemma}{Lemmas}
\crefname{corolary}{Corolary}{Corolaries}
\crefname{assumption}{Assumption}{Assumptions}
\crefname{claim}{Claim}{Claims}
\crefname{remark}{Remark}{Remarks}
\crefname{example}{Example}{Examples}
\crefname{corollary}{Corollary}{Corollaries}
\crefname{subsubsection}{Subsubsection}{Subsubsections}
\crefname{subsection}{Subsection}{Subsections}
\crefname{section}{Section}{Sections}
\crefname{chapter}{Chapter}{Chapters}
\crefname{table}{Table}{Tables}
\crefname{figure}{Figure}{Figures}
\crefname{algorithm}{Algorithm}{Algorithms}
\crefname{myenumi}{item}{items}
\crefname{myenumii}{item}{items}
\crefname{myenumiii}{item}{items}
\renewcommand{\ref}{\cref}
\newcommand{\e}{\mathrm{e}}
\newcommand{\Leb}{\mathrm{Leb}}
\newcommand{\opn}{\textup{op}}
\newcommand{\Pc}{\mathcal{P}_\mathrm{c}}
\newcommand{\Pcal}{\mathcal{P}}
\newcommand{\Ptwo}{\Pcal_2}
\newcommand{\Bcal}{\mathcal{B}}
\newcommand{\Dcal}{\mathcal{D}}
\newcommand{\Fcal}{\mathcal{F}}
\newcommand{\Lcal}{\mathcal{L}}
\newcommand{\Ccinf}{C_c^\infty}
\newcommand{\R}{\mathbb{R}}
\newcommand{\N}{\mathbb{N}}
\newcommand{\C}{\mathbb{C}}
\newcommand{\Y}{\mathcal{Y}}
\newcommand{\X}{\mathcal{X}}
\newcommand{\XtimesY}{\X\times\Y}
\newcommand{\metricsp}{\mathcal{M}}
\newcommand{\Hilbertsp}{\mathcal{H}}
\newcommand{\strongsp}{\mathcal{V}}
\newcommand{\AC}{\mathrm{AC}}
\newcommand{\Ltwotheta}{L^2\qty(I;\mathcal{P}_2(\R^m))}
\newcommand{\varThetaSpace}{L^2\qty(I;L^2\qty(\Omega;\R^m))}
\newcommand{\varThetaSpaceInf}{L^\infty\qty(I;L^\infty\qty(\Omega;\R^m))}
\newcommand{\tildeE}{\widetilde{E}}
\newcommand{\OptPlan}{\Gamma_{\textup{o}}}
\NewDocumentCommand{\lascan}{m e{_^}}{\mathchoice
    {\left\langle#1\right\rangle\IfValueT{#2}{_{#2}}\IfValueT{#3}{^{#3}}}
    {\langle#1\rangle\IfValueT{#2}{_{#2}}\IfValueT{#3}{^{#3}}}
    {\langle#1\rangle\IfValueT{#2}{_{#2}}\IfValueT{#3}{^{#3}}}
    {\langle#1\rangle\IfValueT{#2}{_{#2}}\IfValueT{#3}{^{#3}}}}
\def\la#1\ra{\lascan{#1}}
\newcommand{\ra}{\rangle}
\newcommand{\one}{\mathrm{I}}
\newcommand{\two}{\one\hspace{-1.2pt}\one}
\newcommand{\three}{\one\hspace{-1.2pt}\one\hspace{-1.2pt}\one}
\newcommand{\pushright}[1]{\ifmeasuring@#1\else\omit\hfill$\displaystyle#1$\fi\ignorespaces}
\newcommand{\pushleft}[1]{\ifmeasuring@#1\else\omit$\displaystyle#1$\hfill\fi\ignorespaces}
\DeclareMathOperator{\Div}{div}
\DeclareMathOperator{\Dir}{Dir}
\DeclareMathOperator{\Ran}{Ran}
\DeclareMathOperator{\Law}{Law}
\DeclareMathOperator{\Expect}{\mathbb{E}}
\DeclareMathOperator*{\supp}{supp}
\DeclareMathOperator*{\minimize}{minimize}
\DeclareMathOperator{\Hessian}{Hess}
\DeclareMathOperator{\Id}{Id}
\DeclareMathOperator{\Tan}{Tan}
\def\Set#1{\Setsplit#1\Setsplit}
\def\Setsplit#1|#2\Setsplit{\mathchoice
  {\left\{\:#1\;\middle|\;#2\:\right\}}
  {\{\,#1\mid#2\,\}}
  {\{\,#1\mid#2\,\}}
  {\{\,#1\mid#2\,\}}}%
\renewcommand{\th}{%
    \ifmmode
        ^\mathrm{th}%
    \else%
        \textsuperscript{th}\xspace%
    \fi%
}
\newcommand{\subalign}[1]{%
  \vcenter{%
    \Let@ \restore@math@cr \default@tag
    \baselineskip\fontdimen10 \scriptfont\tw@
    \advance\baselineskip\fontdimen12 \scriptfont\tw@
    \lineskip\thr@@\fontdimen8 \scriptfont\thr@@
    \lineskiplimit\lineskip
    \ialign{\hfil$\m@th\scriptstyle##$&$\m@th\scriptstyle{}##$\hfil\crcr
      #1\crcr
    }%
  }%
}
\renewcommand{\tableofcontents}{%
  {\centering\large\bfseries Contents\par}\smallskip
  \@starttoc{toc}%
}
\let\l@paragraph\@gobbletwo
\begin{document}

\begin{abstract}
We study an idealized training process for deep neural networks in a continuous-depth, mean-field model in which each layer is parameterized by a probability measure on a Euclidean parameter space.
The training dynamics are formulated as a Wasserstein-type gradient flow of an objective with a fixed $L^2$-regularization.
Under suitable analyticity and growth assumptions, together with a coercivity assumption and sufficient regularity of the initial data, we prove that every curve of maximal slope converges to a single critical point of the objective as the training time tends to infinity.
The proof combines compactness of the curve with a \L{}ojasiewicz--Simon inequality for the metric slope.
To establish the inequality, we lift the objective to a Hilbert space of random variables and use the analyticity of the lifted gradient in a stronger $L^\infty$ topology to overcome its lack of continuous differentiability in the Hilbert-space topology.
Our convergence result does not require global displacement convexity, a Polyak--\L{}ojasiewicz-type condition, or initialization near a minimizer; the objective may remain genuinely nonconvex.
\end{abstract}

\maketitle


\section{Introduction}\label{sec:Intro}
One of the major challenges in the mathematical theory of deep learning is analyzing the training dynamics.
Training dynamics refer to the evolution of parameters in deep neural networks (DNNs) during gradient-based optimization.
In continuous time, this evolution is idealized as a gradient flow of an objective function.
In a large-scale limit of neural networks, the parameters are represented by probability measures, and the training dynamics can be formulated as a Wasserstein-type gradient flow \cite{pmlr-v99-mei19a,Fernández-Real2022,E2020}.
For shallow two-layer mean-field neural networks (NNs), convergence of the flow to global minimizers has been proven under certain structural and initialization assumptions \cite{NEURIPS2018_a1afc58c,bach2021gradient}.
In contrast, existing results for representative deep architectures such as ResNets \cite{He_2016_CVPR} and for neural ODEs, which arise as their continuous-depth limits \cite{ChenRBD18,Haber17,E2020}, have largely focused on the optimality of a long-time limit~\cite{ding21on,ding22overparam}.
These works show that \emph{if the gradient flow converges}, then its limit is a global minimizer with zero loss.

This paper addresses the central question of whether the training trajectory of a mean-field neural ODE converges as training time tends to infinity.
Several convergence results are known under additional assumptions.
Jabir, \v{S}i\v{s}ka, and Szpruch \cite{jabir2021meanfieldneural} study entropy-regularized mean-field dynamics and obtain exponential convergence when the regularization is sufficiently strong.
Barboni, Peyr\'e, and Vialard \cite{Barboni25} prove convergence under a local Polyak--\L{}ojasiewicz condition for initial data with sufficiently small objective values.
In an optimal-control setting, Scagliotti \cite{Scagliotti2023} studies a parameterized ODE whose vector field is linear in the parameter and proves convergence of the associated gradient flow.
These results, however, do not directly apply to the training of neural ODEs with a fixed regularization, nonlinear network parameterizations, or initial data that need not lie in a prescribed neighborhood of a minimizer.
The analytical challenge lies in proving convergence to a single critical point of a nonconvex Wasserstein gradient flow without global displacement convexity or a Polyak--\L{}ojasiewicz inequality.

In this paper, we study an $L^2$-regularized training objective and prove long-time convergence of its Wasserstein-type gradient flow to a critical point in this nonconvex setting.
Under analytic and growth assumptions on the network vector field and the loss, together with a parameter-Hessian condition, every curve of maximal slope for the objective, starting from sufficiently regular initial data, converges in the $L^2$-in-depth Wasserstein space to a critical point (\cref{thm:global_conv}); the theorem establishes convergence of the training trajectory itself and does not assert that the limiting critical point is a global minimizer.
The proof combines compactness of the orbit, obtained from a contraction estimate for the parameter characteristics, with a \L{}ojasiewicz--Simon inequality for the metric slope.
To establish the inequality, we lift the objective to a Hilbert space of random variables; the analysis is nonstandard because the lifted gradient is not continuously differentiable in this Hilbert space.

\subsection{Model and training dynamics}\label{subsec:intro_model}
We now introduce the continuous-depth mean-field model precisely.
Let $\X=\R^d$ and let $\Y\subset\R^d$ be compact.
An input--label pair $(x,y)\in\XtimesY$ is sampled from a compactly supported data distribution $\mu_0\in\Pc(\XtimesY)$.
At each depth $t\in I\coloneqq[0,1]$, the network parameters are described by a probability measure $\eta_t\in\Ptwo(\R^m)$.
Here, $\Pc(S)$ and $\Ptwo(S)$ denote the sets of Borel probability measures on a metric space $S$ that are compactly supported and that have finite second moment, respectively.
Thus the trainable object is a layerwise parameter law
\(
    \eta=(\eta_t)_{t\in I}\in\Ltwotheta.
\)
Given a continuous vector field $v\colon\R^d\times\R^m\to\R^d$, we define the averaged vector field by
\begin{align}
    v_{\eta}(x)
    &\coloneqq
    \Expect_{\varTheta\sim\eta}\qty[v_\varTheta(x)]
    =\int_{\R^m}v(x,\theta)\dd{\eta(\theta)},
    &
    v_\theta(x)&\coloneqq v(x,\theta).
    \label{eq:avg_field}
\end{align}
The feature, or state, associated with an initial input $x\in\R^d$ is propagated through depth by the flow map $X_t^\eta\coloneqq X_{0,t}^\eta$, defined through the ODE starting from $s\in I$:
\begin{equation}
    X_{s,s}^\eta(x)=x,\qquad
    \dv{t}X_{s,t}^\eta(x)=v_{\eta_t}(X_{s,t}^\eta(x))
    \label{eq:forward_flow}
\end{equation}
for $t\in(s,1)$.
An explicit Euler discretization of \eqref{eq:forward_flow} in the depth variable $t$ gives a residual network \cite{He_2016_CVPR,Haber17,ChenRBD18}.
When $\eta_t$ is empirical, say
\(
    \eta_t=\frac1N\sum_{i=1}^N\delta_{\theta_i(t)},
\)
then
\(
    v_{\eta_t}(x)=\frac1N\sum_{i=1}^N v(x,\theta_i(t))
\)
is a residual layer of width $N$; general parameter laws $\eta_t$ thus describe the infinite-width limit.

The induced law of the feature--label pair at depth $t$ is
\(
    \mu_t^\eta\coloneqq (X_t^\eta\times\Id_\Y)_\#\mu_0,
\)
where $T_\#\mu$ denotes the push-forward of a measure $\mu$ by a measurable map $T$, and $\Id_\Y$ is the identity map on $\Y$.
Equivalently, $\mu^\eta_\bullet$ solves the continuity equation
\begin{equation}
\left\{
    \begin{aligned}
        \partial_t\mu_t^\eta+
        \Div_x\qty(\mu_t^\eta(x,y)v_{\eta_t}(x))&=0,\\
        \mu_0^\eta&=\mu_0
    \end{aligned}
\right.
\label{eq:ODE2}
\end{equation}
for $(x,y)\in\XtimesY$ and $t\in(0,1)$.
This is the continuous DNN considered in this paper, in the same measure-valued spirit as the continuity-equation formulation of \cite{bonnet2022measure}; following \cite{isobe2023variational}, we regard $v$ as a neural network (NN).

Given a loss function $\ell\colon\XtimesY\to\R$, the terminal loss and the objective are
\begin{align}
    L(\eta)
    &\coloneqq
    \int_{\XtimesY}\ell(X_1^\eta(x),y)\dd{\mu_0(x,y)}
    =\int_{\XtimesY}\ell\dd{\mu_1^\eta},
    \label{eq:loss_term_L}\\
    J(\eta)
    &\coloneqq
    L(\eta)+\frac{\epsilon}{2}
    \int_0^1\int_{\R^m}\abs{\theta}^2\dd{\eta_t(\theta)}\dd{t},
    \label{eq:J}
\end{align}
with $\epsilon>0$, where $\abs{\bullet}$ denotes the Euclidean norm.
The first term is the population or empirical risk, i.e., the average of the loss over the data distribution, and the second term is a fixed $L^2$-regularization.
The regularization is the mean-field analogue of weight decay and confines the energy sublevels to bounded second-moment sets.
A standard choice is the squared loss \(\ell(z,y)=\nicefrac{1}{2}\abs{z-y}^2\), for which \(L(\eta)\) is the mean squared discrepancy between the terminal network output \(X_1^\eta(x)\) and the label \(y\).
We study the minimization of $J$ over $\Ltwotheta$.
The forward equation belongs to the family of mean-field continuous ResNets studied in \cite{E2020,lu20b,ding21on,ding22overparam}; the role of the fixed $L^2$-regularization is discussed in \cref{subsec:related}.

Beyond minimizers of $J$, our main interest is the long-time behavior of the training dynamics.
The metric space for the training dynamics is the $L^2$-in-depth Wasserstein space $\metricsp\coloneqq\Ltwotheta$, endowed with
\(
    W_2(\eta,\zeta)_{L^2(I)}
    \coloneqq
    (\int_0^1 W_2^2(\eta_t,\zeta_t)\dd{t})^{\nicefrac{1}{2}},
\)
where $W_2$ denotes the $2$-Wasserstein distance.
Formally, the gradient flow of $J$ is written as
\begin{equation}
    \dv{\eta}{\tau}\qty(\tau)=-\operatorname{grad}J(\eta(\tau)),
    \label{eq:intro_gradient_flow_formal}
\end{equation}
where $\tau\ge0$ is the training time.
Since $\metricsp$ is a metric space rather than a Hilbert space, we use the metric formulation by curves of maximal slope in the sense of \cite{AGS}.
A training trajectory is denoted by $\eta(\bullet)$, so that $\eta(\tau)=(\eta_t(\tau))_{t\in I}$ with the depth variable $t\in I$, and its initial point in training time by $\eta_0\coloneqq\eta(0)$.
For a functional $E$ on $\metricsp$, $\abs{\partial E}$ denotes its metric slope, whose precise definition is recalled in \cref{subsec:grad_flow}; in particular, a critical point of $J$ means a point $\overline{\eta}$ with $\abs{\partial J}(\overline{\eta})=0$.

We thus consider a curve $\eta(\bullet)\colon[0,+\infty)\to\metricsp$ satisfying the formal gradient-flow equation \eqref{eq:intro_gradient_flow_formal} in the sense of curves of maximal slope for $J$ with respect to the metric slope $\abs{\partial J}$, and study whether $\eta(\tau)$ converges in $\metricsp$ as $\tau\to+\infty$.

\subsection{Main results and assumptions}\label{subsec:intro_main_results}
Our main result asserts the full convergence of the training dynamics: the whole gradient-flow trajectory converges to a single critical point of $J$, not merely along a subsequence of training times.
The assumptions are stated below, with the parameter-Hessian condition formulated precisely in \cref{assump:curvature}; the main theorem reads as follows.

\begin{theorem}[Long-time convergence to a critical point]\label{thm:global_conv}
Let the initial datum $\eta_0$ belong to $\Ltwotheta$, and let $\eta(\bullet)\colon[0,+\infty)\to\Ltwotheta$ be the curve of maximal slope for $J$ starting from $\eta_0$.
Suppose that \cref{assump:bound_dNN,assump:initial_data} hold and that \cref{assump:curvature} holds with $E=J(\eta_0)$.
Then, there exists a critical point $\eta^\ast\in\Ltwotheta$ of $J$ such that
\(
\eta(\tau)\to\eta^\ast\text{ in }\Ltwotheta
\)
as $\tau\to+\infty$.
\end{theorem}
\begin{remark}
    The convergence rate is algebraic when $\alpha<\nicefrac{1}{2}$ and exponential when $\alpha=\nicefrac{1}{2}$, where $\alpha$ is the \L{}ojasiewicz exponent in \eqref{eq:grad_ineq_J_rough}; see~\cite[Theorem 3.27]{Hauer19}.
\end{remark}
The theorem does not require displacement convexity, a Polyak--\L{}ojasiewicz-type nondegeneracy condition, or initialization near a minimizer.

The key estimate behind \cref{thm:global_conv} is a \L{}ojasiewicz--Simon gradient inequality for the metric slope of $J$; this class of inequalities goes back to \cite{Lojasiewicz71,Lojasiewicz99,Simon83}.
In the notation of \cref{thm:grad_ineq}, for each critical point $\overline{\eta}$ there exist $C>0$, $r>0$, and $\alpha\in(0,1/2]$ such that
\begin{equation}
    \abs{J(\zeta)-J(\overline{\eta})}^{1-\alpha}
    \le C\abs{\partial J}(\zeta)
    \label{eq:grad_ineq_J_rough}
\end{equation}
whenever $W_2(\zeta,\overline{\eta})_{L^2(I)}<r$.
The precise version is proved in \cref{sec:grad_ineq}.

We also establish the two basic existence statements needed for the variational problem and its training dynamics: the existence of a minimizer of $J$ in \cref{thm:ex_minima}, and the well-posedness of the gradient flow in \cref{thm:wellposedness_grad}.

We now state the assumptions used in \cref{thm:global_conv}.
For the gradient-flow analysis and for the \L{}ojasiewicz--Simon inequality, we use the following analyticity and derivative-boundedness assumption.

\begin{assumption}[Analyticity and growth bounds]\label{assump:bound_dNN}
The loss $\ell\colon\XtimesY\to\R$ is continuous and satisfies \(\abs{\ell(x,y)}\le A+B\abs{x}^2\) for some $A$, $B>0$, and $L(\eta)\ge0$ for every $\eta\in\Ltwotheta$.
The vector field $v\colon\R^d\times\R^m\to\R^d$ is real-analytic.
Moreover, for every compact subset $K\subset\R^d$, there exists a complex neighborhood $U\subset\C^d$ of $K$ such that $\ell$ extends to a continuous function on $U\times\Y$ that is holomorphic in its first variable.
In addition, there exist constants $C_0$, $C_x>0$, $C_{\theta\theta}\ge0$, and $p\in[0,2)$ such that
\begin{subequations}\label{eq:bound_dNN}
    \begin{align}
    \norm{D_\theta^{j}v(x,\theta)}_{\opn} &{}\le C_0(1+\abs{\theta}^{(p-j)_+})(1+\abs{x}) && (j=0,1),\label{eq:Cth}\\
    \norm{D_x^{i}D_\theta^{j}v(x,\theta)}_{\opn} &{}\le C_x(1+\abs{\theta}^{2-j})(1+\abs{x})^{j} && (i\ge1,\ i+j\le2),\label{eq:Cx}\\
    \norm{D_\theta^{2}v(x,\theta)}_{\opn} &{}\le C_{\theta\theta}(1+\abs{x}^2),\label{eq:Cthth}
    \end{align}
\end{subequations}
where $D_\theta^{0}v\coloneqq v$, $(x)_+\coloneqq\max\{x,0\}$ for $x\in\R$, $\abs{\theta}^0\coloneqq1$, and $D_\theta D_x v=D_xD_\theta v$; $\norm{\bullet}_{\opn}$ denotes the operator norm of a multilinear map.
\end{assumption}

The subquadratic condition $p<2$ allows the second-moment regularization to control the growth of the state equation; it is used in the compactness argument for minimizers and in the uniform support estimate along the flow.
Analyticity is used only in the \L{}ojasiewicz--Simon argument, whereas the finite-order derivative bounds are used for the slope formula and the well-posedness of the gradient flow.
To obtain compactness of the orbit, we impose regularity of the initial curve in $t$.

\begin{assumption}[Depth regularity of the initial parameters]\label{assump:initial_data}
There exists a bounded set $D\subset\R^m$ such that $\eta_0\in W^{1,2}(I;\Ptwo(D))$, the Sobolev space in \eqref{def:Sobolev}.
\end{assumption}

This assumption is used only to establish the relative compactness of the training orbit in \cref{prop:Orbit_in_Sobolev}; comparable depth-regularity assumptions appear in related analyses of continuous-depth models~\cite{Scagliotti2023}.
For $R>0$, set
\[
    \Dcal_R
    \coloneqq
    \Set{\eta\in\Ltwotheta|\int_0^1\int_{\R^m}\abs{\theta}^2\dd{\eta_t(\theta)}\dd{t}\le R^2}.
\]
For $\eta\in\Ltwotheta$, define the costate by
\begin{equation}
    \varphi_t^\eta(x,y)
    \coloneqq
    \ell\qty(X_{t,1}^\eta(x),y).
    \label{eq:intro_costate}
\end{equation}
Its role in the gradient representation is established in \cref{prop:grad}.

\begin{assumption}[Parameter-Hessian condition]\label{assump:curvature}
Let $E>0$ and set $R_E\coloneqq\sqrt{2E/\epsilon}+1$.
Define
\begin{equation}\label{eq:coercivity_margin}
    a_R\coloneqq\epsilon-C_{\theta\theta}\adjustlimits\sup_{\eta\in\Dcal_R}\sup_{t\in I}\int_{\XtimesY}\abs{\nabla_x\varphi_t^{\eta}(x,y)}\qty(1+\abs{x}^2)\dd{\mu_t^{\eta}(x,y)}.
\end{equation}
We assume that \(a_{R_E}>0\).
\end{assumption}
\Cref{rem:strong_convexity} below shows that, under this condition, the map $\R^m\ni\theta\mapsto\fdv{J}{\eta}\qty[\eta](t,\theta)$ is strongly convex for each fixed $\eta$, which is unrelated to displacement convexity of $J$.
It is automatic when $v$ is affine in $\theta$.
\Cref{sec:examples} gives a sufficient condition, a nonconvex example satisfying the assumption, and a nonconvergent example when it fails.

\subsection{Proof strategy: a \texorpdfstring{\L{}ojasiewicz--Simon}{Lojasiewicz--Simon} inequality}\label{subsec:intro_strategy}
Compactness of the orbit gives an $\omega$-limit point, and the \L{}ojasiewicz--Simon inequality gives convergence to that point.
The compactness follows from the contraction of the parameter characteristics at the rate of the coercivity margin, which balances the source coming from nearby layers and yields a depth-Lipschitz estimate uniform in the training time.
We now explain the proof of the gradient inequality \eqref{eq:grad_ineq_J_rough}.
The first step lifts the objective from the Wasserstein space $\metricsp$ to a Hilbert space of random variables and identifies the lifted gradient using the L-derivative calculus from mean-field optimal control \cite{Carmona2018}.
The second step establishes a \L{}ojasiewicz--Simon inequality for the lifted functional.
The classical Hilbert-space argument assumes an analytic energy with a sufficiently regular gradient and a Fredholm Hessian at the critical point; here the lifted gradient is not $C^1$ in the Hilbert topology, although its linewise second variation admits a coercive-plus-compact decomposition, and analyticity is available on the smaller space of bounded parameter fields.
These provide the key nonstandard ingredients for the nonsmooth reduction of Feireisl, Issard-Roch, and Petzeltov\'{a} \cite{FEIREISL20041}.
A finite-rank correction then reduces the inequality to the finite-dimensional \L{}ojasiewicz inequality.
Finally, the inequality is transferred back to the metric slope on $\metricsp$ through a lifting and a localization of the objective, which also underlies the well-posedness of the flow.
This is the point at which the analyticity of $v$ and $\ell$ is used, and the resulting argument replaces the usual nondegenerate-Hessian/Polyak--\L{}ojasiewicz route by a local mechanism adapted to the Wasserstein geometry.

\subsection{Related work}\label{subsec:related}

\paragraph{Regularization in continuous-depth models}
Related continuous-depth formulations use different regularizations: \cite{ding22overparam} lets the $L^2$-penalty vanish during training, whereas the fixed $L^2$-regularizations in \cite{bonnet2022measure,isobe2023variational} act on deterministic controls.
In contrast, \eqref{eq:J} keeps a fixed $L^2$-regularization on the layerwise parameter laws.

\paragraph{Convergence and fibered Wasserstein geometry}
Nonconvex gradient flows need not converge \cite{CHERIDITO2021101540,LyuL20}.
Other convergence results rely on additional structural or initialization assumptions \cite{Chizat2022,Korolev22}.
The $L^2$-in-depth Wasserstein metric used here is an instance of the fibered Wasserstein distance developed by Peszek and Poyato \cite{Peszek2023}.
In the neural-network setting, Barboni, Peyr\'{e}, and Vialard \cite{Barboni25} use the same geometry, under the name conditional optimal transport, for infinitely deep and arbitrarily wide ResNets.
Because the present objective contains a fixed $L^2$-regularization, existing global-optimality results for unregularized risks do not directly identify the limiting critical point obtained here; the case $\epsilon=0$ is outside the scope of our argument.

\paragraph{\L{}ojasiewicz--Simon inequalities}
\L{}ojasiewicz-type inequalities in a Wasserstein geometry arise for McKean--Vlasov dynamics \cite{choi2026wassersteinlojasiewiczinequalitiesasymptoticsmckeanvlasov} and in the Wasserstein gradient flow of a Coulomb discrepancy \cite{BoufadeneVialard25}; these concern different (free-energy) functionals and are related in spirit rather than directly comparable.
For real-analytic functions on $\R^d$, \L{}ojasiewicz's gradient inequality yields convergence \cite{Lojasiewicz71,Lojasiewicz99}; Simon extended it to infinite dimensions using the Fredholm property of the Hessian \cite{Simon83}, and \cite{AKAGI20192663,Feireisl2000} dispense with analyticity through elliptic-operator structure.
Sufficient conditions on Hilbert spaces \cite{CHILL2003572,HARAUX20112826,RUPP2020108708}
and abstract two-space results \cite{zbMATH07244587} are available.
These theorems do not apply directly to the present functional because of the regularity mismatch described in \cref{subsec:intro_strategy}; the nonsmooth reduction in \cref{subsec:feireisl} is designed to bridge it.
Abstract metric-space Kurdyka--\L{}ojasiewicz theory \cite{Hauer19} yields convergence once a slope inequality is available, but it does not provide directly verifiable analytic hypotheses for the present functional, and \cite{Blanchet18} requires the flow to start near a critical point.
We adapt the nonsmooth \L{}ojasiewicz--Simon reduction of Feireisl, Issard-Roch, and Petzeltov\'{a} \cite{FEIREISL20041} to the present setting.

\subsection{Organization of the paper}\label{subsec:organize}
\Cref{sec:prelim,sec:grad_repr,sec:orbit} develop the metric framework, the gradient representation, and the compactness estimates for the training dynamics.
\Cref{sec:grad_ineq} proves the \L{}ojasiewicz--Simon inequality and derives \cref{thm:global_conv}, while \cref{sec:examples} discusses examples and limitations.
\Cref{sec:calc,sec:exist_wellposed,sec:technical_proofs} collect the Wasserstein calculus, the existence and well-posedness arguments, and the technical proofs of semiconvexity and orbit compactness.

\section{Metric framework and well-posedness}\label{sec:prelim}
 We fix the notation and review the essential concepts in gradient flows, probability measures, and optimal control.

\paragraph{Notation}
For a Banach space $X$, $\norm{\bullet}_X$ denotes the norm of $X$ and $\Bcal(X)$ denotes the bounded linear operators on $X$.
For a function(al) $F\colon X\to Y$ from a Banach space $X$ to another Banach space $Y$, $DF(x)[\bullet]\colon X\to Y$ and $\Hessian F(x)[\bullet,\bullet]\colon X\times X\to Y$ denote the Fr\'{e}chet derivative and Hessian, respectively, of $F$ at $x\in X$.
For a random variable $\varTheta$, we will denote by $\Expect[\varTheta]$ the expected value of $\varTheta$, and by $\Law\varTheta$ the law (distribution) of $\varTheta$.
For a Banach space $X$, $x\in X$ and $r>0$, $B_X(x,r)$ denotes the open ball of radius $r$ centered at $x$.
Throughout the paper, the subscript $t$ is used to represent the ``time in a continuous DNN \eqref{eq:ODE2},'' which means a layer in DNNs, with $t\in I$.
The symbol $\Leb$ denotes the Lebesgue measure on $I$.
In addition, $\tau\geq0$ represents the time in the gradient flow \eqref{eq:intro_gradient_flow_formal}.
The symbol $C>0$ denotes a generic constant independent of $\tau\ge0$.
For $R>0$, $C_R>0$ denotes a generic constant depending only on $R$, the constants in \cref{assump:bound_dNN}, $\mu_0$, $\ell$, and $\epsilon$.

\subsection{Gradient flow on metric spaces and metric-valued function spaces}\label{subsec:grad_flow}
First, we review the notations in \cite[Chapter 1]{AGS}.
Let $(X,d)$ be a separable complete metric space, $q\in[1,+\infty]$, and $T\in(0,+\infty]$.
For an absolutely continuous curve \(u\in\AC^q(0,T;X)\) and a proper lower semicontinuous functional \(E\colon X\to(-\infty,+\infty]\), we write \(\abs{\dot u}(t)\coloneqq\lim_{s\to t}\nicefrac{d(u_s,u_t)}{\abs{s-t}}\) and \(\abs{\partial E}(u)\coloneqq\limsup_{v\to u}\nicefrac{(E(u)-E(v))_+}{d(u,v)}\) for the metric derivative and the local slope, respectively \cite[Definitions 1.1.2 and 1.2.4]{AGS}.
A point \(u\) is critical if \(\abs{\partial E}(u)=0\).

Subsequently, we overview the metric-valued Lebesgue space, the details of which are partially presented in \cite[Section 5.4]{AGS}, \cite[Section 2.3]{Lisini2007}, and \cite[Definition 3.1]{LAVENANT2019688}. 
Let
\(
    \Lcal^2(I;X)\) be the set of square-integrable functions from $I$ to $X$
and set
\begin{equation}
    d(u^0,u^1)_{L^2(I)}\coloneqq\qty(\int_0^1d(u^0_t,u^1_t)^2\dd{t})^{\nicefrac{1}{2}},\label{eq:d_L2_notation}
\end{equation}
for $u^0$, $u^1\in\Lcal^2(I;X)$.
The Lebesgue space $L^2(I;X)$ is the quotient space of $\Lcal^2(I;X)$ with respect to equality almost everywhere, endowed with $d(\bullet,\bullet)_{L^2(I)}$.
In particular, with $X=\Ptwo(\R^m)$ and $d=W_2$, \eqref{eq:d_L2_notation} is the metric $W_2(\bullet,\bullet)_{L^2(I)}$ on $\metricsp$ in \cref{subsec:intro_model}.

Similar to the procedure stated in \cite[Subsection 2.4]{Lisini2007}, we define the Sobolev space $W^{1,2}(I;X)$ as follows:
\begin{equation}
    W^{1,2}(I;X)\coloneqq\Set{u\in L^2(I;X)|\sup_{h\in(0,1)}\int_0^{1-h}\qty(\frac{d(u(t+h),u(t))}{h})^2\dd{t}<+\infty}.\label{def:Sobolev}
\end{equation}
The following proposition provides different characterizations of $W^{1,2}(I;X)$:
\begin{proposition}\label{prop:Sobolev}
The space $W^{1,2}(I;X)$ coincides with $\AC^2(I;X)$.
Moreover, it coincides with
\(
   \Set{u\in L^2(I;X)|\Dir(u)\coloneqq\frac{1}{2}\int_0^1\abs{\dot{u}}^2\dd{t}<+\infty}.
\)
\end{proposition}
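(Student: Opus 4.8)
The plan is to prove the chain of identifications in the order $\AC^2(0,1;X)\subseteq W^{1,2}(0,1;X)$ and then $W^{1,2}(0,1;X)\subseteq\AC^2(0,1;X)$, and to observe that the third set costs nothing extra: for $u\in L^2(0,1;X)$ the symbol $\abs{\dot u}$ appearing in $\operatorname{Dir}(u)$ is the metric derivative of \cite[Theorem 1.1.2]{AGS} when $u$ (up to the a.e.\ identification) belongs to $\AC^2_{\mathrm{loc}}(0,1;X)$, and $\operatorname{Dir}(u)=+\infty$ otherwise; since $\operatorname{Dir}(u)<\infty$ then means precisely $\abs{\dot u}\in L^2(0,1)$, which is equivalent to $u\in\AC^2(0,1;X)$, the set $\Set{u\in L^2(0,1;X)|\operatorname{Dir}(u)<\infty}$ is literally $\AC^2(0,1;X)$. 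So everything reduces to the two inclusions above, together with keeping track of the relation between $\operatorname{Dir}$ and the difference-quotient supremum.

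For $\AC^2(0,1;X)\subseteq W^{1,2}(0,1;X)$ I would argue directly. Given $u\in\AC^2(0,1;X)$, \cite[Theorem 1.1.2]{AGS} gives $d(u(s),u(t))\le\int_s^t\abs{\dot u}(r)\dd{r}$ for $0\le s\le t\le1$; by Cauchy--Schwarz, $\qty(\frac{d(u(t+h),u(t))}{h})^2\le\frac1h\int_t^{t+h}\abs{\dot u}(r)^2\dd{r}$ for $t\in(0,1-h)$, and integrating in $t$ and applying Fubini bounds $\int_0^{1-h}\qty(\frac{d(u(t+h),u(t))}{h})^2\dd{t}$ by $\int_0^1\abs{\dot u}^2\dd{r}=2\operatorname{Dir}(u)$, uniformly in $h\in(0,1)$. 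Hence $u\in W^{1,2}(0,1;X)$ and the defining supremum is $\le 2\operatorname{Dir}(u)$.

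The substantive direction is $W^{1,2}(0,1;X)\subseteq\AC^2(0,1;X)$. Fix $u\in W^{1,2}(0,1;X)$ and set $S\coloneqq\sup_{h\in(0,1)}\int_0^{1-h}\qty(\frac{d(u(t+h),u(t))}{h})^2\dd{t}<\infty$. The key step I would carry out is to produce $g\in L^2(0,1)$ with $\norm{g}_{L^2}^2\le S$ such that $d(u(s),u(t))\le\int_s^t g(r)\dd{r}$ for a.e.\ $0<s<t<1$; granting this, modifying $u$ on a null set makes it absolutely continuous, and \cite[Theorem 1.1.2]{AGS} (existence and minimality of the metric derivative) yields $u\in\AC^2(0,1;X)$ with $\abs{\dot u}\le g$ a.e., whence $\operatorname{Dir}(u)\le\tfrac12\norm{g}_{L^2}^2\le\tfrac12 S<\infty$; this closes the chain (and in fact forces equalities, $S=\int_0^1\abs{\dot u}^2$). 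To construct $g$ I would exploit separability of $X$: pick a countable dense set $\{x_k\}_{k\in\N}$ and put $\phi_k\coloneqq d(u(\cdot),x_k)$. Then $\phi_k\in L^2(0,1)$ (by the integrability built into $\mathcal{L}^2(0,1;X)$) and $\abs{\phi_k(t+h)-\phi_k(t)}\le d(u(t+h),u(t))$, so the classical one-dimensional difference-quotient characterization of $W^{1,2}(0,1;\R)$ gives $\phi_k\in W^{1,2}(0,1;\R)$ with $\int_0^1\abs{\phi_k'}^2\le S$. Passing to absolutely continuous representatives and removing a common null set $N$, density of $\{x_k\}$ gives $d(u(s),u(t))=\sup_k\abs{\phi_k(s)-\phi_k(t)}\le\int_s^t\qty(\sup_k\abs{\phi_k'})(r)\dd{r}$ for $s,t\in(0,1)\setminus N$, so $g\coloneqq\sup_k\abs{\phi_k'}$ is the required function once $g\in L^2$ is checked; this last point I would obtain by dominating $\abs{\phi_k'}$ a.e.\ by the measurable function $w(r)\coloneqq\liminf_{h\downarrow0}\frac{d(u(r+h),u(r))}{h}$, which satisfies $\int_0^1 w^2\le S$ by Fatou's lemma along a sequence $h_n\downarrow0$.

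The routine ingredients are the first inclusion (Cauchy--Schwarz and Fubini) and the two appeals to \cite[Theorem 1.1.2]{AGS}. I expect the main obstacle to be the construction of the single $L^2$ speed bound $g$ in the converse inclusion: one must interchange the countable supremum over $\{x_k\}$ with differentiation, bound all the $\abs{\phi_k'}$ simultaneously by one $L^2$ function, and do the null-set bookkeeping so that $d(u(s),u(t))\le\int_s^t g$ holds on a single co-null set of pairs $(s,t)$ before passing to a continuous representative. This is exactly the metric-valued counterpart of the classical Sobolev difference-quotient lemma; an alternative to the dense-family argument is to extract a weak $L^2$-limit $g$ of the difference quotients $d(u(\cdot+h_n),u(\cdot))/h_n$ and pass to the limit in the triangle inequality via a mollification/Riemann-sum argument, in the spirit of \cite[Subsection 2.4]{Lisini2007}.
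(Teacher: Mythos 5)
Your proof is correct, and it differs from the paper's in kind rather than in substance: the paper simply delegates both identifications to references (Lisini~[Lemma~1] for $W^{1,2}=\AC^2$ and Lavenant~[Proposition~3.8] for the Dirichlet energy), whereas you reconstruct the argument behind the first citation from scratch. Your route is the natural one and is, as far as I can tell, essentially the same mechanism that Lisini's lemma uses: reduce to the countable family of real-valued functions $\phi_k=d(u(\cdot),x_k)$, invoke the classical one-dimensional difference-quotient characterization of $W^{1,2}(0,1;\R)$, and recombine using $d(u(s),u(t))=\sup_k\abs{\phi_k(s)-\phi_k(t)}$. The easy inclusion via Cauchy--Schwarz and Fubini is standard, and the observation that the third set coincides with $\AC^2(0,1;X)$ purely by the definition of $\operatorname{Dir}$ (with the convention $\operatorname{Dir}(u)=+\infty$ off $\AC_{\mathrm{loc}}$) correctly isolates where the mathematical content sits.

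Two technical points you should tighten in the converse inclusion, though neither is a gap in the underlying idea. First, the function $w(r)=\liminf_{h\downarrow0}d(u(r+h),u(r))/h$ as you write it is a liminf over an uncountable parameter set, so its measurability is not immediate; you should instead fix a single sequence, say $h_n=1/n$, and take $w(r)=\liminf_n d(u(r+h_n),u(r))/h_n$. This is measurable, Fatou applied to the (indicator-truncated) difference quotients still gives $\int_0^1 w^2\le S$, and the domination $\abs{\phi_k'(r)}\le w(r)$ a.e.\ still holds because the a.e.\ derivative of the AC representative $\tilde\phi_k$ is the a.e.\ limit of the difference quotients along any fixed sequence, while $\abs{\tilde\phi_k(r+h_n)-\tilde\phi_k(r)}\le d(u(r+h_n),u(r))$ for a.e.\ $r$ (for a.e.\ $r$ one has $r+h_n\notin\bigcup_k N_k$ for every $n$, since $\sum_n\abs{N_k-h_n}=0$). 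Second, when you pass from the estimate $d(u(s),u(t))\le\int_s^t g$, valid for $s,t$ in a co-null set $A$, to an absolutely continuous representative on all of $[0,1]$, you are using completeness of $(X,d)$ to extend $u|_A$ by uniform continuity (Cauchy sequences converge); this is covered by the standing assumption that $X$ is separable and complete, but it is worth flagging. With those adjustments, the argument is airtight, and your alternative proposal (weak $L^2$-limit of the difference quotients, then a mollification argument) would sidestep the null-set bookkeeping entirely at the price of a compactness extraction.
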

    The former follows from \cite[Lemma 2.5]{Lisini2007} and the latter from \cite[Proposition 3.8]{LAVENANT2019688}.

\subsection{Probability measures}\label{subsec:Wasserstein}
From now on, $\Pcal(X)$ denotes the set of regular and Borel probability measures on $X$, endowed with the narrow topology defined in \cite[Section 5.1]{AGS}.
By the Prokhorov theorem in \cite[Theorem 5.1.3]{AGS}, relative compactness with respect to the narrow topology is equivalent to tightness.
For $\mu$, $\nu\in\Pcal(X)$, $\supp\mu\subset X$ denotes the support of $\mu$, and $\Gamma(\mu,\nu)\subset\Pcal(X^2)$ denotes the transport plans between $\mu$ and $\nu$, according to \cite[Section 5.2]{AGS}.

We use the standard notation \(\Pcal_q(X)\), \(W_q\), and \(\OptPlan(\mu,\nu)\) for probability measures with finite \(q\)-moment, the \(q\)-Wasserstein distance, and optimal transport plans, respectively \cite[\S 5.1, 5.2, and 7.1]{AGS}.
We set \(m_q(\mu)\coloneqq\int_X\abs{x}^q\dd{\mu(x)}\) on a Euclidean space; thus \(m_q(\mu)=W_q^q(\mu,\delta_0)\).
The continuity equation in \eqref{eq:ODE2} is considered according to \cite[Definition 3.8]{isobe2023variational} or \cite[Section 8.1]{AGS}.
Based on \cite[Lemma 3.9]{isobe2023variational} or \cite[Proposition 8.1.8]{AGS}, the solution of \eqref{eq:ODE2} is expressed as $\mu^\eta_t=(X_{0,t}^\eta\times\Id_\Y)_\#\mu_0$, where $X^\eta$ is the flow map in \eqref{eq:forward_flow}.
The Gronwall inequality and \eqref{eq:Cth} bound the solution of \eqref{eq:forward_flow} uniformly in $t\in I$ as
\begin{equation}
    \abs{X_t^{\eta}(x)}\le \qty(\abs{x}+C_0\qty(1+\int_0^1m_p(\eta_s)\dd{s}))\exp\qty(C_0\qty(1+\int_0^1m_p(\eta_s)\dd{s})), \label{eq:Gronwall_bound_X}
\end{equation}
where $C_0$ and $p$ are the constants in \eqref{eq:Cth}.
The following lemma follows from the Gronwall bound \eqref{eq:Gronwall_bound_X} applied to $\mu^\eta_t=(X_t^\eta\times\Id_\Y)_\#\mu_0$, since $\supp\mu_0$ is compact and $\int_0^1m_p(\eta_t)\dd{t}\le1+R^2$ for $\eta\in\Dcal_R$ and $p<2$.
\begin{lemma}[Boundedness of supports]\label{lem:supp_bound}
For every $R>0$, the supports $\supp\mu^\eta_t$ of the solution $\mu^\eta$ of \eqref{eq:ODE2} are bounded uniformly in $t\in I$ and $\eta\in\Dcal_R$.
\end{lemma}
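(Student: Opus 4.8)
The plan is to reduce the claim to a uniform Gr\"onwall bound on the flow map generated by \eqref{eq:ODE_flow}. Recall from \cref{subsec:Wasserstein} that the distributional solution of \eqref{eq:ODE2} is represented as $\mu^\eta_t=\qty(X^\eta_{0,t}\times\Id_\Y)_\#\mu_0$, with $X^\eta_{0,t}$ solving \eqref{eq:ODE_flow}. Writing $A\subset\R^d$ and $B\subset\Y$ for the projections of the compact set $\supp\mu_0$ onto the two factors, we obtain $\supp\mu^\eta_t\subset X^\eta_{0,t}(A)\times B$; since $B$ is already bounded, it suffices to bound $\abs{X^\eta_{0,t}(x)}$ uniformly for $x\in A$, $t\in\qty[0,1]$, and $\eta\in D(J)$.

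First I would record the quantitative moment bound available on $D(J)$. As observed just before the statement, along the gradient flow in \cref{prob:gradient_flow} the map $\tau\mapsto J(\eta(\tau))$ is nonincreasing, so the second term of \eqref{eq:J} yields $\int_0^1\int_{\R^m}\abs{\theta}^2\dd{\eta_t(\theta)}\dd{t}\leq 2J(\eta(0))/\lambda\eqqcolon R^2$ for every relevant $\eta\in D(J)$. Setting $m(t)\coloneqq\int_{\R^m}\abs{\theta}^2\dd{\eta_t(\theta)}$, Jensen's inequality (using $p\in[0,2)$, so that $\theta\mapsto\abs{\theta}^p$ is dominated by a power of $\abs{\theta}^2$) gives $\int_{\R^m}\abs{\theta}^p\dd{\eta_t}\leq m(t)^{p/2}$, and then H\"older in $t$ gives $\int_0^1 m(t)^{p/2}\dd{t}\leq\qty(\int_0^1 m(t)\dd{t})^{p/2}\leq R^p$.

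Finally, by \cref{assump:NN}, $\abs{v_{\eta_s}(x)}\leq C(1+\abs{x})\int_{\R^m}\abs{\theta}^p\dd{\eta_s}\leq C(1+\abs{x})\,m(s)^{p/2}$, so $g(s)\coloneqq\abs{X^\eta_{0,s}(x)}$ satisfies $g'(s)\leq C\qty(1+g(s))m(s)^{p/2}$ for a.a.~$s\in(0,1)$, with the coefficient $m(\cdot)^{p/2}$ in $L^1(0,1)$; the Carath\'eodory form of Gr\"onwall's inequality then both prevents blow-up (so $X^\eta_{0,\cdot}(x)$ is defined on all of $\qty[0,1]$, consistently with \cref{assump:NN}) and yields
\[
    1+\abs{X^\eta_{0,t}(x)}\leq\qty(1+\abs{x})\exp\qty(C\int_0^1 m(s)^{p/2}\dd{s})\leq\qty(1+\sup_{x'\in A}\abs{x'})\e^{CR^p},
\]
for all $x\in A$ and $t\in\qty[0,1]$. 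Hence $\supp\mu^\eta_t$ is contained in a fixed ball of $\R^d$ times $B$, uniformly in $t\in\qty[0,1]$ and $\eta\in D(J)$, which is the assertion. The argument is essentially that of \cite[Lemma 4.3]{isobe2023variational}, adapted to the mean-field vector field $v_{\eta_t}$; the only point requiring care, and not a genuine obstacle, is the passage from the qualitative ``$\eta\in D(J)$'' to the quantitative bound $R^2$, which is exactly the reduction of the effective domain to a bounded set noted before the lemma.
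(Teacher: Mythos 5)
Your proof is correct and takes essentially the same route that the paper invokes by citing \cite[Lemma~4.3]{isobe2023variational}: represent $\mu^\eta_t$ as a push-forward along the flow map, then use a Gr\"onwall estimate with the uniform second-moment bound supplied by the descent of $J$ along the flow. You simply fill in the details (the Jensen/H\"older reduction of the $p$-moment to an $L^1(0,1)$ coefficient and the Carath\'eodory-type Gr\"onwall inequality) that the paper leaves to the reference, so there is no substantive difference in approach.
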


\subsection{Differential formulas in optimal control theory}
In connection with the ODE \eqref{eq:forward_flow}, consider the following probability-theoretic formulation
\[
    X_{0}^{\Law\varTheta}(x)=x,\quad\dv{t}X_{t}^{\Law\varTheta}(x)=\Expect\qty[v\qty(X_{t}^{\Law\varTheta}(x),\varTheta_t)],
\]
where $\varTheta\in\varThetaSpace$ and $\Omega$ is a probability space introduced in \cref{subsec:Lcalculus}.
Note that $X^{\Law\varTheta}$ depends only on the laws of the random variables $\varTheta_t$. 
In \cref{sec:grad_ineq}, $X_1^{\Law\varTheta}$ is differentiated with respect to $\varTheta$. 
The optimal control theory in \cite{bressan2007} yields an explicit formula for the derivative.

\begin{lemma}[A version of {\cite[Theorem 3.2.6]{bressan2007}}]\label{lem:linearizedODE}
Under \cref{assump:bound_dNN}, we have
\begin{equation}
    \eval{\dv{\varepsilon}X_t^{\Law\qty(\varTheta+\varepsilon\varDelta)}(x)}_{\varepsilon=0}=\int_0^t\Expect\qty[M^{\Law\varTheta}(x;t,u)\qty(\mdif{\theta} v)(X_u^{\Law\varTheta}(x),\varTheta_u)\qty[\varDelta_u]]\dd{u},\label{eq:differential_formula}
\end{equation}
   for each $t\in I$ and $\varDelta\in\varThetaSpace$.
   Here, $M^{\Law\varTheta}(x;t_0,t_1)\in\R^{d\times d}$ is the fundamental solution of the linearized ODE $\dot{z}_t=\Expect[(\mdif{x}v)(X_t^{\Law\varTheta}(x),\varTheta_t)[z_t]]$, mapping data at $t_1$ to $t_0$.
\end{lemma}
\begin{proof}
    Note that from the Cauchy--Schwarz inequality, the integrand on the right-hand side of \eqref{eq:differential_formula} is integrable, and the expectation is differentiated under the integral sign by \eqref{eq:Cth} and dominated convergence.
    The rest of the proof is the same as that in \cite[Theorem 3.2.6]{bressan2007}.
\end{proof}
Note that the fundamental solution $M^\eta(x;t_0,t_1)$ associated with $\eta\colon I\to\Ptwo(\R^m)$ solves the linear ODE $\dot{z}_t=(\int(\mdif{x}v)(X_t^\eta(x),\theta)\dd{\eta_t(\theta)})z_t$, whose coefficient satisfies $\norm{\int(\mdif{x}v)(X_t^\eta(x),\theta)\dd{\eta_t(\theta)}}\le C_x(1+m_2(\eta_t))$ by \eqref{eq:Cx}.
Therefore, we have
\begin{equation}
    \norm{M^\eta(x;t_0,t_1)}_{\opn}\le \exp\qty(C_x\qty(1+\int_0^1 m_2(\eta_s)\dd{s})). \label{eq:Gronwall_bound_M}
\end{equation}
For a point $z$ at a layer $t$, the same notation $M^{\eta}(z;1,t)$ denotes the fundamental solution of the linearization along $X_{t,\bullet}^{\eta}(z)$, which obeys the same bound \eqref{eq:Gronwall_bound_M}.

The differential calculus on $\Ltwotheta$---the lifting to random variables, the L-derivative, and the Wasserstein derivative---is developed in \cref{sec:calc}; the consequences used in the main text are the explicit L-derivative of the objective and the minimal-selection representation of $\abs{\partial J}$, established in \cref{prop:grad,cor:slope}.

\subsection{Localized semiconvexity and well-posedness of the gradient flow}\label{subsec:wellposed}
Recall the moment ball $\Dcal_R$ of \cref{subsec:intro_main_results}.
Since \(\{J\le E\}\subset\Dcal_{\sqrt{2E/\epsilon}}\), every gradient-flow trajectory remains in a bounded moment set.
We establish semiconvexity of \(J\) on each \(\Dcal_R\); we therefore apply the gradient-flow theory to the localized functional \(J_R\coloneqq J+\iota_{\Dcal_R}\) and then transfer the resulting flow
back to \(J\).
The generalized geodesics and the corresponding notion of \(\lambda\)-convexity are recalled in \cref{def:generalized_convexity}.

\begin{lemma}[$\lambda$-convexity of $L$ along generalized geodesics]\label{lem:conv_J}
    For every $R>0$, there exists a number $\lambda_R\in\R$ such that the functional $L$ defined in \eqref{eq:loss_term_L} is $\lambda_R$-convex along every generalized geodesic defined in \cref{def:generalized_convexity} whose $\eta^1$ and $\eta^2$ lie in $\Dcal_R$.
\end{lemma}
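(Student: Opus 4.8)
The plan is to establish the $\lambda$-convexity of $L$ along generalized geodesics by verifying the sufficient condition recorded in \cref{rmk:another_convexity}: namely, that for each triple $\eta^0,\eta^1,\eta^2\in D(L)$, with the generalized geodesic $\eta^{1\to2}$ induced by a suitable $3$-plan $\boldsymbol{\eta}$, the map $\tau\mapsto L(\eta^{1\to2}_\tau)$ is differentiable with Lipschitz continuous derivative on $[0,1]$. First I would fix a $3$-plan $\boldsymbol{\eta}\in L^2(0,1;\Ptwo(\R^{3m}))$ with $\pi^{0,1}_\#\boldsymbol{\eta}\in\OptPlan(\eta^0,\eta^1)$ and $\pi^{0,2}_\#\boldsymbol{\eta}\in\OptPlan(\eta^0,\eta^2)$ (available by \cite[Corollary 5.22]{villani_oldnew} applied $t$-wise, together with a measurable selection), set $\eta^{1\to2}_\tau=(\pi^{1\to2}_\tau)_\#\boldsymbol{\eta}$ with $\pi^{1\to2}_\tau=(1-\tau)\pi^1+\tau\pi^2$, and lift to random variables: choose $(\varTheta^1,\varTheta^2)\in\varThetaSpace\times\varThetaSpace$ with joint law $\boldsymbol{\eta}$ in the last two coordinates, so that $\eta^{1\to2}_\tau=\Law\big((1-\tau)\varTheta^1+\tau\varTheta^2\big)$. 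Writing $\varTheta^\tau\coloneqq(1-\tau)\varTheta^1+\tau\varTheta^2$, we then have $L(\eta^{1\to2}_\tau)=\tildeE(\varTheta^\tau)$ for $\tildeE=\widetilde{L}$.

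The core computation is to differentiate $\tau\mapsto L(\eta^{1\to2}_\tau)$ twice and bound the second derivative uniformly in $\tau$. By \cref{prop:grad}, $L$ has linear functional derivative $\fdv{L}{\eta}[\eta](t,\theta)=\la\nabla\varphi^\eta_t,v_\theta\ra_{\mu^\eta_t}$ with $\varphi^\eta_t(x,y)=\ell(X^\eta_{1,t}(x),y)$, and hence (via \eqref{eq:Taylor_def}, or directly) a first derivative of the form
\begin{equation*}
\dv{\tau}L(\eta^{1\to2}_\tau)=\int_0^1\Expect\Big[\la\nabla\varphi^{\eta^{1\to2}_\tau}_t,\,\partial_\theta v\big(\,\cdot\,,\varTheta^\tau_t\big)[\varTheta^2_t-\varTheta^1_t]\ra_{\mu^{\eta^{1\to2}_\tau}_t}\Big]\dd t.
\end{equation*}
Differentiating once more in $\tau$ produces three kinds of terms: one where $\partial^2_\theta v$ acts on $(\varTheta^2_t-\varTheta^1_t)^{\otimes2}$, one where the flow map $X^{\eta^{1\to2}_\tau}$ (hence $\varphi$ and $\mu$) is differentiated — handled by \cref{lem:linearizedODE}, which gives the derivative of $X_t^{\Law\varTheta}$ in terms of the matrix fundamental solution $M$ and $\partial_\theta v$ — and one where $\partial_x$ of $\ell$ or the transport structure is differentiated. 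Each factor is controlled using \cref{assump:bound_dNN} (the bounds \eqref{eq:bound_dNN} on $\partial_x^i\partial_\theta^j v$ for $i\le2$, $j\le2$ and the analyticity of $\ell$), the uniform boundedness of the supports of $\mu^\eta_t$ over $\eta\in D(J)$ from \cref{lem:supp_bound}, and the Gronwall bound on the ODE flow; the factor $\abs{\varTheta^2_t-\varTheta^1_t}^2$ integrates to $W^2_{\boldsymbol{\eta}}(\eta^1,\eta^2)_{L^2(0,1)}$ by construction. This yields a bound of the form $\big|\dv[2]{\tau}L(\eta^{1\to2}_\tau)\big|\le C\,W^2_{\boldsymbol{\eta}}(\eta^1,\eta^2)_{L^2(0,1)}$ with $C$ depending only on $\mu_0$, $\ell$, and the constants in \cref{assump:bound_dNN}, uniformly in $\tau\in[0,1]$. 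In particular $\tau\mapsto\dv{\tau}L(\eta^{1\to2}_\tau)$ is Lipschitz with constant $C\,W^2_{\boldsymbol\eta}$, so by \cref{rmk:another_convexity} the function is $(-C)\,W^2_{\boldsymbol{\eta}}(\eta^1,\eta^2)_{L^2(0,1)}$-convex in the classical sense, which is exactly \cref{def:generalized_convexity}\ref{item:convexity} with $\lambda=-C$.

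The main obstacle I anticipate is the second term above, where one must differentiate the flow map $X^{\eta^{1\to2}_\tau}$ — and consequently $\varphi^{\eta^{1\to2}_\tau}$ and the measure $\mu^{\eta^{1\to2}_\tau}_t$ — with respect to the curve parameter $\tau$, and then argue that the resulting quantities remain square-integrable in $\tau$ and $\Omega$ with constants that do not blow up along the geodesic. This requires a careful bookkeeping argument: applying \cref{lem:linearizedODE} with $\Delta\varTheta=\varTheta^2-\varTheta^1$, noting that $M(x;t,u)$ is bounded uniformly because $\partial_x v$ is bounded on the (uniformly bounded) supports by \eqref{eq:bound_dNN} with $i=1$, and then using Cauchy--Schwarz in the $\Omega$- and $t$-variables to absorb one factor of $\abs{\varTheta^2-\varTheta^1}$ into the $W^2_{\boldsymbol\eta}$ bound while the remaining factors are uniformly bounded. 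Because only first and second $x$- and $\theta$-derivatives of $v$ enter, \cref{assump:bound_dNN} with $i\le2$, $j\le2$ is exactly what is needed, and no analyticity is required for this lemma — consistent with the remark in the text that analyticity is reserved for the \L{}S inequality. Once the uniform second-derivative bound is in hand, the conclusion is immediate, and the same estimate applied to $J=L+\tfrac{\lambda}{2}\int_0^1\int\abs{\theta}^2$ (whose regularization term is $\lambda$-convex along generalized geodesics with the sharp constant $\lambda$) will give $\lambda$-convexity of $J$ as well, feeding into \cref{lem:maximal_slope=grad_flow} and \cref{thm:wellposedness_grad}.
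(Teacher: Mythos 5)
Your proposal is correct and follows essentially the same route as the paper: both invoke \cref{rmk:another_convexity}, compute the first derivative of $\tau\mapsto L(\eta^{1\to2}_\tau)$ from \cref{prop:grad}, decompose the resulting increment into the same three contributions (variation of $\nabla\varphi$, of $\partial_\theta v$, and of $\mu$), and control each using \cref{assump:bound_dNN}, \cref{lem:supp_bound}, the linearized flow from \cref{lem:linearizedODE}, and Kantorovich--Rubinstein duality. The only cosmetic difference is that you lift to random variables and propose bounding a second $\tau$-derivative, whereas the paper works directly with the $3$-plan $\boldsymbol{\eta}$ and estimates the Lipschitz modulus of $h(\tau)=\dv{\tau}L(\eta^{1\to2}_\tau)$ without asserting twice-differentiability, which is marginally cleaner but amounts to the same estimates.
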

The proof, based on stability estimates for the state and costate along generalized geodesics, is given in \cref{appendix:lambda_convex}.

For $R>0$, define $J_R\coloneqq J+\iota_{\Dcal_R}$, where $\iota_{\Dcal_R}$ is zero on $\Dcal_R$ and $+\infty$ otherwise.
\begin{corollary}[Localized semiconvexity]\label{cor:conv_JR}
    For every $R>0$, the functional $J_R$ is proper, lower semicontinuous, coercive and $\lambda_R^J$-convex along generalized geodesics with $\lambda_R^J\coloneqq\lambda_R+\epsilon\in\R$.
\end{corollary}

\begin{lemma}[Localization]\label{lem:localization}
    Let $R>0$, and let $\eta\in\Dcal_R$ satisfy the strict inequality $\int_0^1m_2(\eta_t)\dd{t}<R^2$.
    Then $J_R=J$ on a ball centered at $\eta$ in $\Ltwotheta$, and consequently $\abs{\partial J_R}(\eta)=\abs{\partial J}(\eta)$ and $\partial^-J_R(\eta)=\partial^-J(\eta)$.
\end{lemma}

\begin{theorem}[Well-posedness and stability of the gradient flow]\label{thm:wellposedness_grad}
    Suppose that \cref{assump:bound_dNN} holds.
    For each $\eta_0\in\Ltwotheta$, there exists a unique curve of maximal slope $\eta(\bullet)\colon[0,+\infty)\to\Ltwotheta$ for $J$ with respect to $\abs{\partial J}$ such that $\eta(0)=\eta_0$.
    Moreover, if $\eta^1$ and $\eta^2$ are the curves of maximal slope starting from $\eta_0^1$ and $\eta_0^2$, respectively, then for every $R>\max_{i=1,2}\sqrt{2J(\eta_0^i)/\epsilon}$ and $\tau\ge0$,
    \[
        W_2\qty(\eta^1(\tau),\eta^2(\tau))_{L^2(I)}\le\e^{-\lambda_R^J\tau}W_2\qty(\eta_0^1,\eta_0^2)_{L^2(I)}.
    \]
\end{theorem}

The proofs of \cref{cor:conv_JR}, \cref{lem:localization}, and \cref{thm:wellposedness_grad} are given in \cref{subsec:wellposed_proofs}.

\section{Gradient representation}\label{sec:grad_repr}
In this section, we derive an explicit representation of the linear functional derivative of $L$, following \cite{baravdish2022learningCG,bonnet2022measure}; it is expressed through the costate, or adjoint state, of the underlying optimal control problem.
We first define the linear functional derivative on $\Ltwotheta$; the related calculus is in \cref{sec:calc}.
\begin{definition}[Functional derivative]\label{def:fdv}
    A function $E\colon\Ltwotheta\to\R$ is said to admit a \emph{linear functional derivative} if there exists a function
    \[
        \fdv{E}{\eta}\colon\Ltwotheta\times I\times\R^m\ni(\eta,t,\theta)\longmapsto\fdv{E}{\eta}\qty[\eta]\qty(t,\theta)\in\R,
    \]
    called an \emph{admissible representative}, satisfying the following conditions:
    \begin{myenum}
        \item For every $\eta$, $\eta^\prime\in\Ltwotheta$ and a.e.~$t\in I$, the map $I\times\R^m\ni(r,\theta)\mapsto\fdv{E}{\eta}\qty[(1-r)\eta+r\eta^\prime](t,\theta)$ is continuous.\label{enum:1}
        \item For each $\eta\in\Ltwotheta$ and $\theta\in\R^m$, $\fdv{E}{\eta}[\eta](\bullet,\theta)\colon I\to\R$ is measurable.\label{enum:2}
        \item For any bounded subset $K\subset\Ltwotheta$, there exists $C_K>0$ such that $\abs{\fdv{E}{\eta}[\eta](t,\theta)}\le C_K(1+\abs{\theta}^2)$ for a.e.~$t\in I$, every $\eta\in K$, and every $\theta\in\R^m$.\label{enum:3}
        \item For any $\eta$, $\eta^\prime\in\Ltwotheta$, it holds that
        \begin{equation}\label{eq:Taylor_def}
            E(\eta^\prime)-E(\eta)=\int_0^1\int_0^1\int_{\R^m}\fdv{E}{\eta}\qty[(1-r) \eta + r \eta^\prime](t,\theta)\dd{\qty(\eta^\prime_t-\eta_t)(\theta)}\dd{t}\dd{r}.
        \end{equation}\label{enum:4}
    \end{myenum}
\end{definition}
The identity \eqref{eq:Taylor_def} determines an admissible representative only up to an additive function of the depth variable: we identify two admissible representatives $G_E$ and $\widehat{G}_E$ if, for every $\eta\in\Ltwotheta$, there exists a measurable function $c_\eta\colon I\to\R$ such that $\widehat{G}_E[\eta](t,\theta)=G_E[\eta](t,\theta)+c_\eta(t)$ for a.e.~$t\in I$ and every $\theta\in\R^m$.
Any two admissible representatives are equivalent in this sense by the uniqueness argument for linear functional derivatives, localized to measurable subsets of $I$; cf.~\cite[Section 5.4]{Carmona2018}.
All statements below are independent of this choice.

\begin{proposition}[Representation formula for gradients]\label{prop:grad}
Under \cref{assump:bound_dNN}, an admissible representative of the linear functional derivative of $L$ defined in \eqref{eq:loss_term_L} is given, for a.e.~$t\in I$ and every $\theta\in\R^m$, by
\[
    \fdv{L}{\eta}\qty[\eta](t,\theta)=\la\nabla_x\varphi_t^\eta,v_\theta\ra_{\mu_t^\eta}\coloneqq\int_{\XtimesY}\la\nabla_x\varphi_t^\eta(x,y),v(x,\theta)\ra\dd{\mu_t^\eta(x,y)},
\]
where the costate $\varphi^\eta$ is defined by \eqref{eq:intro_costate}; equivalently, it solves the transport equation $\partial_t\varphi+\nabla_x\varphi\cdot\int_{\R^m}v_\theta\dd{\eta_t(\theta)}=0$ with $\varphi_{t=1}=\ell$.
\end{proposition}
\begin{proof}
    Set $\eta^r\coloneqq(1-r)\eta+r\eta^\prime$ for $\eta$, $\eta^\prime\in\Ltwotheta$ and $r\in I$.
    By the adjoint method, solving the backward transport equation $\partial_t\varphi+\nabla_x\varphi\cdot v_{\eta^r_t}=0$ with $\varphi_{t=1}=\ell$ along characteristics as in \cite[Theorem 4 and Proposition 3]{chertovskih2023nonlocal}, and using $v_{\eta^r_t}=(1-r)v_{\eta_t}+rv_{\eta^\prime_t}$, we obtain
    \[
      \dv{r}L(\eta^r)=\int_0^1\la\nabla_x\varphi_t^{\eta^r},v_{\eta^\prime_t}-v_{\eta_t}\ra_{\mu_t^{\eta^r}}\dd{t}=\int_0^1\int_{\R^m}\la\nabla_x\varphi_t^{\eta^r},v_\theta\ra_{\mu_t^{\eta^r}}\dd{(\eta^\prime_t-\eta_t)(\theta)}\dd{t}.
    \]
    The map $r\mapsto L(\eta^r)$ is of class $C^1$ by \cref{assump:bound_dNN} and the bounds \cref{eq:Gronwall_bound_X,eq:Gronwall_bound_M}, and integrating over $r\in I$ yields \eqref{eq:Taylor_def}; the remaining measurability, continuity, and growth conditions follow from the same bounds.
\end{proof}
\Cref{prop:grad,prop:Ldiff_fdv}, applied to $L$ together with the regularization term, show that $J$ is continuously L-differentiable, so \cref{prop:L=Wdiff} applies to $J$; for $R>(\int_0^1m_2(\eta_t)\dd{t})^{\nicefrac{1}{2}}$, \cref{lem:localization} and \cref{lem:minimal_select} applied to $J_R$ yield the following.
\begin{corollary}[L-derivative and slope representation]\label{cor:slope}
    Under \cref{assump:bound_dNN}, $J$ is continuously L-differentiable with, for a.e.~$t\in I$ and every $\theta\in\R^m$,
    \[
        \nabla J\qty[\eta](t,\theta)=\epsilon\theta+\int_{\XtimesY}(\mdif{\theta}v)(x,\theta)^\top\nabla_x\varphi_t^{\eta}(x,y)\dd{\mu_t^{\eta}(x,y)}.
    \]
    Moreover, for every $\eta\in\Ltwotheta$, the subdifferential is the singleton $\partial^-J(\eta)=\{\nabla J\qty[\eta]\}$, and the metric slope is $\abs{\partial J}(\eta)=\norm{\nabla J\qty[\eta]}_{L^2(I;L^2(\eta_t))}$.
\end{corollary}

Thanks to \cref{cor:slope}, we obtain an a priori estimate for critical points of $J$.
\begin{lemma}[$L^\infty$ estimate of critical points]\label{lem:Linf_crit_pt}
    Under \cref{assump:bound_dNN}, for every $E>0$ there exists $R>0$ such that every critical point $\eta\in\metricsp$ of $J$ with $J(\eta)\le E$ satisfies $\supp\eta_t\subset\Set{\theta\in\R^m|\abs{\theta}\le R}$ for a.e.~$t\in I$.
\end{lemma}
\begin{proof}
Let $\eta\in\metricsp$ be a critical point of $J$ with $J(\eta)\le E$. By \cref{cor:slope}, $\norm{\nabla J\qty[\eta]}_{L^2(I;L^2(\eta_t))}=\abs{\partial J}(\eta)=0$, so, for $\dd{\eta_t\dd t}$-a.e.~$(t,\theta)$,
\begin{equation}\label{eq:criticality}
    0=\epsilon\theta+\int_{\XtimesY}(\mdif{\theta}v)(x,\theta)^\top\nabla_x\varphi_t^{\eta}(x,y)\dd{\mu_t^{\eta}(x,y)}.
\end{equation}
For a.e.~$t\in I$, \eqref{eq:criticality} holds for $\eta_t$-a.e.~$\theta$, and hence for every $\theta\in\supp\eta_t$ by continuity.
Since $J(\eta)\le E$ gives $\int_0^1m_2(\eta_t)\dd{t}\le 2E/\epsilon$, the supports $\supp\mu_t^\eta$ are bounded by a constant depending only on $E$ through \cref{lem:supp_bound}, and so is the costate gradient $\nabla_x\varphi_t^{\eta}(x,y)=M^\eta(x;1,t)^\top\nabla_x\ell(X_{t,1}^\eta(x),y)$ from \cref{prop:grad}, by the fundamental-solution bound \eqref{eq:Gronwall_bound_M} and the boundedness of $\nabla_x\ell$ on the reachable compact set from \cref{assump:bound_dNN}; together with $\abs{(\mdif{\theta}v)(x,\theta)}\le C_0(1+\abs{\theta}^{(p-1)_+})(1+\abs{x})$ from \eqref{eq:Cth}, \eqref{eq:criticality} yields
\(
    \epsilon\abs{\theta}\le C\qty(1+\abs{\theta}^{(p-1)_+}),
\)
where $C$ depends only on $E$ and $p\in[0,2)$ is the exponent in \cref{assump:bound_dNN}.
Since $(p-1)_+<1$, this forces $\abs{\theta}\le R$ with $R\coloneqq\max\{1,(2C/\epsilon)^{1/(1-(p-1)_+)}\}$, so $\supp\eta_t\subset\Set{\theta\in\R^m|\abs{\theta}\le R}$ for a.e.~$t\in I$.
\end{proof}
    
\begin{remark}[Interpretation of \cref{assump:curvature}]\label{rem:strong_convexity}
    Differentiating the formula in \cref{cor:slope} with respect to \(\theta\) and using \eqref{eq:Cthth} gives
    \(
        \Hessian_\theta\fdv{J}{\eta}[\eta](t,\theta)\succeq a_R\Id_m
    \)
    for \(\eta\in\Dcal_R\),  \(\theta\in\R^m\), and a.e.~\(t\in I\).
    Thus, under \cref{assump:curvature}, the functional derivative is \(a_{R_E}\)-strongly convex in \(\theta\), or, equivalently, its gradient is \(a_{R_E}\)-strongly monotone.
    This property yields the contraction of the characteristics in \cref{sec:orbit} and the principal coercivity in \cref{prop:H2}, but it does not imply displacement convexity of \(J\), because the dependence of the costate on \(\eta\) remains uncontrolled; see \cref{ex:nonconvex}.
\end{remark}

\section{Boundedness and compactness of the orbit}\label{sec:orbit}
We establish uniform support and regularity bounds for the gradient flow of \cref{thm:wellposedness_grad}.
The support bound uses only \cref{assump:bound_dNN} and a bounded initial support, whereas orbit compactness also requires \cref{assump:initial_data,assump:curvature}.
First, we discuss the supports.
\begin{proposition}[Uniform support bound along the gradient flow]\label{prop:orbit_in_L^infty}
    Let $\eta(\bullet)\in\AC^2(0,+\infty;\Ltwotheta)$ be the curve of maximal slope for $J$ with respect to $\abs{\partial J}$.
    Suppose that \cref{assump:bound_dNN} holds and that there exists a bounded set $B\subset\R^m$ such that $\supp\eta(0)_t\subset B$ for a.e.~$t\in I$.
    Then there exists another bounded set $D\subset\R^m$ such that $\supp\eta(\tau)_t\subset D$ for a.e.~$t\in I$ and $\tau\geq0$.
\end{proposition}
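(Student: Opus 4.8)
The plan is to derive a closed differential inequality for the maximal radius of the support. For $\tau \geq 0$ set $R(\tau) \coloneqq \operatorname{ess\,sup}_{t\in[0,1]} \sup_{\theta \in \supp\eta(\tau)_t} |\theta|$; by hypothesis $R(0) < \infty$ since $\supp\eta(0)_t \subset B$. Because $\eta(\bullet)$ is a curve of maximal slope, by \cref{lem:maximal_slope=grad_flow} it solves the continuity equation $\partial_\tau \eta(\tau) + \Div_\theta(v(\tau)\eta(\tau)) = 0$ with velocity field $v(\tau) = -\partial^- J(\eta(\tau))$, and by \cref{lem:minimal_select} combined with \cref{prop:L=Wdiff} this velocity is the L-derivative, i.e.\ $v(\tau,t,\theta) = -\nabla_\theta \frac{\delta J}{\delta \eta}[\eta(\tau)](t,\theta) = -\lambda\theta - \langle \nabla_x \varphi_t^{\eta(\tau)}, \partial_\theta v_\theta\rangle_{\mu_t^{\eta(\tau)}}$ on $\supp\eta(\tau)_t$, using \cref{prop:grad}. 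The mass of $\eta(\tau)_t$ is transported along the characteristic ODE $\dot\theta = v(\tau,t,\theta)$, so a point initially at radius $\leq R(0)$ stays controlled as long as the radial component of $v$ is inward past some threshold.

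The key estimate is then a pointwise bound on $v(\tau,t,\theta)$ for $|\theta|$ large. First I would invoke \cref{lem:supp_bound}: since the second moment $W_2(\eta(\tau),\delta)^2$ is uniformly bounded (it is controlled by $2J(\eta(\tau))/\lambda \leq 2J(\eta_0)/\lambda$, which is nonincreasing along the flow), the curve stays in a bounded set $D(J)$, so $\supp\mu_t^{\eta(\tau)}$ is uniformly bounded in $t$ and $\tau$; hence $|\nabla_x \varphi_t^{\eta(\tau)}(x,y)| = |\nabla_x \ell(X_{1,t}^{\eta(\tau)}(x),y)|$ is bounded by a constant independent of $\tau$ (using analyticity of $\ell$ and boundedness of the flow-map range, plus the $i\le 1$ bounds in \cref{assump:bound_dNN}). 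Then by the $j=1$ case of \eqref{eq:bound_dNN}, $|\langle \nabla_x\varphi_t^{\eta(\tau)}, \partial_\theta v_\theta\rangle_{\mu_t^{\eta(\tau)}}| \leq C|\theta|^{(p-1)_+}$ with $p \in [0,2)$, so $(p-1)_+ < 1$. Taking the inner product of the characteristic equation with $\theta$ gives
\[
\frac{1}{2}\frac{d}{d\tau}|\theta(\tau)|^2 = \langle \theta, v\rangle \leq -\lambda|\theta|^2 + C|\theta|^{(p-1)_+}|\theta| = -\lambda|\theta|^2 + C|\theta|^{1+(p-1)_+},
\]
and since $1 + (p-1)_+ < 2$, the right-hand side is negative once $|\theta|$ exceeds a fixed threshold $R_\ast$ depending only on $\lambda, C$. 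By a Gronwall-type comparison argument this forces $|\theta(\tau)| \leq \max\{R(0), R_\ast\}$ for all $\tau$, and taking the supremum over characteristics and over $t$ gives $R(\tau) \leq \max\{R(0), R_\ast\} =: R_D$, so $D \coloneqq \overline{B_{R_D}(0)}$ works.

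The main obstacle is making the characteristic/transport argument rigorous at the level of the continuity equation, since $v(\tau)$ is only a Borel vector field in $\Tan_{\eta(\tau)}\Ltwotheta$ and need not be smooth or even continuous; one cannot naively solve $\dot\theta = v(\tau,t,\theta)$. The clean way around this is to work with a convex test functional: fix a smooth nondecreasing convex $\Phi\colon \R_{\geq0}\to\R_{\geq0}$ with $\Phi(s) = 0$ for $s \leq R_\ast^2$ and $\Phi'>0$ for $s > R_\ast^2$, and differentiate $\tau \mapsto \int_0^1\int_{\R^m}\Phi(|\theta|^2)\dd\eta_{t,\tau}(\theta)\dd t$ using the continuity equation (\cref{def:grad_flow_continuity}), which is legitimate after a standard mollification/truncation since $\norm{v(\tau)}_{L^2(\eta(\tau))} \in L^2_{\mathrm{loc}}$. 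This yields $\frac{d}{d\tau}\int \Phi(|\theta|^2)\dd\eta_{t,\tau}\dd t = 2\int \Phi'(|\theta|^2)\langle\theta,v(\tau,t,\theta)\rangle\dd\eta_{t,\tau}\dd t \leq 0$ by the sign computation above (the integrand vanishes where $\Phi' = 0$, i.e.\ where $|\theta|\le R_\ast$, and is $\leq 0$ elsewhere). Since the integral is zero at $\tau = 0$ (as $\supp\eta(0)_t \subset B \subset B_{R_\ast}$, after first enlarging $R_\ast$ to also dominate $\sup_{\theta\in B}|\theta|$), it stays zero, forcing $\supp\eta(\tau)_t \subset \overline{B_{R_\ast}(0)}$ for a.a.\ $t$ and all $\tau$; set $D \coloneqq \overline{B_{R_\ast}(0)}$. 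A secondary technical point is the uniform bound on $\nabla_x\varphi^{\eta(\tau)}$: this requires the range of the flow map $X_{t,s}^{\eta(\tau)}$ to stay in a fixed compact set, which follows from \cref{lem:supp_bound} and the Gronwall bound on \eqref{eq:ODE_flow} under \cref{assump:NN}, uniformly because $J(\eta(\tau))$ is nonincreasing.
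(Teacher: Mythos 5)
Your proposal is correct and arrives at the same pointwise differential inequality as the paper---namely, using \cref{lem:supp_bound} and the $j=1$ case of \cref{assump:bound_dNN} to bound the non-linear part of the velocity by $C\abs{\theta}^{(p-1)_+}$ with $(p-1)_+<1$, so that $\langle\theta,v\rangle\leq-\lambda\abs{\theta}^2+C\abs{\theta}^{1+(p-1)_+}$ is eventually negative---but you take a different technical route to make that estimate rigorous. The paper works Lagrangianly: it invokes a representation result for nonlocal continuity equations (Bonnet--Frankowska, Corollary 2.12) to write $\eta(\tau)=\Phi_\tau[\eta_0]_\#\eta_0$ via an honest flow map $\Phi_\tau$, and then differentiates $\abs{\Phi_\tau^{\R^m}[\eta_0](t,\theta)}^2$ along trajectories exactly as in your first paragraph. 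The obstacle you raise about solving the characteristic ODE for a merely Borel velocity is actually handled by that citation, since under \cref{assump:bound_dNN} the velocity $g[\eta(\tau)]$ is locally Lipschitz in $\theta$ (the $j=2$ bound controls $\partial_\theta^2 v$) and the nonlocal well-posedness theory applies. Your alternative---testing the weak continuity equation against a convex cutoff $\Phi(\abs{\theta}^2)$ vanishing on $\{\abs{\theta}\le R_\ast\}$ and showing $\int\Phi(\abs{\theta}^2)\dd\eta_{t,\tau}\dd t\equiv0$---is a legitimate Eulerian workaround that avoids invoking the nonlocal flow-map lemma; it trades that external result for a more careful approximation argument (truncation of $\Phi$ and of the $\tau$, $t$ domains against the $\Ccinf$ requirement in \cref{def:grad_flow_continuity}, plus integrability of $\Phi'\langle\theta,v\rangle$ via $\Phi'$ bounded and the $L^2$ bounds on $\theta$ and $v$). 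Both arguments are sound; the paper's is shorter given the available literature, yours is more self-contained.

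One small point to watch in your test-functional version: you need to enlarge $R_\ast$ at the outset so that $B\subset B_{R_\ast}$ \emph{and} so that the differential inequality $-\lambda s^2+Cs^{1+(p-1)_+}\le 0$ holds for $s\ge R_\ast$; you note this in passing, but it must be done before fixing $\Phi$, since otherwise the initial integral need not vanish and the conclusion degrades to a Gronwall bound (which is essentially what the paper's Young-inequality step produces, giving $\abs{\Phi_\tau^{\R^m}}\le C_3(\diam B+C)$).
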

\begin{proof}
    From \cref{lem:maximal_slope=grad_flow} applied to $J_R$ with $R>\sqrt{2J(\eta_0)/\epsilon}$, together with \cref{lem:localization,cor:slope}, we see that the curve of maximal slope $\eta$ satisfies the following continuity equation
    \begin{align}
        &\partial_\tau\eta(\tau)+\Div_\theta(u\qty[\eta(\tau)]\eta(\tau))=0,\quad\tau\in\qty(0,+\infty),
    \end{align}
    in the sense of distributions, where
    \(
        u\qty[\eta(\tau)](t,\theta)\coloneqq-\qty(\nabla_\theta\fdv{L}{\eta}\qty[\eta(\tau)](t,\theta)+\epsilon\theta)\in\R^m\),
        \(\fdv{L}{\eta}\qty[\eta(\tau)](t,\theta)=\la\nabla_x\varphi^{\eta(\tau)}_t,v_{\theta}\ra_{\mu_t^{\eta(\tau)}}\).
    Note that $u$ is a nonlocal vector field. 
    Under the identification of $\eta(\tau)$ with the measure $\int\eta(\tau)_t\dd{t}$ on $I\times\R^m$, the curve $\eta(\bullet)$ is narrowly continuous by \cref{lem:eqiv_P_2}, and it solves the continuity equation above with the Borel vector field $g\qty[\eta(\tau)]\coloneqq(0,u\qty[\eta(\tau)])\in\R\times\R^m$, which is square-integrable against $\int\eta(\tau)_t\dd{t}$ locally in $\tau$ by \cref{def:grad_flow_continuity}.
    Hence the superposition principle \cite[Theorem 8.2.1]{AGS}, applied on finite training-time intervals, represents $\eta(\bullet)$ by integral curves of $g$, along each of which the coordinate $t$ is constant.
    Moreover, for a.e.~$t\in I$, the time-dependent vector field $(\tau,\theta)\mapsto u\qty[\eta(\tau)](t,\theta)$ is Borel measurable in $\tau$ and globally Lipschitz in $\theta$ with a locally integrable Lipschitz bound and at most linear growth, by \cref{assump:bound_dNN,lem:supp_bound} and the costate bounds in the proof of \cref{lem:Linf_crit_pt}; hence, for fixed $t$, the characteristic equation $\dv{\tau}\theta(\tau)=u\qty[\eta(\tau)](t,\theta(\tau))$ admits a unique Carath\'{e}odory solution.
    Disintegrating the superposition measure with respect to the initial point and using this uniqueness, we obtain
    \begin{equation}
        \eta(\tau)=\Phi_\tau\qty[\eta_0]_\#\eta_0,\label{eq:rep_for_nonlocal}
    \end{equation}
    for all $\tau\geq0$, where $\Phi_\tau[\eta_0]\colon I\times B\to I\times\R^m$, $\tau\geq0$, is the flow map associated with the family of vector fields $g$, i.e., $\Phi_\tau[\eta_0]$ satisfies the following equation
    \begin{equation}
        \Phi_\tau\qty[\eta_0](t,\theta)=(t,\theta)+\int_0^\tau g\qty[\eta(\sigma)]\qty(\Phi_\sigma\qty[\eta_0](t,\theta))\dd{\sigma},\label{eq:integral_eq_nonlocal}
    \end{equation}
    for $(t,\theta)\in I\times B$.
    Henceforth, $\Phi^{\R^m}_\tau[\eta_0](t,\theta)$ denotes the projection of $\Phi_\tau[\eta_0](t,\theta)$ from $I\times\R^m$ onto $\R^m$.
    Since 
    \[
        \dv{\Phi^{\R^m}_\tau\qty[\eta_0](t,\theta)}{\tau}=-\epsilon\Phi^{\R^m}_\tau\qty[\eta_0](t,\theta)-\nabla_\theta\fdv{L}{\eta}\qty[\eta(\tau)](\Phi_\tau\qty[\eta_0](t,\theta)),
    \]
    we find that
    \begin{align}
        \dv{\tau}\abs{\Phi^{\R^m}_\tau\qty[\eta_0](t,\theta)}^2&{}=-2\epsilon\abs{\Phi^{\R^m}_\tau\qty[\eta_0](t,\theta)}^2-2\Phi^{\R^m}_\tau\qty[\eta_0](t,\theta)\cdot\nabla_\theta\fdv{L}{\eta}\qty[\eta(\tau)](\Phi_\tau\qty[\eta_0](t,\theta)),\label{eq:similar_comp_to_scagliotti}
    \end{align}
    by differentiating the characteristic flow with respect to $\tau$ and using the chain rule for the gradient-flow vector field, as in the proof of \cite[Lemma 4.5]{Scagliotti2023}.
    The second term on the right-hand side of \eqref{eq:similar_comp_to_scagliotti} is bounded as follows:
    \[
    \begin{aligned}
        &{}\abs{\Phi^{\R^m}_\tau\qty[\eta_0](t,\theta)\cdot\nabla_\theta\fdv{L}{\eta}\qty[\eta(\tau)](\Phi_\tau\qty[\eta_0](t,\theta))}\\
        {}\leq{}&C\abs{\Phi^{\R^m}_\tau\qty[\eta_0](t,\theta)}\sup_{(x,y)\in B_{\XtimesY}}\abs{\qty(\mdif{\theta} v)\qty(x,\Phi^{\R^m}_\tau\qty[\eta_0](t,\theta))}\\
        {}\leq{}& C\qty(1+\abs{\Phi^{\R^m}_\tau\qty[\eta_0](t,\theta)}^{1+(p-1)_+}),
    \end{aligned}
    \]
    by \cref{assump:bound_dNN,lem:supp_bound}, where $B_{\XtimesY}$ is a ball containing the supports $\supp\mu_t^{\eta(\tau)}$ for all $t\in I$ and $\tau\ge0$, and the constant $C$ bounds the costate $\nabla_x\varphi^{\eta(\tau)}$ on $B_{\XtimesY}$ as in the proof of \cref{lem:Linf_crit_pt}. Since $1+(p-1)_+<2$, the Young inequality yields constants $C_1$, $C_2>0$ independent of $t$, $\theta$ and $\tau$ such that
   \[
    \dv{\tau}\abs{\Phi^{\R^m}_\tau\qty[\eta_0](t,\theta)}^2\leq-C_1\abs{\Phi^{\R^m}_\tau\qty[\eta_0](t,\theta)}^2+C_2,
   \]
   and the comparison principle gives
   \(
    \abs{\Phi^{\R^m}_\tau\qty[\eta_0](t,\theta)}^2\le\e^{-C_1\tau}\abs{\theta}^2+\nicefrac{C_2}{C_1}\qty(1-\e^{-C_1\tau})\leq\max\{\abs{\theta}^2,\nicefrac{C_2}{C_1}\},
   \)
   which is bounded uniformly in $t$, $\theta\in\supp\eta(0)_t$ and $\tau\ge0$ by the boundedness of $B$.
   This boundedness and \eqref{eq:rep_for_nonlocal} imply the desired conclusion.
\end{proof}

Subsequently, we prove the relative compactness of the orbit of the gradient flow.
\begin{proposition}[Depth regularity and relative compactness of the orbit]\label{prop:Orbit_in_Sobolev}
    Let $\eta(\bullet)\colon[0,+\infty)\to\Ltwotheta$ be the curve of maximal slope for $J$ starting from $\eta_0$.
    Suppose that \cref{assump:bound_dNN,assump:initial_data} hold and that \cref{assump:curvature} holds with $E\coloneqq J(\eta_0)$, and set $a\coloneqq a_{R_E}>0$.
    Then there exist a bounded set $D\subset\R^m$ and a constant $C>0$ such that
    \(
        W_2\qty(\eta(\tau)_t,\eta(\tau)_s)\le\e^{-a\tau}W_2\qty((\eta_0)_t,(\eta_0)_s)+\frac{C}{a}\abs{t-s}
    \)
    for every $\tau\ge0$ and a.e.~$(s,t)\in I^2$.
    In particular, $\eta(\tau)\in W^{1,2}(I;\Ptwo(D))$ with $\sup_{\tau\ge0}\Dir(\eta(\tau))<+\infty$, and the orbit $\Set{\eta(\tau)|\tau\ge0}$ is relatively compact in $\Ltwotheta$.
\end{proposition}
The estimate combines the same-layer contraction induced by the coercivity margin \(a>0\) with an \(O(\abs{t-s})\) bound for the layer-dependent source; see \cref{sec:compactness}.

\section{\texorpdfstring{\L{}ojasiewicz--Simon}{Lojasiewicz--Simon} inequality and long-time convergence}\label{sec:grad_ineq}
Throughout this section, $K_X\coloneqq\pi_{\R^d}(\supp\mu_0)$ denotes the projection of $\supp\mu_0$ onto $\R^d$, a compact set, and we work under \cref{assump:curvature} at an energy level $E>0$.
The supremum defining $a_R$ in \eqref{eq:coercivity_margin} is finite for every $R>0$ by \cref{lem:supp_bound} and the costate bound in the proof of \cref{lem:Linf_crit_pt}, and $R_1\le R_2$ implies $a_{R_1}\ge a_{R_2}$.
We first prove the following \L{}ojasiewicz--Simon inequality.

\begin{theorem}[\L{}ojasiewicz--Simon gradient inequality for $J$ in \eqref{eq:J}]\label{thm:grad_ineq}
Let $E>0$, let $\overline{\eta}\in\Ltwotheta$ be a critical point of $J$ with $J(\overline{\eta})\le E$, and suppose that \cref{assump:bound_dNN,assump:curvature} hold.
Then there exist $\alpha\in(0,1/2]$, $C>0$ and $r>0$ such that
    \[
        \abs{J(\zeta)-J(\overline{\eta})}^{1-\alpha}\leq C\abs{\partial J}(\zeta),
    \]
    for all $\zeta\in \Ltwotheta$ with $W_2(\zeta,\overline{\eta})_{L^2(I)}< r$.
\end{theorem}
\Cref{assump:curvature} does not imply convexity and cannot in general be omitted; see \cref{sec:examples}.

\subsection{Lifting the objective}
Following the notation of \cref{subsec:Lcalculus}, we write $\Hilbertsp\coloneqq\varThetaSpace$ and $\strongsp\coloneqq\varThetaSpaceInf$ throughout this section, where $\Omega=I\times I$ carries the product Lebesgue measure $\mathbb{P}=\Leb\otimes\Leb$, and let $\widetilde{J}$ be the lifting of $J$, i.e., define $\widetilde{J}\colon\Hilbertsp\to\R_{\geq0}$ by
\begin{equation}
    \widetilde{J}(\varTheta)\coloneqq J(\Law\varTheta)
    =\int_{\XtimesY}\ell\dd{\mu^{\Law\varTheta}_1}+\frac{\epsilon}{2}\norm{\varTheta}^2_{\Hilbertsp},\label{eq:def_Jtil}
\end{equation}
for $\varTheta\in\Hilbertsp$.
Herein, we will prove that the following gradient inequality holds for $\widetilde{J}$.
\begin{theorem}[\L{}ojasiewicz--Simon gradient inequality for $\widetilde{J}$]\label{thm:grad_ineq_Jtilde}
Let $E>0$ and suppose that \cref{assump:bound_dNN,assump:curvature} hold.
For any critical point $\varTheta\in\strongsp$ of $\widetilde{J}$ with $\widetilde{J}(\varTheta)\le E$, there exist $C>0$, $r>0$ and $\alpha\in(0,1/2]$ such that
\begin{equation}
         \abs{\widetilde{J}(\varPhi)-\widetilde{J}({\varTheta})}^{1-\alpha}\leq C\norm{D\widetilde{J}(\varPhi)}_{\Hilbertsp},\label{eq:grad_ineq_Jtil}
\end{equation}
for all $\varPhi\in \Hilbertsp$ with $\norm{\varPhi-{\varTheta}}_{\Hilbertsp}<r$.
\end{theorem}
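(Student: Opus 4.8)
The plan is to prove \cref{thm:grad_ineq_Jtilde} by the Łojasiewicz--Simon scheme in the Lyapunov--Schmidt form used by Simon, Feireisl--Simondon, and Akagi--Schimperna--Segatti \cite{Simon83,Feireisl2000,AKAGI20192663} (see also \cite{CHILL2003572}). First I would record the differential calculus of $\widetilde{J}$ near the critical point $\varTheta$, where $D\widetilde{J}(\varTheta)=0$. Writing $\widetilde{J}(\varPhi)=\Expect_{(X,Y)\sim\mu_0}\qty[\ell(X_1^{\Law\varPhi}(X),Y)]+\frac{\lambda}{2}\norm{\varPhi}^2_{\varThetaSpace}$ and differentiating the flow via \cref{lem:linearizedODE} (which also governs $\varphi^{\Law\varPhi}$ and $\mu^{\Law\varPhi}$ through \cref{prop:grad}), one gets $D\widetilde{J}(\varPhi)(t,\omega)=\lambda\varPhi_t(\omega)+\nabla_\theta\fdv{L}{\eta}\qty[\Law\varPhi](t,\varPhi_t(\omega))$ together with an explicit formula for $\Hessian\widetilde{J}(\varPhi)$. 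Since critical points (\cref{lem:Linf_crit_pt}) and the solutions of the continuity equation on $D(J)$ (\cref{lem:supp_bound}) have bounded support, all derivatives of $v$ and $\ell$ occurring there are evaluated on a fixed bounded set where \cref{assump:bound_dNN} bounds them; hence $\widetilde{J}\in C^2$ on an $\varThetaSpace$-neighbourhood of $\varTheta$.

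Next I would analyse $\mathcal{H}:=\Hessian\widetilde{J}(\varTheta)$. Differentiating $\varPhi\mapsto\nabla_\theta\fdv{L}{\eta}\qty[\Law\varPhi](t,\varPhi_t(\omega))$ at $\varPhi=\varTheta$ splits into a ``diagonal'' term, coming from varying the slot $\varPhi_t(\omega)$, which is multiplication by the bounded symmetric matrix field $\la\nabla_x\varphi_t^{\Law\varTheta},\partial_\theta^2 v(\cdot,\varTheta_t(\omega))\ra_{\mu_t^{\Law\varTheta}}$ --- a bounded self-adjoint operator $A$ on $\varThetaSpace$ whose norm is controlled by $C_2$ and $\mu_0$ --- and a remainder term, coming from varying $\Law\varPhi$ inside $\varphi^{\Law\varPhi}$ and $\mu^{\Law\varPhi}$, which via \cref{lem:linearizedODE} is an integral operator with a square-integrable kernel on $\qty(\qty[0,1]\times\Omega)^2$, hence Hilbert--Schmidt and compact. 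Thus $\mathcal{H}=\lambda\Id+A+\mathcal{K}$ with $\mathcal{K}$ compact, and \cref{assump:Hesse_NN} is exactly the condition making $\norm{A}<\lambda$, so that $\lambda\Id+A$ is positive and boundedly invertible and $\mathcal{H}$ is a self-adjoint Fredholm operator of index $0$. In particular $N:=\Ker\mathcal{H}$ is finite-dimensional, $\varThetaSpace=N\oplus N^{\perp}$, and $\mathcal{H}$ restricts to an isomorphism of $N^{\perp}$.

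The crux is the real analyticity of $\widetilde{J}$ (equivalently of the gradient map $D\widetilde{J}\colon\varThetaSpace\to\varThetaSpace$) on an $\varThetaSpace$-neighbourhood of $\varTheta$. Here Chronological Calculus is essential: one rewrites $\ell(X_1^{\Law\varPhi}(x),y)=\rightexp\qty(\int_0^1 v_{\Law\varPhi_\tau}\dd{\tau})\qty[\ell(\cdot,y)](x)$ and expands the right time series by the Volterra series of \cref{lem:abs_conv_Volterra}. Because the supports stay inside a fixed bounded open set $U$, each field $v_{\Law\varPhi_\tau}=\Expect\qty[v(\cdot,\varPhi_\tau)]$ belongs to the Banach space $\mathcal{V}_{\sigma,U}^{d}$ of analytic fields; the bound $C_j\le j!C^j$ from \cref{assump:bound_dNN} makes the series absolutely convergent for $\varPhi$ near $\varTheta$, and composing with the analytic $\ell$ and integrating against $\mu_0$ represents $\widetilde{L}:=L\circ\Law$ as a power series in the fields $(v_{\Law\varPhi_\tau})_\tau$, into which one substitutes the analytic dependence of $v_{\Law\varPhi_\tau}$ on $\varPhi_\tau$ so as to exhibit $\widetilde{J}$ as an absolutely convergent power series in $\varPhi-\varTheta$ on a small ball. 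I expect this to be the main obstacle: the higher-order terms of the superposition $\omega\mapsto v(\cdot,\varPhi_\tau(\omega))$ must be reconciled with the $L^2$-topology of $\varThetaSpace$, which forces a careful combination of the a priori $L^\infty$-bounds of \cref{lem:Linf_crit_pt,lem:supp_bound} with the decay $C_j\le j!C^j$, and the bulk of the technical estimates sits here.

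Finally I would carry out the reduction and the transfer. With $P,P^{\perp}$ the orthogonal projections onto $N,N^{\perp}$, the analytic implicit function theorem applied to $P^{\perp}D\widetilde{J}(\varTheta+\xi+\psi)=0$ near $(\xi,\psi)=(0,0)$ --- using that $\mathcal{H}$ is an isomorphism of $N^{\perp}$ --- yields an analytic map $\psi=g(\xi)$ with $g(0)=0$ and $Dg(0)=0$, whence $\Lambda(\xi):=\widetilde{J}(\varTheta+\xi+g(\xi))$ is real analytic on the finite-dimensional space $N$ with a critical point at $0$. The classical Łojasiewicz inequality \cite{Lojasiewicz71,Lojasiewicz99} then gives $\abs{\Lambda(\xi)-\Lambda(0)}^{1-\alpha}\le C\abs{\nabla\Lambda(\xi)}$ for some $\alpha\in(0,1/2]$ and $\xi$ small. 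To pull this back, for $\varPhi$ with $\norm{\varPhi-\varTheta}_{\varThetaSpace}<r$ set $\xi:=P(\varPhi-\varTheta)$; the isomorphism property of $\mathcal{H}$ on $N^{\perp}$ together with the $C^2$ Taylor expansion of $\widetilde{J}$ provide, as in \cite{Simon83,Feireisl2000,CHILL2003572}, the estimates $\abs{\widetilde{J}(\varPhi)-\widetilde{J}(\varTheta)-\qty(\Lambda(\xi)-\Lambda(0))}\lesssim\norm{D\widetilde{J}(\varPhi)}^2_{\varThetaSpace}$ and $\abs{\nabla\Lambda(\xi)}\lesssim\norm{D\widetilde{J}(\varPhi)}_{\varThetaSpace}$, which combine with $2(1-\alpha)\ge1$ and a further shrinking of $r$ to give \eqref{eq:grad_ineq_Jtil}.
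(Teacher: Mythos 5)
Your outline correctly reproduces the three pillars of the paper's argument---(i) Fredholmness of $\Hessian\widetilde{J}(\varTheta)=\lambda\Id+A+\mathcal{K}$ with $\mathcal{K}$ compact and $\norm{A}<\lambda$ under \cref{assump:Hesse_NN} (this matches \cref{prop:Fredholm}); (ii) analyticity via the Volterra expansion of \cref{lem:abs_conv_Volterra} (this matches \cref{prop:Analytic_J}); and (iii) a Lyapunov--Schmidt reduction followed by the finite-dimensional \L{}ojasiewicz inequality. However, there is a genuine gap in step (ii), and you have in fact put your finger on it without resolving it.

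You state the goal as real analyticity of $\widetilde{J}$ and $D\widetilde{J}$ ``on an $\varThetaSpace$-neighbourhood of $\varTheta$,'' i.e.\ on an $L^2(0,1;L^2(\Omega;\R^m))$-ball, and you propose to carry out the implicit function theorem and the reduced functional $\Lambda(\xi)=\widetilde{J}(\varTheta+\xi+g(\xi))$ entirely inside $\varThetaSpace$. This cannot work: the map $\varPhi\mapsto v(\,\cdot\,,\varPhi)$ is a Nemytskii (superposition) operator, and such operators are essentially never real analytic between $L^2$ spaces over a nonatomic measure space unless the inner function is affine---so unless $v$ is affine in $\theta$ the power series in $\varPhi-\varTheta$ you aim to exhibit does not converge in the $L^2$-norm. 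The a priori $L^\infty$-bounds you invoke (\cref{lem:Linf_crit_pt,lem:supp_bound}) control only the critical point $\varTheta$ and the state $\mu$, not a generic $\varPhi$ that is merely $L^2$-close to $\varTheta$, so they do not repair the estimate. The paper circumvents this obstruction precisely by proving analyticity only on $L^\infty(0,1;L^\infty(\Omega;\R^m))$ (\cref{prop:Analytic_J} is restricted to that space, and the norm estimate \eqref{eq:target_Analyticity} is taken in $\mathcal{B}^N(L^\infty)$, not $\mathcal{B}^N(L^2)$), and then deploying the two-space \L{}ojasiewicz--Simon scheme of \cite{Feireisl2000,AKAGI20192663}: the Hessian is Fredholm on the Hilbert space $\varThetaSpace$, analyticity and the implicit function theorem live on the Banach space $L^\infty(0,1;L^\infty(\Omega;\R^m))$, and one must therefore prove that $\mathcal{L}_\varTheta+P$ is a bounded bijection on \emph{both} $L^2$ (\cref{claim:bij_L^2}) and $L^\infty$ (\cref{claim:bij_L^inf}). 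Your proposal has no counterpart to \cref{claim:bij_L^inf}; without it the Lyapunov--Schmidt map $g$ cannot be constructed as an analytic map, and the reduced functional $\Lambda$ cannot be shown to be real analytic on $N$. To fix the proposal, replace the ambient space for analyticity by $L^\infty(0,1;L^\infty(\Omega;\R^m))$, carry out the Volterra estimates with the $L^\infty$ operator norm as in \eqref{eq:target_Analyticity}, add the $L^\infty$-invertibility of $\mathcal{L}_\varTheta+P$ (which relies on the explicit form \eqref{eq:noncpt_Hess} of $\pi(\mathcal{L}_\varTheta)$ and \cref{assump:Hesse_NN}), and then run the Lyapunov--Schmidt reduction between the two spaces as in \cite{Feireisl2000,AKAGI20192663}.
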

\begin{lemma}[Fixed-center realization of measurable couplings]\label{lem:fixed_center}
    Let $\overline{\eta}$, $\zeta\in\Ltwotheta$ and choose Borel representatives of $t\mapsto\overline{\eta}_t$ and $t\mapsto\zeta_t$.
    By \cref{lem:existence_lift} applied to the first coordinate, there is a Borel map $\vartheta\colon I\times I\to\R^m$, depending only on $\overline{\eta}$, with $\vartheta(t,\bullet)_\#\Leb=\overline{\eta}_t$ for a.e.~$t$; set $\varTheta_t(u,v)\coloneqq\vartheta(t,u)$, so that $\Law\varTheta_t=\overline{\eta}_t$.
    Then for every Borel family $t\mapsto\gamma_t\in\Gamma(\overline{\eta}_t,\zeta_t)$ there exists $\varPhi\in\Hilbertsp$ with $\Law(\varTheta_t,\varPhi_t)=\gamma_t$ for a.e.~$t$; in particular $\Law\varPhi_t=\zeta_t$.
    If each $\gamma_t$ is optimal, then $\norm{\varPhi-\varTheta}_{\Hilbertsp}^2=\int_0^1 W_2(\overline{\eta}_t,\zeta_t)^2\dd{t}$.
    The center $\varTheta$ depends only on $\overline{\eta}$, not on $\zeta$ or $\gamma$.
\end{lemma}
\begin{proof}
    Disintegrating $\dd{t}\gamma_t$ with respect to its $(t,\theta)$-marginal $\dd{t}\overline{\eta}_t$ by \cite[Theorem 5.3.1]{AGS} gives a Borel kernel $(t,\theta)\mapsto\kappa(t,\theta,\bullet)$ with $\gamma_t(\dd\theta,\dd\varphi)=\overline{\eta}_t(\dd\theta)\kappa(t,\theta,\dd\varphi)$ for $\dd{t}\overline{\eta}_t(\dd\theta)$-a.e.\ $(t,\theta)$.
    By the kernel representation \cite[Lemma 4.22]{Kallenberg2021} there is a Borel map $F\colon I\times\R^m\times I\to\R^m$ with $F(t,\theta,\bullet)_\#\Leb=\kappa(t,\theta,\bullet)$ for $\dd{t}\overline{\eta}_t(\dd\theta)$-a.e.\ $(t,\theta)$, and we set $\varPhi_t(u,v)\coloneqq F(t,\vartheta(t,u),v)$.
    For a bounded Borel function $f\colon\R^m\times\R^m\to\R$,
    \begin{align}
        \Expect\qty[f(\varTheta_t,\varPhi_t)]&=\int_0^1\int_0^1 f\qty(\vartheta(t,u),F(t,\vartheta(t,u),v))\dd{v\dd u}\\
        &=\int_{\R^m}\int_{\R^m}f(\theta,\varphi)~\kappa(t,\theta,\dd\varphi)\overline{\eta}_t(\dd\theta)=\int f\dd{\gamma_t},
    \end{align}
    so $\Law(\varTheta_t,\varPhi_t)=\gamma_t$; applying this identity to $f_N(\theta,\varphi)=\min\{\abs{\varphi}^2,N\}$ and letting $N\to\infty$ by monotone convergence gives $\varPhi\in\Hilbertsp$.
    When each $\gamma_t$ is optimal, the same truncation of $\abs{\theta-\varphi}^2$ gives $\norm{\varPhi-\varTheta}_{\Hilbertsp}^2=\int_0^1\int\abs{\theta-\varphi}^2\dd{\gamma_t}\dd{t}=\int_0^1 W_2(\overline{\eta}_t,\zeta_t)^2\dd{t}$.
\end{proof}

Assuming \cref{thm:grad_ineq_Jtilde} for the moment, we prove \cref{thm:grad_ineq} using the calculus established in \cref{sec:calc}.
\begin{proof}[Proof of \cref{thm:grad_ineq}]
    Let $r>0$, $C>0$ and $\alpha\in(0,1/2]$ be the constants of \cref{thm:grad_ineq_Jtilde} at the critical point produced below.
    By \cref{lem:Linf_crit_pt}, there is $R_0>0$ with $\supp\overline{\eta}_t\subset\Set{\theta\in\R^m|\abs{\theta}\le R_0}$ for a.e.~$t$; since the center $\vartheta(t,\bullet)$ of \cref{lem:fixed_center} for $\overline{\eta}$ realizes $\overline{\eta}_t$, its values lie in $\supp\overline{\eta}_t$ for a.e.~$(t,u)$, so $\varTheta_\ast\coloneqq\varTheta$ lies in $\strongsp$ with $\Law\varTheta_{\ast,t}=\overline{\eta}_t$.
    By \cref{cor:slope,prop:structure_Ldiff}, $\norm{D\widetilde{J}(\varTheta_\ast)}_{\Hilbertsp}=\norm{\nabla J\qty[\overline{\eta}]}_{L^2(I;L^2(\overline{\eta}_t))}=\abs{\partial J}(\overline{\eta})=0$; hence $\varTheta_\ast$ is a critical point of $\widetilde{J}$ with $\widetilde{J}(\varTheta_\ast)=J(\overline{\eta})\le E$.

    Let $\zeta\in\metricsp$ satisfy $W_2(\zeta,\overline{\eta})_{L^2(I)}<r$.
    By \cite[Corollary 5.22]{villani_oldnew} and a measurable selection, there is a Borel family $t\mapsto\gamma_t\in\OptPlan(\overline{\eta}_t,\zeta_t)$, and \cref{lem:fixed_center} applied to this $\gamma$ with the same center $\varTheta_\ast$ yields $\varPhi\in\Hilbertsp$ with $\Law\varPhi_t=\zeta_t$ and
    \(
        \norm{\varPhi-\varTheta_\ast}_{\Hilbertsp}^2=\int_0^1 W_2(\overline{\eta}_t,\zeta_t)^2\dd{t}=W_2(\zeta,\overline{\eta})_{L^2(I)}^2<r^2.
    \)
    \cref{thm:grad_ineq_Jtilde} at $\varTheta_\ast$ then gives
    \(
        \abs{\widetilde{J}(\varPhi)-\widetilde{J}(\varTheta_\ast)}^{1-\alpha}\leq C\norm{D\widetilde{J}(\varPhi)}_{\Hilbertsp}.
    \)
    By \eqref{eq:def_Jtil}, we have $\widetilde{J}(\varPhi)=J(\zeta)$ and $\widetilde{J}(\varTheta_\ast)=J(\overline{\eta})$.
    Again by \cref{cor:slope,prop:structure_Ldiff},
    \[
        \norm{D\widetilde{J}(\varPhi)}_{\Hilbertsp}^2=\int_0^1\Expect\qty[\abs{\nabla J\qty[\zeta](t,\varPhi_t)}^2]\dd{t}=\norm{\nabla J\qty[\zeta]}_{L^2(I;L^2(\zeta_t))}^2=\abs{\partial J}^2(\zeta).
    \]
    Therefore $\abs{J(\zeta)-J(\overline{\eta})}^{1-\alpha}\leq C\abs{\partial J}(\zeta)$, which proves the theorem.
\end{proof}
In the remainder of this section, we prove \cref{thm:grad_ineq_Jtilde}.

\subsection{The second-variation operator}\label{subsec:second_variation}
For $\varTheta$, $\varPhi\in\Hilbertsp$, the first variation of the state is determined by the first-variation formula \eqref{eq:differential_formula} of \cref{lem:linearizedODE},
\begin{equation}\label{eq:first_variation_op}
    Y_t^{\Law(\varTheta,\varPhi)}(x)=\int_0^t\Expect\qty[M^{\Law\varTheta}(x;t,u)\qty(\mdif{\theta}v)\qty(X_u^{\Law\varTheta}(x),\varTheta_u)\qty[\varPhi_u]]\dd{u},
\end{equation}
where $M^{\Law\varTheta}(x;t,s)\in\R^{d\times d}$ is the fundamental solution from $s$ to $t$ of the linearized ODE $\dot{z}_t=\Expect[(\mdif{x}v)(X_t^{\Law\varTheta}(x),\varTheta_t)]z_t$, and $Y^{\Law(\varTheta,\varPhi)}\in C(I\times K_X;\R^d)$ depends linearly on $\varPhi\in\Hilbertsp$.
The Gronwall estimate of \cref{lem:supp_bound} for the flow and \eqref{eq:Gronwall_bound_M} for the fundamental solution, together with $\int_0^1\Expect[\abs{\varTheta_t}^2]\dd{t}=\norm{\varTheta}^2_{\Hilbertsp}$ and $\int_0^1\Expect[\abs{\varTheta_t}^p]\dd{t}\le1+\norm{\varTheta}^2_{\Hilbertsp}$ for $p<2$, yield for every $R>0$
\begin{equation}\label{eq:uniform_Hball_bounds}
\begin{aligned}
    \adjustlimits\sup_{\norm{\varTheta}_{\Hilbertsp}\le R} \sup_{t\in I,x\in K_X}\abs{X_t^{\Law\varTheta}(x)}<\infty,&&
    \adjustlimits\sup_{\norm{\varTheta}_{\Hilbertsp}\le R} \sup_{0\le s\le t\le1,x\in K_X}\norm{M^{\Law\varTheta}(x;t,s)}_{\opn}<\infty.
\end{aligned}
\end{equation}
Since $\abs{(\mdif{\theta}v)(x,\theta)}\le C_0(1+\abs{\theta}^{(p-1)_+})(1+\abs{x})$ by \eqref{eq:Cth} with $(p-1)_+<1$, the Cauchy--Schwarz inequality and \eqref{eq:uniform_Hball_bounds} give $\norm{Y^{\Law(\varTheta,\varPhi)}}_{C(I\times K_X)}\le C_R\norm{\varPhi}_{\Hilbertsp}$ for $\norm{\varTheta}_{\Hilbertsp}\le R$.

By a computation as in \cite[Proposition 5.4, Remark 2.17]{Scagliotti2023}, the second variation of the loss part $\widetilde{L}(\varTheta)=\int_{\XtimesY}\ell(X_1^{\Law\varTheta}(x),y)\dd{\mu_0(x,y)}$ at $\varTheta$ is, for $\varPsi$, $\varPhi\in\Hilbertsp$,
\begin{equation}
\begin{aligned}
    D^2_\varTheta\widetilde{L}(\varTheta)\qty[\varPsi,\varPhi]=\int_{\XtimesY}\biggl(&\Hessian_x\ell(X_1^{\Law\varTheta}(x),y)\qty[D_\varTheta X_1^{\Law\varTheta}\varPsi,D_\varTheta X_1^{\Law\varTheta}\varPhi]\\
    &{}+\nabla_x\ell(X_1^{\Law\varTheta}(x),y)\cdot D^2_\varTheta X_1^{\Law\varTheta}\qty[\varPsi,\varPhi]\biggr)\dd{\mu_0(x,y)},
\end{aligned}
    \label{eq:noncptop1}
\end{equation}
where $D_\varTheta X_1^{\Law\varTheta}\varPhi=Y_1^{\Law(\varTheta,\varPhi)}$ and the second variation of the terminal state is
\begin{align}
\begin{aligned}
    D^2_\varTheta X^{\Law\varTheta}_1\qty[\varPsi,\varPhi]
    =\int_0^1\Expect\biggl[
    M^{\Law\varTheta}(x;1,t)\biggl(
    &\pdv[2]{v}{x}\qty(X_t^{\Law\varTheta}(x),\varTheta_t)\qty[Y_t^{\Law(\varTheta,\varPsi)},Y_t^{\Law(\varTheta,\varPhi)}] \\
     + &\pdv[2]{v}{x}{\theta}\qty(X_t^{\Law\varTheta}(x),\varTheta_t)\qty[Y_t^{\Law(\varTheta,\varPsi)},\varPhi_t] \\
     + &\pdv[2]{v}{\theta}{x}\qty(X_t^{\Law\varTheta}(x),\varTheta_t)\qty[\varPsi_t,Y_t^{\Law(\varTheta,\varPhi)}] \\
     + &\pdv[2]{v}{\theta}\qty(X_t^{\Law\varTheta}(x),\varTheta_t)
    \qty[\varPsi_t,\varPhi_t]\biggr)\biggr]\dd{t}.
\end{aligned}
\label{eq:D^2X}
\end{align}
With the regularization term in \eqref{eq:def_Jtil} added, the second variation of $\widetilde{J}$ is a bounded symmetric bilinear form on $\Hilbertsp$, and it induces the bounded self-adjoint \emph{linewise second-variation operator} $A_\varTheta+K_\varTheta$ determined by
\begin{equation}\label{eq:def_secondvariation}
    \la(A_\varTheta+K_\varTheta)\varPsi,\varPhi\ra_{\Hilbertsp}=D^2_\varTheta\widetilde{J}(\varTheta)\qty[\varPsi,\varPhi]
\end{equation}
for $\varPsi$, $\varPhi\in\Hilbertsp$, where the principal part $A_\varTheta$ is the multiplication operator arising from the last term of \eqref{eq:D^2X}.

With the costate $\varpi_\varTheta(t,x,y)\coloneqq\nabla_x\ell(X_1^{\Law\varTheta}(x),y)^\top M^{\Law\varTheta}(x;1,t)\in\R^{1\times d}$ and the symmetric operator given by
\(
    \la q,\pdv[2]{v}{\theta}(z,\theta)\ra_{\R^d}\coloneqq\sum_{i=1}^d q_i\pdv[2]{v_i}{\theta}(z,\theta)
\)
for $q\in\R^{1\times d}$ on $\R^m$, the principal part $A_\varTheta$ is given by
\begin{equation}
\begin{aligned}
    \qty(A_\varTheta\varPhi)_t(\omega)
    ={}&\biggl(\epsilon\Id_m\\
    &{}+\int_{\XtimesY}\la\varpi_\varTheta(t,x,y),\pdv[2]{v}{\theta}\qty(X_t^{\Law\varTheta}(x),\varTheta_t(\omega))\ra_{\R^d}\dd{\mu_0(x,y)}\biggr)\varPhi_t(\omega).
\end{aligned}
\label{eq:noncpt_Hess}
\end{equation}
The compact part $K_\varTheta$ is the operator induced by the terminal loss in \eqref{eq:noncptop1} together with the first three terms of \eqref{eq:D^2X}.
The operator $A_\varTheta+K_\varTheta$ represents the second variation of $\widetilde{J}$ along the finite-dimensional affine subspaces of $\Hilbertsp$ spanned by line segments and two-dimensional planes through $\varTheta$.
In contrast with \cite{yamamoto2025hessian}, where the multiplication part of the second variation vanishes at absolutely continuous critical points, our \(A_\varTheta\) is coercive, the parameter laws may be atomic, and \(K_\varTheta\) is a compact, possibly indefinite perturbation.

\begin{proposition}[Stability of the state and its variations]\label{prop:state_stability}
    Let $\varTheta_n\to\varTheta$ in $\Hilbertsp$. Then
    \begin{align}
        X_t^{\Law\varTheta_n}(x)&\to X_t^{\Law\varTheta}(x)&&\text{uniformly for }t\in I,\ x\in K_X,\\
        M^{\Law\varTheta_n}(x;t,s)&\to M^{\Law\varTheta}(x;t,s)&&\text{uniformly for }0\le s\le t\le1,\ x\in K_X,
    \end{align}
    and $\sup_{\norm{\varPhi}_{\Hilbertsp}\le1}\norm{Y^{\Law(\varTheta_n,\varPhi)}-Y^{\Law(\varTheta,\varPhi)}}_{C(I\times K_X)}\to0$.
\end{proposition}
\begin{proof}
    Subtracting the equations $X_t^{\Law\varTheta}(x)=x+\int_0^t\Expect[v(X_s^{\Law\varTheta}(x),\varTheta_s)]\dd{s}$ for $\varTheta_n$ and $\varTheta$, and using $\abs{(\mdif{x}v)(x,\theta)}\le C_x(1+\abs{\theta}^2)$ from \eqref{eq:Cx} to control the Lipschitz dependence on the state, the Gronwall inequality gives
    \begin{align}
        &\sup_{x\in K_X}\abs{X_t^{\Law\varTheta_n}(x)-X_t^{\Law\varTheta}(x)}\\
        \le{}& C_R\int_0^1\sup_{x\in K_X}\Expect\qty[\abs{v\qty(X_s^{\Law\varTheta}(x),(\varTheta_n)_s)-v\qty(X_s^{\Law\varTheta}(x),\varTheta_s)}]\dd{s}.
    \end{align}
    The right-hand side tends to $0$ because $\varTheta_n\to\varTheta$ in $\Hilbertsp$ and $\abs{(\mdif{\theta}v)(x,\theta)}\le C_0(1+\abs{\theta}^{(p-1)_+})(1+\abs{x})$ make $s\mapsto\sup_{x\in K_X}\Expect[\abs{v(X_s^{\Law\varTheta}(x),(\varTheta_n)_s)-v(X_s^{\Law\varTheta}(x),\varTheta_s)}]$ converge to $0$ in $L^1(I)$; this proves the first convergence.
    The fundamental solution $M^{\Law\varTheta}$ solves a linear ODE with coefficient $\Expect[(\mdif{x}v)(X_t^{\Law\varTheta}(x),\varTheta_t)]$, which converges in $L^1(I)$ uniformly in $x\in K_X$ by the first convergence together with $\abs{(\pdv[2]{v}{x})(x,\theta)}\le C_x(1+\abs{\theta}^2)$ and $\abs{(\pdv[2]{v}{x}{\theta})(x,\theta)}\le C_x(1+\abs{\theta})(1+\abs{x})$ from \eqref{eq:Cx}; the Gronwall inequality for linear ODEs then yields the uniform convergence of $M^{\Law\varTheta_n}$.
    Finally, in \eqref{eq:first_variation_op} the coefficient converges uniformly in $x\in K_X$,
    \[
        \norm{\sup_{x\in K_X}\abs{(\mdif{\theta}v)(X_t^{\Law\varTheta_n}(x),(\varTheta_n)_t)-(\mdif{\theta}v)(X_t^{\Law\varTheta}(x),\varTheta_t)}}_{L^2(I\times\Omega)}\to0
    \]
    by \eqref{eq:Cx}, \eqref{eq:Cthth} and the convergence of $X^{\Law\varTheta_n}$; combined with the uniform convergence of $M^{\Law\varTheta_n}$ and the bound $\norm{Y^{\Law(\varTheta,\varPhi)}}_{C(I\times K_X)}\le C_R\norm{\varPhi}_{\Hilbertsp}$, this gives
    \[
        \norm{Y^{\Law(\varTheta_n,\varPhi)}-Y^{\Law(\varTheta,\varPhi)}}_{C(I\times K_X)}\le\omega_n\norm{\varPhi}_{\Hilbertsp}
    \]
    with $\omega_n\to0$ independent of $\varPhi$, which is the last convergence.
\end{proof}

To analyze $K_\varTheta$ we introduce the weight $w_\varTheta(t)\coloneqq1+\Expect[\abs{\varTheta_t}^2]\in L^1(I)$ and the second-order coefficients
\begin{align}
    G^{xx}_\varTheta(t,\omega,x,y)&\coloneqq\varpi_\varTheta(t,x,y)\qty(\pdv[2]{v}{x})(X_t^{\Law\varTheta}(x),\varTheta_t(\omega)),\\
    G^{x\theta}_\varTheta(t,\omega,x,y)&\coloneqq\varpi_\varTheta(t,x,y)\qty(\pdv[2]{v}{x}{\theta})(X_t^{\Law\varTheta}(x),\varTheta_t(\omega)),
\end{align}
which belong to $L^1(I\times\Omega\times\supp\mu_0)$ and $L^2(I\times\Omega\times\supp\mu_0)$, respectively, by \eqref{eq:uniform_Hball_bounds} and \eqref{eq:Cx}.
\begin{proposition}[Properties of $K_\varTheta$]\label{prop:K_properties}
    For each $\varTheta\in\Hilbertsp$ and $R>0$ with $\norm{\varTheta}_{\Hilbertsp}\le R$, the operator $K_\varTheta\colon\Hilbertsp\to\Hilbertsp$ is bounded and self-adjoint with $\norm{K_\varTheta}_{\Bcal(\Hilbertsp)}\le C_R$, and it is compact.
    If $\varTheta\in\strongsp$, then $K_\varTheta\colon\Hilbertsp\to\strongsp$ is bounded.
    Moreover, $\varTheta_n\to\varTheta$ in $\Hilbertsp$ implies $\norm{K_{\varTheta_n}-K_\varTheta}_{\Bcal(\Hilbertsp)}\to0$.
\end{proposition}
\begin{proof}
    \textbf{Boundedness and self-adjointness.}
    By \eqref{eq:uniform_Hball_bounds}, the costate $\varpi_\varTheta$ is bounded uniformly on $I\times\supp\mu_0$ for $\norm{\varTheta}_{\Hilbertsp}\le R$.
    Pairing the four terms of $K_\varTheta$ with $\varPsi$, $\varPhi\in\Hilbertsp$ and using \eqref{eq:Cx} and $\norm{Y^{\Law(\varTheta,\varPhi)}}_{C(I\times K_X)}\le C_R\norm{\varPhi}_{\Hilbertsp}$, we close the term carrying $\pdv[2]{v}{x}$ through the second moment $\int_0^1\Expect[1+\abs{\varTheta_t}^2]\dd{t}=1+\norm{\varTheta}^2_{\Hilbertsp}$ and obtain $\abs{\la\varPsi,K_\varTheta\varPhi\ra}\le C_R\norm{\varPsi}_{\Hilbertsp}\norm{\varPhi}_{\Hilbertsp}$; hence $K_\varTheta\in\Bcal(\Hilbertsp)$ with $\norm{K_\varTheta}\le C_R$. The symmetry of the mixed second derivatives of $v$ makes the bilinear form symmetric.

    \textbf{Compactness.}
    Let $T_{\varTheta,1}\colon\Hilbertsp\to L^2(\supp\mu_0;\R^d)$, $T_{\varTheta,1}\varPhi\coloneqq Y^{\Law(\varTheta,\varPhi)}(1,\bullet)$, and $T_{\varTheta,w}\colon\Hilbertsp\to L^2(I\times\supp\mu_0;\R^d)$, $(T_{\varTheta,w}\varPhi)(t,\bullet)\coloneqq w_\varTheta(t)^{\nicefrac{1}{2}}Y^{\Law(\varTheta,\varPhi)}(t,\bullet)$.
    By \eqref{eq:first_variation_op}, these are Hilbert--Schmidt operators with kernels over $(u,\omega)\in I\times\Omega$
    \begin{align}
        &M^{\Law\varTheta}(x;1,u)(\mdif{\theta}v)(X_u^{\Law\varTheta}(x),\varTheta_u(\omega)),\\
        &\mathbf{1}_{\{u\le t\}}w_\varTheta(t)^{1/2}M^{\Law\varTheta}(x;t,u)(\mdif{\theta}v)(X_u^{\Law\varTheta}(x),\varTheta_u(\omega)),
    \end{align}
    respectively (the expectation in \eqref{eq:first_variation_op} is the pairing with $\varPhi$, not part of the kernel); their Hilbert--Schmidt norms are finite because $\int_0^1 w_\varTheta(t)\dd{t}<\infty$ and $\int_0^1\Expect[(1+\abs{\varTheta_u}^{(p-1)_+})^2]\dd{u}<\infty$ as $2(p-1)_+<2$.
    Define the bounded multiplication operators $H_\varTheta\coloneqq\Hessian_x\ell(X_1^{\Law\varTheta},y)$ on $L^2(\supp\mu_0;\R^d)$ and $S_\varTheta$ on $L^2(I\times\supp\mu_0;\R^d)$, the latter acting by the pointwise symmetric form
    \[
        \la S_\varTheta(t,x,y)a,b\ra\coloneqq w_\varTheta(t)^{-1}\Expect\qty[\varpi_\varTheta(t,x,y)\qty(\pdv[2]{v}{x})(X_t^{\Law\varTheta}(x),\varTheta_t)[a,b]],
    \]
    and the bounded operator $R_\varTheta\colon\Hilbertsp\to L^2(I\times\supp\mu_0;\R^d)$ by
    \[
        (R_\varTheta\varPhi)(t,x,y)\coloneqq w_\varTheta(t)^{-1/2}\Expect\qty[\varpi_\varTheta(t,x,y)\qty(\pdv[2]{v}{x}{\theta})(X_t^{\Law\varTheta}(x),\varTheta_t)[\bullet,\varPhi_t]].
    \]
    Here $\abs{S_\varTheta}\le C_R$ and $\norm{R_\varTheta}\le C_R$, since $\Expect[\abs{\varTheta_t}^2]\le w_\varTheta(t)$ and, by Cauchy--Schwarz, $\Expect[\abs{(\pdv[2]{v}{x}{\theta})\varPhi_t}]\le C w_\varTheta(t)^{\nicefrac{1}{2}}\Expect[\abs{\varPhi_t}^2]^{1/2}$.
    Then the four terms of $K_\varTheta$ factorize as
    \begin{equation}\label{eq:K_factorization}
        K_\varTheta=T_{\varTheta,1}^\ast H_\varTheta T_{\varTheta,1}+T_{\varTheta,w}^\ast S_\varTheta T_{\varTheta,w}+R_\varTheta^\ast T_{\varTheta,w}+T_{\varTheta,w}^\ast R_\varTheta,
    \end{equation}
    each summand being a Hilbert--Schmidt operator composed with bounded operators, hence compact, as in \cite[Proposition 2.21]{Scagliotti2023}.

    \textbf{Regularization.}
    If $\varTheta\in\strongsp$, then all coefficients above are bounded, so
    \begin{align}
        \abs{Y^{\Law(\varTheta,\varPhi)}(t,x)}\le C\norm{\varPhi}_{L^1(I\times\Omega)},&&
        \abs{\la\varPsi,K_\varTheta\varPhi\ra}\le C\norm{\varPsi}_{\Hilbertsp}\norm{\varPhi}_{L^1(I\times\Omega)}
    \end{align}
    for all $\varPsi$, $\varPhi\in\Hilbertsp$.
    Because $K_\varTheta$ is self-adjoint, $\la K_\varTheta\varPsi,\varPhi\ra=\la\varPsi,K_\varTheta\varPhi\ra$, so $\abs{\la K_\varTheta\varPsi,\varPhi\ra}\le C\norm{\varPsi}_{\Hilbertsp}\norm{\varPhi}_{L^1(I\times\Omega)}$.
    Since $\Hilbertsp=L^2(I\times\Omega;\R^m)$ is dense in $L^1(I\times\Omega;\R^m)$, the functional $\varPhi\mapsto\la K_\varTheta\varPsi,\varPhi\ra$ extends to a bounded functional on $L^1(I\times\Omega;\R^m)$, and the duality $(L^1)^\ast=L^\infty$ shows $K_\varTheta\varPsi\in\strongsp$ with $\norm{K_\varTheta\varPsi}_{\strongsp}\le C\norm{\varPsi}_{\Hilbertsp}$, i.e., $K_\varTheta\colon\Hilbertsp\to\strongsp$ is bounded.

    \textbf{Operator-norm continuity.}
    Comparing the bilinear forms of $K_{\varTheta_n}$ and $K_\varTheta$ directly, term by term through \eqref{eq:K_factorization}, gives
    \begin{align}
        \norm{K_{\varTheta_n}-K_\varTheta}_{\Bcal(\Hilbertsp)}
        \le{}&C_R\left(\sup_{\norm{\varPhi}_{\Hilbertsp}\le1}\norm{Y^{\Law(\varTheta_n,\varPhi)}-Y^{\Law(\varTheta,\varPhi)}}_{C(I\times K_X)}\right.\\
        &\quad{}+\sup_{(x,y)\in\supp\mu_0}\abs{\Hessian_x\ell(X_1^{\Law\varTheta_n}(x),y)-\Hessian_x\ell(X_1^{\Law\varTheta}(x),y)}\\
        &\quad{}+\norm{G^{xx}_{\varTheta_n}-G^{xx}_\varTheta}_{L^1(I\times\Omega\times\supp\mu_0)}\\
        &\quad{}+\norm{G^{x\theta}_{\varTheta_n}-G^{x\theta}_\varTheta}_{L^2(I\times\Omega\times\supp\mu_0)}\left.\vphantom{\sup_{\norm{\varPhi}_{\Hilbertsp}\le1}\norm{Y^{\Law(\varTheta_n,\varPhi)}-Y^{\Law(\varTheta,\varPhi)}}_{C(I\times K_X)}}\right).
    \end{align}
    Each term tends to $0$: the first by \cref{prop:state_stability}; the second because $X_1^{\Law\varTheta_n}\to X_1^{\Law\varTheta}$ uniformly on $K_X$ and $\Hessian_x\ell$ is continuous; and the last two because $\varpi_{\varTheta_n}\to\varpi_\varTheta$ uniformly and, by \cref{prop:state_stability} with \eqref{eq:Cx},
    \begin{align}
        \qty(\pdv[2]{v}{x})(X^{\Law\varTheta_n},\varTheta_n)&\to\qty(\pdv[2]{v}{x})(X^{\Law\varTheta},\varTheta)&&\text{in }L^1(I\times\Omega\times\supp\mu_0),\\
        \qty(\pdv[2]{v}{x}{\theta})(X^{\Law\varTheta_n},\varTheta_n)&\to\qty(\pdv[2]{v}{x}{\theta})(X^{\Law\varTheta},\varTheta)&&\text{in }L^2(I\times\Omega\times\supp\mu_0).
    \end{align}
    These two convergences follow from the strong convergence $\varTheta_n\to\varTheta$ in $\Hilbertsp=L^2$ together with the continuity and the growth bounds \eqref{eq:Cx} of $\pdv[2]{v}{x}$ and $\pdv[2]{v}{x}{\theta}$, which make the associated Nemytskii operators continuous, by the Vitali convergence theorem (the required equi-integrability of $1+\abs{(\varTheta_n)_t}^2$ follows from $\varTheta_n\to\varTheta$ in $L^2$).
    Hence $\norm{K_{\varTheta_n}-K_\varTheta}_{\Bcal(\Hilbertsp)}\to0$.
\end{proof}

The principal part $A_\varTheta=\epsilon\Id+b_\varTheta$, where $b_\varTheta$ is the multiplication by the integral in \eqref{eq:noncpt_Hess}, is bounded with $\norm{A_\varTheta}_{\Bcal(\Hilbertsp)}\le C_R$ for $\norm{\varTheta}_{\Hilbertsp}\le R$ by \eqref{eq:uniform_Hball_bounds}; together with \cref{prop:K_properties} this gives $A_\varTheta+K_\varTheta\in\Bcal(\Hilbertsp)$ with $\norm{A_\varTheta+K_\varTheta}_{\Bcal(\Hilbertsp)}\le C_R$.
The decomposition \eqref{eq:def_secondvariation} yields the line-segment chain rule underlying the gradient inequality.
\begin{proposition}[Line-segment chain rule]\label{prop:H1}
    For $\varTheta^0$, $\varTheta^1\in\Hilbertsp$, set $\varTheta^r\coloneqq\varTheta^0+r(\varTheta^1-\varTheta^0)$ and $h\coloneqq\varTheta^1-\varTheta^0$.
    Then $r\mapsto D\widetilde{J}(\varTheta^r)$ is of class $C^1(I;\Hilbertsp)$ with $\dv{r}D\widetilde{J}(\varTheta^r)=(A_{\varTheta^r}+K_{\varTheta^r})h$, and
    \begin{equation}\label{eq:H1}
        D\widetilde{J}(\varTheta^1)-D\widetilde{J}(\varTheta^0)=\int_0^1(A_{\varTheta^r}+K_{\varTheta^r})h\dd{r}
    \end{equation}
    as a Bochner integral in $\Hilbertsp$.
    In particular, $D\widetilde{J}$ is Lipschitz on every bounded subset.
\end{proposition}
\begin{proof}
    Fix $R\ge\norm{\varTheta^0}_{\Hilbertsp}+\norm{\varTheta^1}_{\Hilbertsp}$, so $\norm{\varTheta^r}_{\Hilbertsp}\le R$ for $r\in I$, and let $\varPsi\in\Hilbertsp$.
    On the two-dimensional affine plane $\varTheta^{r,\tau}\coloneqq\varTheta^r+\tau\varPsi$, the state $X^{\Law\varTheta^{r,\tau}}$ and its first and second variations solve the linear ODEs \eqref{eq:differential_formula} and \eqref{eq:D^2X}.
    The coefficients and inhomogeneous terms of these equations depend continuously on $(r,\tau)$ in $L^1$, by the continuity and the growth bounds \eqref{eq:Cth}, \eqref{eq:Cx}, \eqref{eq:Cthth} together with the uniform bounds \eqref{eq:uniform_Hball_bounds} and $(p-1)_+<1$; continuous dependence for linear ODEs then makes the first and second variations of the state continuous in $(r,\tau)$, and composing with $\nabla_x\ell$ and $\Hessian_x\ell$, continuous on the reachable compact set, yields that $(r,\tau)\mapsto\widetilde{J}(\varTheta^{r,\tau})$ is of class $C^2$ on this plane.
    Consequently $\eval{\partial_\tau\widetilde{J}(\varTheta^{r,\tau})}_{\tau=0}=\la D\widetilde{J}(\varTheta^r),\varPsi\ra$, and the mixed derivative $\eval{\partial_r\partial_\tau\widetilde{J}(\varTheta^{r,\tau})}_{\tau=0}$ equals the bilinear form induced by \eqref{eq:D^2X} and the regularization term evaluated at $(h,\varPsi)$, that is, $\la(A_{\varTheta^r}+K_{\varTheta^r})h,\varPsi\ra$.
    By dominated convergence for $A_{\varTheta^r}$ and \cref{prop:K_properties} for $K_{\varTheta^r}$, the map $r\mapsto(A_{\varTheta^r}+K_{\varTheta^r})h$ is continuous in $\Hilbertsp$ and bounded by $C_R\norm{h}_{\Hilbertsp}$, hence Bochner integrable.
    The equality of mixed partials on the plane gives $\dv{r}\la D\widetilde{J}(\varTheta^r),\varPsi\ra=\la(A_{\varTheta^r}+K_{\varTheta^r})h,\varPsi\ra$; integrating in $r$ and varying $\varPsi$ yields \eqref{eq:H1}, and $\norm{D\widetilde{J}(\varTheta^1)-D\widetilde{J}(\varTheta^0)}_{\Hilbertsp}\le C_R\norm{h}_{\Hilbertsp}$.
\end{proof}

The principal part is coercive up to the margin \eqref{eq:coercivity_margin}.
\begin{proposition}[Principal coercivity]\label{prop:H2}
    For every $R>0$ and $\varTheta\in\Hilbertsp$ with $\norm{\varTheta}_{\Hilbertsp}\le R$,
    it holds that
    \(
        \la A_\varTheta\varPhi,\varPhi\ra_{\Hilbertsp}\ge a_R\norm{\varPhi}_{\Hilbertsp}^2
    \)
    for $\varPhi\in\Hilbertsp$.
\end{proposition}
\begin{proof}
    By \eqref{eq:noncpt_Hess}, we have
    \(
        \la A_\varTheta\varPhi,\varPhi\ra_{\Hilbertsp}=\epsilon\norm{\varPhi}_{\Hilbertsp}^2+\int_0^1\Expect\qty[\la b_\varTheta(t,\bullet)\varPhi_t,\varPhi_t\ra]\dd{t}.
    \)
    The bound $\abs{(\pdv[2]{v}{\theta})(x,\theta)}\le C_{\theta\theta}(1+\abs{x}^2)$ from \eqref{eq:Cthth} gives, for a.e.~$(t,\omega)$ and all $\norm{\varTheta}_{\Hilbertsp}\le R$,
    \[
    \begin{aligned}
        \norm{b_\varTheta(t,\omega)}_{\opn}&\le C_{\theta\theta}\int_{\XtimesY}\abs{\varpi_\varTheta(t,x,y)}\qty(1+\abs{X_t^{\Law\varTheta}(x)}^2)\dd{\mu_0(x,y)}\\
        &=C_{\theta\theta}\int_{\XtimesY}\abs{\nabla_x\varphi_t^{\Law\varTheta}(z,y)}\qty(1+\abs{z}^2)\dd{\mu_t^{\Law\varTheta}(z,y)}\le\epsilon-a_R.
    \end{aligned}
    \]
    Here the equality follows from $\mu_t^{\Law\varTheta}=(X_t^{\Law\varTheta}\times\Id_\Y)_\#\mu_0$ and $\varpi_\varTheta(t,x,y)^\top=\nabla_x\varphi_t^{\Law\varTheta}(X_t^{\Law\varTheta}(x),y)$, which is \cref{prop:grad} combined with the cocycle identity for the fundamental solution, and the last inequality is the definition \eqref{eq:coercivity_margin} of $a_R$, since $\norm{\varTheta}_{\Hilbertsp}\le R$ implies $\Law\varTheta\in\Dcal_R$.
    Hence $\la A_\varTheta\varPhi,\varPhi\ra_{\Hilbertsp}\ge\epsilon\norm{\varPhi}_{\Hilbertsp}^2-(\epsilon-a_R)\norm{\varPhi}_{\Hilbertsp}^2
    $.
\end{proof}

\subsection{Strong-space analyticity}
We establish the real analyticity of the lifted gradient $D\widetilde{J}$ on the strong space $\strongsp$.
Note that any critical point $\varTheta^\ast\in\Hilbertsp$ belongs to $\strongsp$ by the same argument as in \cref{lem:Linf_crit_pt}.

\begin{proposition}[Analyticity of the lifted gradient]\label{prop:Analytic_J}
    Let $\varTheta^\ast\in\strongsp$.
    Under \cref{assump:bound_dNN}, there exists $\rho>0$ such that
    \(
        D\widetilde{J}\colon B_{\strongsp}(\varTheta^\ast,\rho)\to\strongsp
    \)
    is real analytic.
    Moreover, there exists $C>0$ such that
    \(
        \norm{D^n(D\widetilde{J})(\varTheta^\ast)}_{\Bcal^n(\strongsp;\strongsp)}\le Cn!\rho^{-n}
    \)
    for $n\in\N$.
\end{proposition}
\begin{proof}
    Since $\varTheta^\ast\in\strongsp$, its essential range is contained in a compact set $K_\theta\subset\R^m$, and by \eqref{eq:uniform_Hball_bounds} the state $X^{\Law\varTheta^\ast}$ ranges over a compact set $K_z\subset\R^d$.

    \textbf{Step 1 (complexification).}
    Since $v$ is real-analytic, it extends holomorphically to $\widehat{v}$ on a complex tube around $K_z\times K_\theta$ in $\C^d\times\C^m$, and by \cref{assump:bound_dNN} there exist a complex neighborhood $U$ of $K_z$ and a continuous extension $\widehat{\ell}\colon U\times\Y\to\C$, holomorphic in its first variable, locally bounded uniformly in $y\in\Y$, and agreeing with $\ell$ on $(U\cap\R^d)\times\Y$.
    By the finite-dimensional Cauchy estimates, the Taylor series of $\widehat{v}$ converges uniformly on a smaller complex tube; hence the Nemytskii operator $u\mapsto\widehat{v}\circ u$ is holomorphic, into the complexification of $L^\infty$, on the open set of $u\in L^\infty$ whose essential range has positive distance from the complement of that tube, and likewise for $\widehat{\ell}$ and for the derivatives of $\widehat{v}$.

    \textbf{Step 2 (holomorphic dependence of the state and costate).}
    Define, on a neighborhood of $(\varTheta^\ast,X^{\Law\varTheta^\ast})$ in the complexifications of $\strongsp$ and $C(I\times K_X;\R^d)$, the map
    \(
        (\varTheta,Z)\mapsto Z(t,x)-x-\int_0^t\Expect\qty[\widehat{v}(Z(s,x),\varTheta_s)]\dd{s},
    \)
    which is holomorphic by Step 1 and vanishes at $(\varTheta^\ast,X^{\Law\varTheta^\ast})$.
    Its partial derivative in $Z$ at this point is the identity minus the forward Volterra integral operator
    \(
        W\mapsto\int_0^t\Expect\qty[(\mdif{z}\widehat{v})(X_s^{\Law\varTheta^\ast}(x),\varTheta^\ast_s)]W(s,x)\dd{s},
    \)
    whose coefficient has operator norm at most some $\beta<\infty$, uniformly for $s\in I$ and $x\in K_X$, by Step 1; since the integration is over $0\le s\le t$, the $n$-fold iterate is an integral over the simplex $\{0\le s_1\le\dots\le s_n\le t\}$ of volume $t^n/n!$, so the $n$-th power of this operator has norm at most $\beta^n/n!$ and the Neumann series inverts the derivative without any smallness condition.
    By the analytic implicit function theorem \cite[Theorem 4.5.4]{Buffoni03}, the equation defining the state has, near $\varTheta^\ast$, a unique solution $Z^\varTheta$ depending holomorphically on $\varTheta$, with $Z^\varTheta=X^{\Law\varTheta}$ for real $\varTheta$.
    Likewise, the costate $\widehat{\varpi}_\varTheta\in C(I\times\supp\mu_0;\C^{1\times d})$, a row vector, solves the backward Volterra equation
    \[
        \widehat{\varpi}_\varTheta(t,x,y)=\nabla_z\widehat{\ell}(Z_1^\varTheta(x),y)^\top+\int_t^1\widehat{\varpi}_\varTheta(s,x,y)\Expect\qty[(\mdif{z}\widehat{v})(Z_s^\varTheta(x),\varTheta_s)]\dd{s};
    \]
    the map $(\varTheta,\varpi)\mapsto\varpi-\nabla_z\widehat{\ell}(Z_1^\varTheta,\bullet)^\top-\int_\bullet^1\varpi(s)\Expect[(\mdif{z}\widehat{v})(Z_s^\varTheta,\varTheta_s)]\dd{s}$ is holomorphic, and its partial derivative in $\varpi$ at $\varTheta^\ast$ is the identity minus a backward Volterra operator, whose integration over $t\le s\le1$ yields the same factorial bound $\beta^n/n!$, hence invertible; so $\varTheta\mapsto\widehat{\varpi}_\varTheta$ is holomorphic near $\varTheta^\ast$, and for real $\varTheta$ it coincides with the costate $\varpi_\varTheta$ of \cref{subsec:second_variation}.

    \textbf{Step 3 (analyticity of the gradient and the factorial estimate).}
    By \cref{prop:grad} and the lifting \eqref{eq:def_Jtil}, for $\dd{\mathbb P\dd t}$-a.e.\ $(t,\omega)$,
    \[
        D\widetilde{J}(\varTheta)_t(\omega)=\epsilon\varTheta_t(\omega)+\int_{\XtimesY}(\mdif{\theta}v)(X_t^{\Law\varTheta}(x),\varTheta_t(\omega))^\top\varpi_\varTheta(t,x,y)^\top\dd{\mu_0(x,y)}.
    \]
    The right-hand side is the restriction to the real subspace of the map sending $\varTheta$ to
    \(
        \epsilon\varTheta_t(\omega)+\int_{\XtimesY}(\mdif{\theta}\widehat{v})(Z_t^\varTheta(x),\varTheta_t(\omega))^\top\widehat{\varpi}_\varTheta(t,x,y)^\top\dd{\mu_0(x,y)},
    \)
    which is holomorphic by Step 2 and hence extends $D\widetilde{J}$ holomorphically; therefore $D\widetilde{J}$ is real-analytic on a ball $B_{\strongsp}(\varTheta^\ast,\rho)$.
    After decreasing $\rho$ if necessary, \cite[Proposition 4.3.4]{Buffoni03} gives the local factorial estimate $\norm{D^n(D\widetilde{J})(\varTheta^\ast)}_{\Bcal^n(\strongsp;\strongsp)}\le Cn!\rho^{-n}$.
\end{proof}

\begin{corollary}\label{cor:analytic_Jtil}
    Under \cref{assump:bound_dNN}, the restriction of $\widetilde{J}$ to $\strongsp$ is real analytic, and at every $\varTheta^\ast\in\strongsp$ the Fr\'{e}chet derivative of $D\widetilde{J}\colon\strongsp\to\strongsp$ is the restriction $(A_{\varTheta^\ast}+K_{\varTheta^\ast})|_{\strongsp}$ of the second-variation operator \eqref{eq:def_secondvariation}.
\end{corollary}
\begin{proof}
    Since $\widetilde{J}(\varTheta^\ast+h)-\widetilde{J}(\varTheta^\ast)=\int_0^1\la D\widetilde{J}(\varTheta^\ast+sh),h\ra_{\Hilbertsp}\dd{s}$ and $D\widetilde{J}$ is real analytic on $\strongsp$ by \cref{prop:Analytic_J}, the lifted energy $\widetilde{J}|_{\strongsp}$ is real analytic.
    For the derivative, by \cref{prop:H1} the difference quotient $s^{-1}(D\widetilde{J}(\varTheta^\ast+s\varPhi)-D\widetilde{J}(\varTheta^\ast))$ converges to $(A_{\varTheta^\ast}+K_{\varTheta^\ast})\varPhi$ in $\Hilbertsp$ as $s\to0$, while by \cref{prop:Analytic_J} it converges to $D(D\widetilde{J})(\varTheta^\ast)\varPhi$ in $\strongsp\hookrightarrow\Hilbertsp$; uniqueness of the limit gives $D(D\widetilde{J})(\varTheta^\ast)=(A_{\varTheta^\ast}+K_{\varTheta^\ast})|_{\strongsp}$.
\end{proof}

\subsection{A nonsmooth \L{}ojasiewicz--Simon reduction}\label{subsec:feireisl}
We prove \cref{thm:grad_ineq_Jtilde} by adapting the nonsmooth \L{}ojasiewicz--Simon reduction of Feireisl, Issard-Roch, and Petzeltov\'{a}~\cite{FEIREISL20041} to a locally nonlinear compact perturbation.
The reduction combines linewise second variations on $H$ with analyticity on the strong space.

\begin{lemma}[Nonsmooth \L{}ojasiewicz--Simon reduction]\label{lem:feireisl}
    Let $H$ be a Hilbert space with identity operator $\Id$, $V\hookrightarrow H$ a continuously and densely embedded Banach space, and $G\in C^1(B_H(0,r_0))$ on a ball $B_H(0,r_0)\subset H$ with $H$-gradient $DG$ and $DG(0)=0$.
    Assume there are bounded self-adjoint operators $A_u,K_u$ on $H$ and a constant $a>0$ such that, for $u,v\in B_H(0,r_0)$,
    \begin{myenum}
        \item\label{enum:F1} $DG(u)-DG(v)=\int_0^1(A_{w_s}+K_{w_s})(u-v)\dd{s}$ along $w_s\coloneqq v+s(u-v)$;
        \item\label{enum:F2} $A_u\ge a\Id$ and $\sup_{\norm{u}_H<r_0}(\norm{A_u}+\norm{K_u})<\infty$;
        \item\label{enum:F3} each $K_u$ is compact and $\norm{K_u-K_0}_{\Bcal(H)}\to0$ as $u\to0$;
        \item\label{enum:F4} $K_0\colon H\to V$ is bounded and $A_0\colon V\to V$ is a bounded isomorphism;
        \item\label{enum:F5} $DG|_V$ is real analytic near $0$ with $D_V(DG|_V)(0)=(A_0+K_0)|_V$.
    \end{myenum}
    Then, there exist $\alpha\in(0,1/2]$, $C>0$ and $r>0$ such that
    \(
        \abs{G(u)-G(0)}^{1-\alpha}\le C\norm{DG(u)}_H
    \)
    for $\norm{u}_H<r$.
\end{lemma}
We prove the above by extending the nonsmooth \L{}ojasiewicz--Simon reduction of \cite[Theorem 5.1]{FEIREISL20041} from a fixed linear compact perturbation to a locally varying family \(K_u\).
\begin{proof}
    \textbf{Step 1 (finite-rank correction).}
    By \ref{enum:F3}, $K_0$ is compact and self-adjoint, so its spectrum is discrete away from $0$ and only finitely many eigenvalues $\kappa_j$ satisfy $\kappa_j<-a/2$; with the corresponding spectral projections $P_j$, set $Q\coloneqq\sum_{\kappa_j<-a/2}(-\kappa_j-a/2)P_j$, a finite-rank, self-adjoint, nonnegative operator with $K_0+Q\ge-(a/2)\Id$.
    Each eigenvector $e_j$ with $\kappa_j\ne0$ satisfies $e_j=\kappa_j^{-1}K_0e_j\in V$ by \ref{enum:F4}, so $N\coloneqq\Ran Q\subset V$ is finite-dimensional and $Q\colon H\to V$ is bounded.

    \textbf{Step 2 (local strong monotonicity).}
    By \ref{enum:F2} and \ref{enum:F3}, there is $r_1\in(0,r_0)$ with $\norm{K_u-K_0}\le a/4$ for $\norm{u}_H<r_1$; setting $c\coloneqq a/4$, we then have $A_u+K_u+Q\ge c\Id$.
    Writing $F\coloneqq DG+Q$, \ref{enum:F1} and the convexity of $B_H(0,r_1)$ give, for $\norm{u}_H,\norm{v}_H<r_1$,
    \begin{align*}
        \la F(u)-F(v),u-v\ra&\ge c\norm{u-v}_H^2,
        &
        \norm{F(u)-F(v)}_H&\le L_F\norm{u-v}_H,
    \end{align*}
    with $L_F\coloneqq\sup_{u}(\norm{A_u}+\norm{K_u})+\norm{Q}$.

    \textbf{Step 3 (Lipschitz inverse on $H$).}
    Fix $\rho<r_1$ and let $\overline{B}_\rho$ be the closed ball of radius $\rho$ in $H$.
    For $\norm{y}_H<c\rho$ and $0<\tau<2c/L_F^2$, the map $T_y(u)\coloneqq P_{\overline{B}_\rho}(u-\tau(F(u)-y))$, where $P_{\overline{B}_\rho}$ denotes the metric projection onto $\overline{B}_\rho$, is a contraction by strong monotonicity and the Lipschitz bound, so it has a unique fixed point $u_y$, which solves the variational inequality $\la F(u_y)-y,w-u_y\ra\ge0$ for $w\in\overline{B}_\rho$; testing with $w=0$ and using $F(0)=DG(0)+Q0=0$ and strong monotonicity give $c\norm{u_y}_H^2\le\la y,u_y\ra$, so $\norm{u_y}_H\le c^{-1}\norm{y}_H<\rho$ and hence $F(u_y)=y$.
    Thus $F$ has a Lipschitz inverse $\Lambda_H\coloneqq F^{-1}\colon B_H(0,\delta)\to B_H(0,\rho)$ with $\delta\coloneqq\nicefrac{c\rho}{2}$ and $\norm{\Lambda_H(y_1)-\Lambda_H(y_2)}_H\le c^{-1}\norm{y_1-y_2}_H$.

    \textbf{Step 4 (analytic inverse on $V$).}
    By \ref{enum:F5} and the finite-rank construction above, $F|_V=DG|_V+Q$ is real analytic with $D_V(F|_V)(0)=A_0+K_0+Q$.
    This operator is self-adjoint with $\la(A_0+K_0+Q)h,h\ra\ge\nicefrac{a}{2}\norm{h}_H^2$, so by the Lax--Milgram theorem it is an isomorphism of $H$; in particular $\norm{u}_H\le C\norm{f}_H$ when $(A_0+K_0+Q)u=f$.
    For $f\in V$, writing $A_0u=f-(K_0+Q)u\in V$ and using \ref{enum:F4} gives $u=A_0^{-1}(f-(K_0+Q)u)\in V$ with
    \(
        \norm{u}_V\le\norm{A_0^{-1}}_{\Bcal(V)}\qty(\norm{f}_V+\norm{K_0+Q}_{\Bcal(H,V)}\norm{u}_H)\le C\norm{f}_V,
    \)
    so $A_0+K_0+Q$ is also an isomorphism of $V$.
    The analytic implicit function theorem~\cite[Theorem 4.5.4]{Buffoni03} then yields a real-analytic local inverse $\Lambda_V$ of $F|_V$, and by uniqueness of the $H$-inverse, $\Lambda_V(\xi)=\Lambda_H(\xi)$ for $\xi\in N$ sufficiently close to $0$.

    \textbf{Step 5 (finite-dimensional reduction).}
    Define $g(\xi)\coloneqq G(\Lambda_V(\xi))$ for $\xi\in N$ near $0$.
    Since $G(x)-G(0)=\int_0^1\la DG(sx),x\ra\dd{s}$ and $DG|_V$ is analytic, $G|_V$ is real analytic; hence $g$ is real analytic on the finite-dimensional space $N$ with $Dg(0)=0$.
    The classical \L{}ojasiewicz inequality gives $\alpha\in(0,1/2]$ and $C_0>0$ with $\abs{g(\xi)-g(0)}^{1-\alpha}\le C_0\norm{Dg(\xi)}_{N^\ast}$ near $0$.

    \textbf{Step 6 (transfer to $H$).}
    For $\norm{x}_H$ small, set $z\coloneqq\Lambda_H(Qx)=\Lambda_V(Qx)$, so $F(z)=Qx$ and $DG(z)=Q(x-z)$.
    From $F(x)-F(z)=DG(x)$ and strong monotonicity,
    \begin{align*}
        \norm{x-z}_H&\le c^{-1}\norm{DG(x)}_H,
        &
        \norm{DG(z)}_H&\le(\norm{Q}/c)\norm{DG(x)}_H,
    \end{align*}
    while, with $D\Lambda_V$ bounded near $0$, the identity $Dg(Qx)[\eta]=\la DG(z),D\Lambda_V(Qx)\eta\ra$ gives
    \(
        \norm{Dg(Qx)}_{N^\ast}\le C\norm{DG(x)}_H\), and therefore
    \(
        \abs{G(z)-G(0)}^{1-\alpha}\le C\norm{DG(x)}_H.
    \)
    Finally,
    \(
        \abs{G(x)-G(z)}\le\norm{DG(z)}_H\norm{x-z}_H+\frac{L_F}{2}\norm{x-z}_H^2\le C\norm{DG(x)}_H^2,
    \)
    and since $1-\alpha\ge1/2$ and $\norm{DG(x)}_H\le1$ for $x$ small, subadditivity of $t\mapsto t^{1-\alpha}$ yields $\abs{G(x)-G(0)}^{1-\alpha}\le C\norm{DG(x)}_H$.
\end{proof}

We prove \ref{enum:F1}--\ref{enum:F5} for $\widetilde{J}$ and conclude.
\begin{proof}[Proof of \cref{thm:grad_ineq_Jtilde}]
    Let $\varTheta\in\strongsp$ be a critical point with $\widetilde{J}(\varTheta)\le E$.
    As $\widetilde{J}(\varTheta)\le E$ gives $\norm{\varTheta}_{\Hilbertsp}^2\le2E/\epsilon$, choosing the radius $r_0<1$ ensures $\norm{\varTheta+u}_{\Hilbertsp}\le\sqrt{2E/\epsilon}+1=R_E$ for $u\in B_{\Hilbertsp}(0,r_0)$.
    Translate $\varTheta$ to $0$ by setting $G(u)\coloneqq\widetilde{J}(\varTheta+u)-\widetilde{J}(\varTheta)$ on $H=\Hilbertsp$, with $V=\strongsp$; then $DG(u)=D\widetilde{J}(\varTheta+u)$ and $DG(0)=0$ since $\varTheta$ is critical.
    With $A_u\coloneqq A_{\varTheta+u}$ and $K_u\coloneqq K_{\varTheta+u}$ from \eqref{eq:def_secondvariation}, \cref{prop:H1} is \ref{enum:F1}.
    For \ref{enum:F2}, the coercivity $A_u\ge a_{R_E}\Id$ with $a_{R_E}>0$ is \cref{prop:H2} together with \cref{assump:curvature}, while the uniform bound $\sup_u(\norm{A_u}+\norm{K_u})<\infty$ follows from \eqref{eq:uniform_Hball_bounds}, the boundedness of $A_\varTheta$, and \cref{prop:K_properties}.
    This proposition gives \ref{enum:F3} and the regularization $K_0\colon\Hilbertsp\to\strongsp$ in \ref{enum:F4}; and since $A_0=\epsilon\Id+b_\varTheta$ is multiplication by a symmetric matrix field with $A_0\ge a_{R_E}\Id_m$ a.e.\ and $L^\infty$ coefficients, $A_0\colon\strongsp\to\strongsp$ is an isomorphism with $\norm{A_0^{-1}}\le a_{R_E}^{-1}$, completing \ref{enum:F4}; and \cref{cor:analytic_Jtil} is \ref{enum:F5}.
    \cref{lem:feireisl} with $a=a_{R_E}$ then yields $\alpha\in(0,1/2]$, $C>0$ and $r>0$ with $\abs{\widetilde{J}(\varPhi)-\widetilde{J}(\varTheta)}^{1-\alpha}\le C\norm{D\widetilde{J}(\varPhi)}_{\Hilbertsp}$ for $\norm{\varPhi-\varTheta}_{\Hilbertsp}<r$.
    Here the finite-rank correction $Q=Q_\varTheta$ of \cref{lem:feireisl} is built from $K_\varTheta$, so it and the constants $\alpha$, $C$, $r$ depend on the critical point $\varTheta$.
\end{proof}

\subsection{Long-time convergence}
Based on the inequality in \cref{thm:grad_ineq} and the abstract convergence theorem \cite[Theorem 3.24]{Hauer19}, we establish the convergence result stated in \cref{thm:global_conv}.

\begin{proof}[Proof of \cref{thm:global_conv}]
By \cref{prop:Orbit_in_Sobolev}, the $\omega$-limit set of the flow
is nonempty.
Fix $R>\sqrt{2J(\eta_0)/\epsilon}$.
Since $J$ is nonincreasing along the curve and lower semicontinuous, the closure of the orbit lies in the sublevel set $\{J\le J(\eta_0)\}$, on which $J_R=J$ and $\abs{\partial J_R}=\abs{\partial J}$ by \cref{lem:localization}; as $J_R$ is $\lambda_R^J$-convex along generalized geodesics by \cref{cor:conv_JR}, the slope $\abs{\partial J}$ is lower semicontinuous there by \cite[Proposition 2.22]{Hauer19}.
Hence \cite[Proposition 2.37]{Hauer19} shows that every element of the $\omega$-limit set is a critical point of $J$.
Fix such an $\eta^\ast$.
Since $\int_0^1m_2(\eta^\ast_t)\dd{t}\le\nicefrac{2J(\eta_0)}{\epsilon}<R^2$, \cref{lem:localization} gives $J_R=J$ and $\abs{\partial J_R}=\abs{\partial J}$ on a ball around $\eta^\ast$; in particular, $\eta^\ast$ is critical for $J_R$, and \cref{thm:grad_ineq} supplies the local slope inequality for $J_R$ around $\eta^\ast$.
We now apply \cite[Theorem 3.24]{Hauer19} to the localized functional $J_R$ with the strong upper gradient $\abs{\partial J_R}$: the curve $\eta(\bullet)$ is a curve of maximal slope for $J_R$ by the localization argument in the proof of \cref{thm:wellposedness_grad}, $\abs{\partial J_R}$ is a strong upper gradient by the $\lambda_R^J$-convexity of \cref{cor:conv_JR} and \cite[Corollary 2.4.10]{AGS}, and $J_R$ is lower semicontinuous.
Together with the precompactness established in \cref{prop:Orbit_in_Sobolev}, these verify all the hypotheses of \cite[Theorem 3.24]{Hauer19}, and $\eta(\tau)\to\eta^\ast$ in $\Ltwotheta$ as $\tau\to+\infty$.
\end{proof}


\section{Examples, sharpness, and limitations}\label{sec:examples}
Throughout this section, $K_X\coloneqq\pi_{\R^d}(\supp\mu_0)$ denotes the projection of $\supp\mu_0$ onto $\R^d$ as in \cref{sec:grad_ineq}.
\begin{example}[An analytic vector field satisfying the derivative bounds]\label{ex:analytic_NN}
    The two-neuron field $v(x,\theta)=\tanh(\theta_1 x+\theta_2)-\tanh(\theta_3 x+\theta_4)$ with $d=1$ and $m=4$ is real-analytic and satisfies \eqref{eq:bound_dNN} with any $p\in[0,2)$.
    Other real-analytic activations satisfying the same derivative bounds, such as GELU~\cite{GELU}, SiLU~\cite{SiLU}, and Mish~\cite{misra2020mish}, can be treated similarly, whereas \textup{ReLU}~\cite{Nair10ReLU} is not real-analytic.
\end{example}
\subsection{Nonconvexity under the parameter-Hessian condition}
Bounds on the parameter Hessian of a model appear in the lazy-training analysis of \cite{NEURIPS2019_ae614c55} and, combined with the uniform conditioning of the tangent kernel, yield Polyak--\L{}ojasiewicz inequalities for over-parametrized square losses \cite{LIU202285}.
Here no conditioning of the tangent kernel is assumed, and \cref{assump:curvature} controls only the principal part of the second variation, so that the objective may remain genuinely nonconvex, as the following example shows.

\begin{example}[\Cref{assump:curvature} does not imply convexity]\label{ex:nonconvex}
    Take $d=m=1$, $Y=\{0\}$, $\mu_0=\delta_{(0,0)}$, $v(x,\theta)=\theta$, and $\ell(x,0)=c(1+\cos x)$ with $c>\epsilon$.
    Since $\pdv[2]{v}{\theta}=0$, the coercivity margin is $a_R=\epsilon>0$ for every $R$, so \cref{assump:curvature} holds.
    For the constant curve $\eta_t^s=\delta_s$ one has $X_1^{\eta^s}=s$ and
    \(
        J(\eta^s)=c(1+\cos s)+\frac{\epsilon}{2}s^2\), \( \eval{\dv[2]{s}J(\eta^s)}_{s=0}=\epsilon-c<0,
    \)
    so $J$ is nonconvex already along this one-parameter family.
    Hence the standing assumptions allow genuinely nonconvex objectives, and the \L{}ojasiewicz--Simon mechanism of \cref{sec:grad_ineq} is not redundant.
    In the borderline case $c=\epsilon$, the same family gives $J(\eta^s)-J(\eta^0)\sim\nicefrac{\epsilon s^4}{24}$ and $\abs{\partial J}(\eta^s)=\abs{\epsilon s-\epsilon\sin s}\sim\nicefrac{\epsilon\abs{s}^3}{6}$, so the exponent $\alpha=1/2$ is not forced by the assumptions.
\end{example}

\subsection{Checkable sufficient conditions}\label{subsec:checkable}
The margin \eqref{eq:coercivity_margin} admits an explicit lower bound.
By \eqref{eq:Gronwall_bound_X} and \eqref{eq:Gronwall_bound_M} with $\int_0^1m_p(\eta_t)\dd{t}\le1+R^2$ for $\eta\in\Dcal_R$, $\abs{X_t^{\eta}(x)}\le R_X$ and $\norm{M^{\eta}(x;1,t)}_{\opn}\le\e^{C_x(1+R^2)}$ for $x\in K_X$, where $R_X\coloneqq\qty(\sup_{x\in K_X}\abs{x}+C_0(2+R^2))\e^{C_0(2+R^2)}$, so that
\[
    a_R\ge\epsilon-C_{\theta\theta}\,\e^{C_x(1+R^2)}\qty(1+R_X^2)\sup_{\abs{z}\le R_X,y\in\Y}\abs{\nabla_x\ell(z,y)}.
\]
The right-hand side is computable from the constants in \cref{assump:bound_dNN}, the support of $\mu_0$, and the energy level, and its positivity at $R=R_E$ implies \cref{assump:curvature}.
In particular, for architectures linear in the parameters, as in \cref{ex:nonconvex}, one has $\pdv[2]{v}{\theta}=0$ and $a_R=\epsilon$ for every $R$, so \cref{assump:curvature} holds for every $\epsilon>0$.
Moreover, replacing $v$ by $\alpha v$ scales $C_0$, $C_x$ and $C_{\theta\theta}$ linearly in $\alpha$, so \cref{assump:curvature} holds for all sufficiently small $\alpha>0$, although this scaling changes the model.

\subsection{Failure of convergence without the parameter-Hessian condition}
Finally, we show that \cref{assump:curvature} cannot in general be omitted from \cref{thm:global_conv}.
Fix $\epsilon>0$ and define
\(
    \Psi(s)\coloneqq\frac{s}{(1+s^2)^{\nicefrac{1}{3}}}+\frac{s}{3(1+s^2)^{\nicefrac{2}{3}}}\), \(f_\epsilon(\theta)\coloneqq\epsilon\Psi(\theta^3-\theta),
\)
so that $\Psi$ is real-analytic, odd and strictly increasing, and $f_\epsilon(\theta)=\epsilon\theta+O(\abs{\theta}^{-5})$ as $\abs{\theta}\to\infty$.
Choose $C_\epsilon>0$ such that $b_\epsilon(\theta)\coloneqq C_\epsilon+\int_0^\theta(f_\epsilon(r)-\epsilon r)\dd{r}$ is nonnegative; $b_\epsilon$ is then bounded and real-analytic.
Take $d=3$, $m=1$, $\Y=\{0\}$, $\mu_0=\delta_{(0,0)}$, the network \(v(x,\theta)\coloneqq(b_\epsilon(\theta),\theta,f_\epsilon(\theta))\), and the loss \(\ell(x,0)\coloneqq x_1+\sqrt{1+(x_2x_3)^2}-x_2x_3\).
\begin{proposition}[A nonconvergent flow outside the parameter-Hessian regime]\label{prop:nonconvergence}
The functions $v$ and $\ell$ defined above are real-analytic and satisfy \cref{assump:bound_dNN} with $p=1$, and there exists an initial datum $\eta_0$ satisfying \cref{assump:initial_data} such that the curve of maximal slope for $J$ starting from $\eta_0$ does not converge in $\Ltwotheta$ as $\tau\to+\infty$; moreover, $a_R<0$ for every $R>0$.
\end{proposition}
\begin{proof}
\textbf{Verification of \cref{assump:bound_dNN}.}
Since $v$ is independent of $x$ and has linear growth in $\theta$, and $\abs{\ell(x,0)}\le C(1+\abs{x}^2)$, the bounds \eqref{eq:bound_dNN} hold with $p=1$.
Since $b_\epsilon\ge0$, every state generated by the model has a nonnegative first coordinate, while $\sqrt{1+s^2}-s>0$, so $L(\eta)\ge0$ for every $\eta\in\Ltwotheta$.

\textbf{Reduction to a nonlocal gradient flow.}
Consider laws that are constant in depth, $\eta_t\equiv\nu$, where $\nu$ has zero mean and compact support; any such law satisfies \cref{assump:initial_data}.
Set $B_\nu\coloneqq\int_{\R}b_\epsilon\dd{\nu}$ and $F_\nu\coloneqq\int_{\R}f_\epsilon\dd{\nu}$.
Since $v$ is independent of $x$, one has \(X_t^\eta=(tB_\nu,0,tF_\nu)\) and \(\nabla_x\varphi_t^\eta=(1,-F_\nu,0)\), and therefore \(\nabla_\theta\fdv{J}{\eta}[\eta](t,\theta)=b_\epsilon'(\theta)-F_\nu+\epsilon\theta=f_\epsilon(\theta)-F_\nu\).
The right-hand side is independent of $t$, and its mean against $\nu$ vanishes, so the depth-constant zero-mean class is invariant, and on this class the gradient flow is the nonlocal equation
\(
    \partial_\tau\nu_\tau=\partial_\theta\qty(\qty(f_\epsilon(\theta)-F_{\nu_\tau})\nu_\tau),
\)
which is studied by Park and Pego \cite{ParkPego25}.
Choose $R>\sqrt{2J(\eta_0)/\epsilon}$.
By \cref{lem:localization,lem:maximal_slope=grad_flow} and the localization argument in the proof of \cref{thm:wellposedness_grad}, this solution is the curve of maximal slope for $J$ starting from $\eta_0$.

\textbf{Nonconvergence.}
The proof of \cite[Theorem 3]{ParkPego25} carries over to $f_\epsilon$, with the cubic-specific constants replaced by constants depending on $f_\epsilon$, once one verifies the two structural properties identified at the beginning of \cite[Section 5]{ParkPego25}: the graph of $f_\epsilon$ is N-shaped, and the three distinct roots of $f_\epsilon(z)=s$ satisfy a linear relation.
Indeed, since $\Psi'>0$, the sign of $f_\epsilon'(z)=\epsilon\Psi'(z^3-z)(3z^2-1)$ is that of $3z^2-1$, so the graph of $f_\epsilon$ is N-shaped; moreover, $f_\epsilon(z)=s$ if and only if $z^3-z=\Psi^{-1}(s/\epsilon)$, and hence the three distinct roots satisfy $z_l+z_m+z_r=0$.
Consequently, the initial-data construction in \cite[Section 5.3]{ParkPego25}, together with the ordering of the transition times and the phase-ratio argument in \cite[Sections 5.4--5.5]{ParkPego25}, yields a bounded zero-mean initial realization $u_0\in L^\infty(I)$, whose law $\nu_0\coloneqq(u_0)_\#\Leb$ is purely atomic and compactly supported, such that the corresponding solution $u(\bullet,\tau)$ does not converge in $L^2(I)$.
Since $u_0$ may be replaced by its monotone rearrangement and the order of particles is preserved, $u(\bullet,\tau)$ remains the quantile function of $\nu_\tau$, so $W_2(\nu_\tau,\nu_\sigma)=\norm{u(\bullet,\tau)-u(\bullet,\sigma)}_{L^2(I)}$; hence $\nu_\tau$ does not converge in $\Ptwo(\R)$, and $\eta(\tau)$ does not converge in $\Ltwotheta$.

\textbf{Negativity of the coercivity margin.}
Here $D_\theta^2v=(b_\epsilon'',0,f_\epsilon'')$, so at the zero parameter field \(A_{0}=(\epsilon+b_\epsilon''(0))\Id=f_\epsilon'(0)\Id=-\nicefrac{4\epsilon}{3}\Id\).
Since the zero field belongs to every ball, \cref{prop:H2} gives $a_R\le-\nicefrac{4\epsilon}{3}<0$ for every $R>0$.
Thus, the example satisfies all the assumptions of \cref{thm:global_conv} except \cref{assump:curvature}, while the trajectory does not converge.
\end{proof}

\appendix
\crefalias{section}{appendix}

\section{Calculus on Wasserstein-space valued \texorpdfstring{$L^2$}{L\texttwosuperior} functions}\label{sec:calc}
In this section, we adapt the calculus developed on $\Ptwo(\R^m)$ in \cite[Chapter 5]{Carmona2018} to $L^2(I;\Ptwo(\R^m))$, which is our focus.
We omit proofs nearly identical to those in \cite{Carmona2018}.

In the sequel, we use the fact that the Lebesgue measurable mapping $\eta_\bullet\in\Ltwotheta$ can be regarded as a probability measure $\int\eta_t\dd{t}$ on $I\times\R^m$ as indicated in \cite[Section 5.3]{AGS}.
This observation reveals that the topology of $\Ltwotheta$ induced by the distance is stronger than the narrow topology of $\Pcal(I\times\R^m)$.

\begin{lemma}\label{lem:eqiv_P_2}
    For any $\eta^1,\eta^2\in\Ltwotheta$, it holds that
    \(
        W_2\qty(\int\eta^1_t\dd{t},\int\eta^2_t\dd{t})\leq 
        W_2(\eta^1,\eta^2)_{L^2(I)}.
    \)
    In particular, the closed balls in $\Ltwotheta$ are narrowly closed and compact in $\Pcal(I\times\R^m)$.
\end{lemma}
\begin{proof}
    The inequality is a consequence of the Jensen inequality.
    For the topological property, the second moments of the joint measures $\int\eta_t\dd{t}$ are uniformly bounded on a closed ball, so the ball is tight and hence narrowly relatively compact.
    To see that it is narrowly closed, let $\eta^n$ belong to the ball centered at $\eta^0$ and converge narrowly, under the identification $\eta\leftrightarrow\int\eta_t\dd{t}$, to $\eta$.
    Choose a measurable family $\gamma_t^n\in\OptPlan(\eta_t^n,\eta_t^0)$ and extract a narrow limit of the measures $\int\gamma_t^n\dd{t}$ by \cite[Proposition 7.1.5]{AGS}; its disintegration $\int\gamma_t\dd{t}$ satisfies $\gamma_t\in\Gamma(\eta_t,\eta_t^0)$ for a.e.~$t$, and the lower semicontinuity of the quadratic cost gives $\int_0^1W_2^2(\eta_t,\eta_t^0)\dd{t}\le\liminf_{n\to\infty}\int_0^1W_2^2(\eta_t^n,\eta_t^0)\dd{t}$.
\end{proof}

\subsection{Lifting and L-derivative}\label{subsec:Lcalculus}
In subsequent discussions, $(\Omega,\Fcal,\mathbb{P})$ is an atomless probability space; without loss of generality we take $\Omega=I\times I$, $\Fcal=\Bcal(I\times I)$, and $\mathbb{P}=\Leb\otimes\Leb$ the product Lebesgue measure, whose two coordinates serve as a center and an independent randomization in the lift of \cref{lem:fixed_center}.
For simplicity, we write $\Omega$ instead of $(\Omega,\Fcal,\mathbb{P})$. 
On this $\Omega$, the law of a random variable $X\in L^2(\Omega;\R^m)$ is $\Law X=X_\#\mathbb{P}\in\Ptwo(\R^m)$.  

In mean-field optimal control problems, a function $u\colon\Ptwo(\R^m)\to\R$ on the probability measure space $\Ptwo(\R^m)$ can be ``lifted'' to a function $\widetilde{u}\colon L^2(\Omega;\R^m)\to\R$ on the space of random variables $L^2(\Omega;\R^m)$ in order to apply functional-analytic arguments to $u$. 
To employ a similar approach for $J$ in \eqref{eq:def_J}, we adapt the L-derivative from $\Ptwo(\R^m)$ to $\Ltwotheta$ in this section.
Before introducing the L-derivative, we state that there are random variables $\varTheta_\bullet\in\varThetaSpace$ corresponding to $\eta_\bullet\in\Ltwotheta$.

\begin{lemma}[Lifting measures to random variables]\label{lem:existence_lift}
    For $\eta_\bullet\in\Ltwotheta$, there exists $\varTheta_\bullet\in L^2(I;L^2(\Omega;\R^m))$ such that $\Law\varTheta_t=\eta_t$ for a.e.~$t\in I$.
\end{lemma}
\begin{proof}
    By \cite[Lemma 5.29]{Carmona2018}, there is a measurable map $\psi\colon\Ptwo(\R^m)\to L^2(\Omega;\R^m)$ with $\Law\psi(\eta^\prime)=\eta^\prime$.
    Then $\varTheta\coloneqq\psi\circ\eta$ is measurable and $\norm{\varTheta}^2_{L^2(I;L^2(\Omega;\R^m))}=\int_0^1m_2(\eta_t)\dd{t}<+\infty$.
\end{proof}

For a function $E\colon\Ltwotheta\to\R$, the \emph{lifting} of $E$, which is denoted by $\tildeE$, is defined to be $\tildeE\colon L^2(I;L^2(\Omega;\R^m))\ni\varTheta\mapsto E(\Law(\varTheta_\bullet))\in\R$.

\begin{definition}[L-derivative]\label{def:Ldiff}
    A function $E$ is said to be \emph{L-differentiable} at $\eta\in\Ltwotheta$ if the lifting $\tildeE$ is Fr\'{e}chet differentiable at some $\varTheta\in L^2(I;L^2(\Omega;\R^m))$ such that $\Law\varTheta_t=\eta_t$ for a.e.~$t\in I$, and \emph{continuously L-differentiable} if $\tildeE$ is of class $C^1$ on $L^2(I;L^2(\Omega;\R^m))$.
\end{definition}
Throughout, we identify the Fr\'{e}chet derivative $D\tildeE(\varTheta)\in(L^2(I;L^2(\Omega;\R^m)))^\ast$ with its Riesz representative in $L^2(I;L^2(\Omega;\R^m))$.
\begin{proposition}[Structure of the L-derivative]\label{prop:structure_Ldiff}
    Let $E\colon\Ltwotheta\to\R$ be a proper and lower semicontinuous functional. 
    If $E$ is L-differentiable at $\eta_0\in\Ltwotheta$, then, for every $\varTheta\in L^2(I;L^2(\Omega;\R^m))$ with $\Law\varTheta_t=(\eta_0)_t$ for a.e.~$t\in I$, the lifting $\tildeE$ is differentiable at $\varTheta$.
    In addition, the joint law of $(\varTheta_t,(D\tildeE(\varTheta))_t)$ for a.e.~$t\in I$ is independent of the random variable $\varTheta$ as long as $\Law\varTheta_t=(\eta_0)_t$ for a.e.~$t\in I$.
    Furthermore, there exists a measurable mapping $\xi\colon I\times\R^m\to\R^m$ such that $((D\tildeE)(\varTheta))_t=\xi(t,\varTheta_t)$ for $\varTheta\in\varThetaSpace$ with $\Law\varTheta_t=(\eta_0)_t$ for a.e.~$t\in I$.
\end{proposition}
Based on \cref{prop:structure_Ldiff}, we write $\xi$ as $\nabla E[\eta]$ and call it the L-derivative of $E$.
This is the Bochner-space version of \cite[Proposition 5.25]{Carmona2018}.
It follows first for simple-in-time lifts by applying the pointwise result layerwise, and then for general lifts by approximation in \(L^2(I;L^2(\Omega;\R^m))\); invariance under the choice of lift gives the representation \(\xi(t,\varTheta_t)\).

\subsection{Functional derivative}\label{subsec:Functional}
To explicitly compute the L-derivative, we use the linear functional derivative of \cref{def:fdv}.
\begin{proposition}[Connection between $\nabla$ and $\fdv{\eta}$]\label{prop:Ldiff_fdv}
Let $E$ be a functional on $\Ltwotheta$ admitting a linear functional derivative.
Suppose that the function $\R^m\ni\theta\mapsto\fdv{E}{\eta}[\eta](t,\theta)\in\R$ is differentiable for each $(\eta,t)$ and that the derivative $\nabla_\theta\fdv{E}{\eta}$ satisfies the following conditions:
    \begin{myenum}
        \item For a.e.~$t\in I$, $\nabla_\theta\fdv{E}{\eta}[\bullet](t,\bullet)\colon\Ltwotheta\times\R^m\to\R^m$ is continuous.\label{enum:5}
        \item For each $\eta\in\Ltwotheta$ and $\theta\in\R^m$, $\nabla_\theta\fdv{E}{\eta}[\eta](\bullet,\theta)\colon I\to\R^m$ is measurable.\label{enum:6}
        \item For any bounded subset $K\subset\Ltwotheta$, there exists $C_K>0$ such that $\abs{\nabla_\theta\fdv{E}{\eta}[\eta](t,\theta)}\le C_K(1+\abs{\theta})$ for a.e.~$t\in I$, every $\eta\in K$, and every $\theta\in\R^m$.\label{enum:7}
    \end{myenum}
Then, $E$ is continuously L-differentiable and
\(
    \nabla E\qty[\eta](t,\theta) =\nabla_\theta\fdv{E}{\eta}\qty[\eta](t,\theta)
\)
holds for any $\eta\in\Ltwotheta$, $\theta\in\R^m$, and a.e.~$t\in I$.
\end{proposition}
\cref{prop:Ldiff_fdv} follows the lines of \cite[Lemma B.1]{cardaliaguet2020splitting} and \cite[Proposition 5.48]{Carmona2018}.
Apply \eqref{eq:Taylor_def} to the coupling induced by a pair of lifts and use the first-order Taylor expansion in \(\theta\); the linear-growth bound gives the required integrability.
Continuity of the lifted gradient follows from Vitali's theorem under strong \(L^2\)-convergence.

\subsection{Wasserstein derivative}\label{subsec:Wdiff}
Identifying \(\eta\) with \(\int\eta_t\dd{t}\) shows that the metric \(W_2(\bullet,\bullet)_{L^2(I)}\) is the fibered quadratic Wasserstein distance of \cite{Peszek2023}.
The notions of tangent vector, subdifferential, generalized geodesic, and gradient flow used below are the corresponding fibered notions written in our notation.
For \(\eta,\eta^\prime\in\Ltwotheta\), let \(\OptPlan(\eta,\eta')\) denote the set of measurable families defined by
        \begin{equation}
        \begin{aligned}
            &\OptPlan(\eta,\eta^\prime)
            \coloneqq\Set{\pi\colon I\to\Pcal(\R^{2m});\text{measurable}|
            \begin{array}{c}
                 \pi_t\text{ is an optimal transport plan}   \\
                 \text{from $\eta_t$ to $\eta^\prime_t$ for a.e.~}t\in I
            \end{array}
            }.
        \end{aligned} 
        \label{eq:optimal_plans}
        \end{equation}
        This set is nonempty by \cite[Corollary 5.22]{villani_oldnew}.
\begin{definition}[Wasserstein derivative]\label{def:Wass_diff}
Let $\eta\in\Ltwotheta$ and $E$ be a functional on $\Ltwotheta$. 
Set a formal tangent space
        \[
        \Tan_\eta\Ltwotheta\coloneqq\overline{\Set{\nabla_\theta\varphi\colon I\to \Ccinf(\R^m)|\varphi\in\Ccinf(I\times\R^m)}}^{\norm{\bullet}_{L^2\qty(\int\eta_t\dd{t})}}.
        \]
    \begin{myenum}  
        \item A tangent vector $\xi\in\Tan_\eta\Ltwotheta$ belongs to $\partial^- E(\eta)$ if
        
        \[\hspace{-2em}
            \begin{aligned}
                E(\eta^\prime)-E(\eta)\geq{}&\inf_{\pi\in\OptPlan(\eta,\eta^\prime)}\iint_{I\times\R^{2m}}\la\xi_t(\theta^1),\theta^2-\theta^1\ra\dd{\pi_t(\theta^1,\theta^2)\dd t}+o(W_2(\eta^\prime,\eta)_{L^2(I)}),
            \end{aligned}
        \]
        for all $\eta^\prime\in\Ltwotheta$.
        \item A tangent vector $\xi\in\Tan_\eta\Ltwotheta$ belongs to the superdifferential $\partial^+ E(\eta)$ if $-\xi\in\partial^-(-E)(\eta)$.
        \item A functional $E$ is said to be Wasserstein-differentiable at $\eta$ if $\partial^- E(\eta)\neq\emptyset$ and $\partial^+ E(\eta)\neq\emptyset$.
    \end{myenum}
\end{definition}
This is the fibered Fr\'{e}chet subdifferential of \cite{Peszek2023}, restricted to its tangent representatives.
As stated in \cite[Proposition 5.63, Theorem 5.64]{Carmona2018}, the Wasserstein derivative is equal to the L-derivative:
\begin{proposition}[$\partial=\nabla$]\label{prop:L=Wdiff}
Let $E$ be a functional on $\Ltwotheta$.
    \begin{myenum}
        \item If $E$ is Wasserstein-differentiable at $\eta$, then $\partial^- E(\eta)=\partial^+E(\eta)$ and both are singletons.
        We denote the unique element by $\partial E[\eta]$.
        \item If $E$ is continuously L-differentiable, then $E$ is Wasserstein-differentiable at any $\eta\in\Ltwotheta$. 
        Furthermore, it holds that
        \(
            \nabla E\qty[\eta]=\partial E\qty[\eta].
        \)
    \end{myenum}
\end{proposition}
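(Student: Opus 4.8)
The plan is to follow \cite[Proposition 5.63 and Theorem 5.64]{Carmona2018}, carrying the layer variable $t\in[0,1]$ through the arguments: the Hilbert space $L^2(\Omega;\R^m)$ there is replaced by $\varThetaSpace$, and the reference measure by the family $\eta_\bullet$, equivalently by $\int\eta_t\dd{t}$ on $[0,1]\times\R^m$.

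For \emph{(i)}, I would argue by displacement perturbation. Fix $\varphi\in\Ccinf([0,1]\times\R^m)$, set $\zeta_t\coloneqq\nabla_\theta\varphi(t,\cdot)$, and for $\abs{\varepsilon}$ small let $T^\varepsilon_t\coloneqq\Id+\varepsilon\zeta_t$, which is the gradient of the uniformly convex potential $\tfrac12\abs{\cdot}^2+\varepsilon\varphi(t,\cdot)$; hence the plan induced by $T^\varepsilon_t$ is optimal between $\eta_t$ and $\eta^\varepsilon_t\coloneqq(T^\varepsilon_t)_\#\eta_t$, $\eta^\varepsilon\coloneqq(T^\varepsilon_\bullet)_\#\eta_\bullet$ lies in $\Ltwotheta$, and $W_2(\eta^\varepsilon,\eta)_{L^2(0,1)}\le\abs{\varepsilon}\norm{\zeta}_{L^2\qty(\int\eta_t\dd{t})}$. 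Feeding $\eta'=\eta^\varepsilon$ and the plan induced by $T^\varepsilon$ into the inequality defining $\xi\in\partial^-E(\eta)$, and into the one defining $-\xi'\in\partial^-(-E)(\eta)$ for $\xi'\in\partial^+E(\eta)$, then dividing by $\varepsilon>0$ and letting $\varepsilon\downarrow0$ (the first-order term of each inequality being $\pm\varepsilon\int_0^1\int_{\R^m}\la\,\cdot\,,\zeta_t\ra\dd{\eta_t}\dd{t}$), yields $\int_0^1\int_{\R^m}\la\xi_t-\xi'_t,\zeta_t\ra\dd{\eta_t}\dd{t}\le0$; testing also against $-\varphi$ gives equality. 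Since $\xi-\xi'\in\Tan_\eta\Ltwotheta$ and the smooth gradient fields are dense there, $\xi=\xi'$ in $L^2\qty(\int\eta_t\dd{t})$; letting $\xi$ range over $\partial^-E(\eta)$ and $\xi'$ over $\partial^+E(\eta)$ (both nonempty, by Wasserstein-differentiability) shows each set is a singleton and that the two coincide — this common element is $\partial E[\eta]$.

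For \emph{(ii)}, given $\eta,\eta'\in\Ltwotheta$ I would fix a measurable family $\pi=(\pi_t)_t\in\OptPlan(\eta,\eta')$ — nonempty by the remark after \eqref{eq:optimal_plans} — and, applying the measurable lifting of \cite[Lemma 5.29]{Carmona2018} to $t\mapsto\pi_t$, produce $(\varTheta,\varTheta')\in\varThetaSpace\times\varThetaSpace$ with $\Law(\varTheta_t,\varTheta'_t)=\pi_t$ for almost all $t$, so that $\Law\varTheta_t=\eta_t$, $\Law\varTheta'_t=\eta'_t$, and $\norm{\varTheta'-\varTheta}_{\varThetaSpace}=W_2(\eta',\eta)_{L^2(0,1)}$. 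Since $\tildeE$ is $C^1$ on the Hilbert space $\varThetaSpace$, the fundamental theorem of calculus gives $E(\eta')-E(\eta)=\tildeE(\varTheta')-\tildeE(\varTheta)=\la D\tildeE(\varTheta),\varTheta'-\varTheta\ra_{\varThetaSpace}+R$ with $\abs{R}\le\norm{\varTheta'-\varTheta}_{\varThetaSpace}\sup_{s\in[0,1]}\norm{D\tildeE\qty(\varTheta+s(\varTheta'-\varTheta))-D\tildeE(\varTheta)}_{\varThetaSpace}$; continuity of $D\tildeE$ at $\varTheta$ bounds the supremum by a modulus $\omega\qty(\norm{\varTheta'-\varTheta}_{\varThetaSpace})$ that vanishes as its argument tends to $0$, so $\abs{R}=o(W_2(\eta',\eta)_{L^2(0,1)})$ \emph{uniformly over} $\pi\in\OptPlan(\eta,\eta')$. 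By \cref{prop:structure_Ldiff}, $D\tildeE(\varTheta)$ is the random variable $t\mapsto\nabla E[\eta](t,\varTheta_t)$, whence
\[
\la D\tildeE(\varTheta),\varTheta'-\varTheta\ra_{\varThetaSpace}=\int_0^1\int_{\R^{2m}}\la\nabla E[\eta](t,\theta^1),\theta^2-\theta^1\ra\dd{\pi_t(\theta^1,\theta^2)}\dd{t}.
\]
Taking the supremum and the infimum over $\pi\in\OptPlan(\eta,\eta')$ and using the uniform bound on $R$ then gives $\nabla E[\eta]\in\partial^-E(\eta)\cap\partial^+E(\eta)$, once $\nabla E[\eta]\in\Tan_\eta\Ltwotheta$ is known; the latter I would deduce from $\tildeE(\varTheta\circ\tau)=\tildeE(\varTheta)$ for measure-preserving $\tau$ on $\Omega$, differentiated exactly as in the proof of \cref{prop:structure_Ldiff} (in our applications it also holds directly, since by \eqref{eq:Ldiff_fdv} the L-derivative $\nabla E[\eta](t,\cdot)=\nabla_\theta\fdv{E}{\eta}\qty[\eta](t,\cdot)$ is a $\theta$-gradient). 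Thus $E$ is Wasserstein-differentiable at $\eta$, and \emph{(i)} identifies the Wasserstein derivative with $\nabla E[\eta]$.

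The hard part, I expect, is not any single estimate but the bookkeeping around the optimal plans: constructing the lifts $(\varTheta,\varTheta')$ jointly measurably in $t$ out of an arbitrary measurable family $\pi$, and — crucially — verifying that the Taylor remainder is $o(W_2(\eta',\eta)_{L^2(0,1)})$ \emph{uniformly} in $\pi\in\OptPlan(\eta,\eta')$, since that is exactly what licenses passing to the supremum and infimum over optimal plans in \cref{def:Wass_diff}. The tangency $\nabla E[\eta]\in\Tan_\eta\Ltwotheta$, inherited from \cite{Carmona2018}, is the other point one must not skip, as it is needed before $\nabla E[\eta]$ is even a legitimate candidate for $\partial E[\eta]$.
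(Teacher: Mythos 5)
Your proposal is correct and follows exactly the route the paper intends: the paper gives no argument of its own but delegates the proof to \cite[Proposition 5.63, Theorem 5.64]{Carmona2018}, and your write-up is that argument carried through with the extra layer variable $t$ (the Hilbert space replaced by $\varThetaSpace$, the reference measure by $\int\eta_t\dd{t}$). You also correctly flag the two points that could be glossed over---the uniformity of the Taylor remainder over $\OptPlan(\eta,\eta')$ needed to pass to the supremum and infimum in \cref{def:Wass_diff}, and the tangency $\nabla E[\eta]\in\Tan_\eta\Ltwotheta$---both of which are exactly the non-trivial parts of the Carmona--Delarue proof.
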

We briefly recall the argument for (2): choose a Borel family $t\mapsto\gamma_t\in\OptPlan(\eta_t,\eta^\prime_t)$ as in \eqref{eq:optimal_plans} and, by \cref{lem:fixed_center}, lifts $\varTheta$, $\varPhi\in\Hilbertsp$ with $\Law(\varTheta_t,\varPhi_t)=\gamma_t$ for a.e.~$t\in I$, so that $\norm{\varPhi-\varTheta}_{\Hilbertsp}^2=W_2^2(\eta,\eta^\prime)_{L^2(I)}$.
The argument of \cite[Proposition 5.63, Theorem 5.64]{Carmona2018}, applied to perturbations localized on measurable subsets of $I$, first for simple-in-time vector fields and then by density, gives $\nabla E[\eta]\in\Tan_\eta\Ltwotheta$.
The Fr\'{e}chet expansion of $\tildeE$ at $\varTheta$ and the representation of \cref{prop:structure_Ldiff} turn the linear term into $\int_0^1\int_{\R^{2m}}\la\nabla E[\eta](t,\theta^1),\theta^2-\theta^1\ra\dd{\gamma_t(\theta^1,\theta^2)}\dd{t}$, which yields $\nabla E[\eta]\in\partial^-E(\eta)\cap\partial^+E(\eta)$.
The uniqueness statement in (1) follows from the corresponding pointwise Wasserstein result \cite[Proposition 5.63, Theorem 5.64]{Carmona2018}, integrated in $t$.

We then introduce the convexity along generalized geodesics, based on \cite[Section 9.2]{AGS}.
Let us denote by $\pi^i$ and $\pi^{i,j}$ the canonical projections defined by
\begin{align}
    \pi^i&\colon\R^{3m}\ni(\theta^0,\theta^1,\theta^2)\longmapsto\theta^i\in\R^{m},\\
    \pi^{i,j}&\colon\R^{3m}\ni(\theta^0,\theta^1,\theta^2)\longmapsto(\theta^i,\theta^j)\in\R^{2m},
\end{align}
and we write $\pi^{i,j}_\#\boldsymbol{\eta}\coloneqq(\pi^{i,j}_\#\boldsymbol{\eta}_t)_{t\in I}\in L^2(I;\Ptwo(\R^{2m}))$ for a curve $\boldsymbol{\eta}\in L^2(I;\Ptwo(\R^{3m}))$.
\begin{definition}[Convexity along generalized geodesics]\label{def:generalized_convexity}
    Let $\eta^0$, $\eta^1$, $\eta^2\in\Ltwotheta$, and $\boldsymbol{\eta}\in L^2(I;\Ptwo(\R^{3m}))$ be a $3$-plan satisfying
        \(\pi^{0,1}_\#\boldsymbol{\eta}\in\OptPlan(\eta^0,\eta^1)\) and
        \(\pi^{0,2}_\#\boldsymbol{\eta}\in\OptPlan(\eta^0,\eta^2)\).
    \begin{myenum}
        \item We define a generalized geodesic $\eta^{1\to 2}\coloneqq(\eta^{1\to 2}_\tau)_{\tau\in I}$ induced by $\boldsymbol{\eta}$ to be 
        \(
        \eta^{1\to 2}_\tau=\qty(\pi^{1\to 2}_\tau)_\#\boldsymbol{\eta}\in L^2(I;\Ptwo(\R^{m})),
        \)
        where $\pi^{1\to 2}_\tau\coloneqq(1-\tau)\pi^1+\tau\pi^2$.
        \item For a number $\lambda\in\R$, a function $E\colon\Ltwotheta\to(-\infty,+\infty]$ with the domain $D(E)$ is $\lambda$\emph{-convex along generalized geodesics} if, for every $\eta^0,\eta^1,\eta^2\in D(E)$, there exists a generalized geodesic $\eta^{1\to2}$ induced by a $3$-plan $\boldsymbol{\eta}$ such that
        \[
            E(\eta^{1\to2}_\tau)\leq (1-\tau)E(\eta^1)+\tau E(\eta^2)-\frac{\lambda}{2}\tau(1-\tau)W_{\boldsymbol{\eta}}^2(\eta^1,\eta^2)_{L^2(I)},
        \]
        where $W_{\boldsymbol{\eta}}^2(\eta^1,\eta^2)_{L^2(I)}\coloneqq\int_0^1\int_{\R^{3m}}\abs{\theta^2-\theta^1}^2\dd{\boldsymbol{\eta}_t(\theta^0,\theta^1,\theta^2)}\dd{t}\geq W_2^2(\eta^1,\eta^2)_{L^2(I)}$.\label{item:convexity}
    \end{myenum}
\end{definition}
\begin{remark}[Another characterization of convexity]\label{rmk:another_convexity}
    By the argument of \cite[Lemma 6]{Bonnet21intrinsic}, if \(\tau\mapsto E(\eta_\tau^{1\to2})\) is differentiable and its derivative is Lipschitz with constant at most \(C W_{\boldsymbol\eta}^2(\eta^1,\eta^2)_{L^2(I)}\), uniformly in the generalized geodesic, then \(E\) is \((-C)\)-convex along the curve.
\end{remark}
When $E$ is $\lambda$-convex along generalized geodesics, the subdifferential $\partial^-E$ determines the slope $\abs{\partial E}$:
\begin{lemma}[Minimal selection]\label{lem:minimal_select}
    Let $E\colon\Ltwotheta\to(-\infty,+\infty]$ be proper, lower semicontinuous, and $\lambda$-convex along generalized geodesics. Then, the following are equivalent:
    \begin{myenum}
        \item $\eta\in D(\abs{\partial E})\coloneqq\Set{\eta^\prime\in\Ltwotheta|\abs{\partial E}(\eta^\prime)<+\infty}$.
        \item $\partial^-E(\eta)\neq\emptyset$ and
        \[
            \abs{\partial E}(\eta)=\min_{\xi\in\partial^- E(\eta)}\norm{\xi}_{L^2(I;L^2(\eta_t))},
        \]
        where
        $
            \norm{\xi}_{L^2(I;L^2(\eta_t))}\coloneqq(\int_0^1\int_{\R^m}\abs{\xi_t(\theta)}^2\dd{\eta_t(\theta)}\dd{t})^{\nicefrac{1}{2}}.
        $
    \end{myenum}
\end{lemma}
Under the preceding identification, the minimal-selection assertion follows from \cite[Proposition 3.32]{Peszek2023}; the converse implication follows from the subdifferential inequality and the Cauchy--Schwarz inequality.
The continuity-equation characterization of absolutely continuous curves in this fibered space is given in \cite[Proposition 3.21]{Peszek2023}; curves of maximal slope coincide with solutions of the following gradient flow equation:
\begin{definition}[A version of gradient flows in {\cite[Definition 3.33]{Peszek2023}}]\label{def:grad_flow_continuity}
    An absolutely continuous curve $\eta\colon(0,+\infty)\to\Ltwotheta$ is said to be \emph{a solution of the gradient flow equation} $v(\tau)\in-\partial^- E(\eta(\tau))$ if $\eta$ satisfies the following conditions:
    \begin{myenum}
        \item There exists a family of Borel vector fields $(v(\tau))_{\tau\in(0,+\infty)}$ such that \(v(\tau)\in\Tan_{\eta(\tau)}\Ltwotheta,\) for a.e.~$\tau\in(0,+\infty)$.
        \item $\norm{v(\tau)}_{L^2(I;L^2(\eta(\tau)_t))}\in L^2_{\textup{loc}}(0,+\infty)$.
        \item The continuity equation $\partial_\tau\eta(\tau)+\Div_\theta(v(\tau)\eta(\tau))=0$ holds in the sense of distributions, i.e., 
        \(
        \int_0^\infty\int_0^1\int_{\R^m}(\partial_\tau\phi+\nabla_\theta\phi\cdot v(\tau))\dd{\eta(\tau)_t}\dd{t}\dd{\tau}=0,
        \)
        for all $\phi\in\Ccinf((0,+\infty)\times I\times\R^m)$.
        \item $v(\tau)\in-\partial^-E(\eta(\tau))$ for a.e.~$\tau\in(0,+\infty)$.
    \end{myenum}
\end{definition}
\begin{lemma}[A version of {\cite[Theorem 3.34]{Peszek2023}}]\label{lem:maximal_slope=grad_flow}
 Let $E$ be as in \cref{lem:minimal_select}.
Then $\eta\in\AC^2(0,+\infty;\Ltwotheta)$ is a curve of maximal slope for $E$ with respect to $\abs{\partial E}$ if and only if $\eta$ is a solution of the gradient flow equation of $E$.
\end{lemma}
Under the preceding identification, the assertion follows from \cite[Theorem 3.34]{Peszek2023}.

\section{Existence of a minimizer and well-posedness of the flow}\label{sec:exist_wellposed}
\subsection{Existence of a minimizer}
First, we formulate the optimization problem precisely and establish the existence of a minimizer.
We work in the setting of \cref{subsec:intro_model}, where $\X=\R^d$, $\Y\subset\R^d$ is compact, $\mu_0\in\Pc(\XtimesY)$, and $X^\eta$, $L$, and $J$ are defined by \eqref{eq:forward_flow} and \eqref{eq:loss_term_L}--\eqref{eq:J} with $\epsilon>0$.
The optimization of the continuous DNN is the minimization problem
\begin{equation}
    \minimize_{\eta\in\Ltwotheta} J(\eta).
    \label{eq:def_J}
\end{equation}

The existence of a minimizer requires neither the analyticity nor the second-order bounds in \cref{assump:bound_dNN}; the following conditions suffice.

\begin{assumption}[Conditions for the existence of a minimizer]\label{assump:NN}
The functions $v$ and $\ell$ are continuous, $\abs{\ell(x,y)}\le A+B\abs{x}^2$ on $\XtimesY$ for some $A$, $B>0$, and $L\ge0$ on $\Ltwotheta$.
Moreover, there exist $C>0$ and $p\in[0,2)$ such that
\begin{align}
    \abs{v(x,\theta)}&{}\leq C(1+\abs{\theta}^{p})(1+\abs{x}),\\
    \abs{v(x^1,\theta)-v(x^2,\theta)}&{}\leq C(1+\abs{\theta}^2)\abs{x^1-x^2},
\end{align}
for every $(x,\theta)\in\R^d\times\R^m$ and $x^1,x^2\in\R^d$.
\end{assumption}

The subquadratic growth condition on $v$ is used in the coercivity argument below.

\begin{theorem}[Existence of a minimizer for \eqref{eq:def_J}]\label{thm:ex_minima}
Under \cref{assump:NN}, there exists a minimizer $\eta^\ast\in\Ltwotheta$ of $J$ defined in \eqref{eq:J}.
\end{theorem}

\begin{proof}
    Let us first take a minimizing sequence $(\eta^n)_n$ of $J$ in $\Ltwotheta$ and set $\mu^n\coloneqq\mu^{\eta^n}$ to be the solution of \eqref{eq:ODE2} in $C(I;\Pc(\XtimesY))$ associated with each $\eta^n$.
    The regularization term bounds $\int_0^1m_2(\eta^n_t)\dd{t}$ uniformly in $n$, so the supports $\supp\mu^n_t$ lie in a common compact set by \cref{lem:supp_bound}, on which $W_2^2\le CW_1$.
    By \eqref{eq:ODE2} and the growth condition on $v$ in \cref{assump:NN},
    \[
        W_1\qty(\mu^n_t,\mu^n_s)\le C\int_s^t\qty(1+m_p(\eta^n_r))\dd{r}\le C\qty(\abs{t-s}+\abs{t-s}^{1-\nicefrac{p}{2}}\qty(\int_s^t m_2(\eta^n_r)\dd{r})^{\nicefrac{p}{2}})
    \]
    for $s\le t$ by the H\"{o}lder inequality, so the curves $(\mu^n)_n$ are equicontinuous.
    By virtue of the Benamou--Brenier formula~\cite{Benamou2000}, the Arzel\`{a}--Ascoli theorem, and \cref{lem:eqiv_P_2}, there exist a subsequence of $(n)$, still denoted by $n$, and a pair $(\mu^\ast,\eta^\ast)$ with $\mu^\ast\in C(I;\Ptwo(\XtimesY))$ and $\eta^\ast\in\Ltwotheta$ such that
    \begin{align}
        \mu^{n}\to \mu^\ast &\text{ strongly in }C \qty(I;\Ptwo(\XtimesY)),\label{eq:mu_uniform} \\ 
        \eta^{n}\to\eta^\ast &\text{ narrowly in }\Pcal(I\times\R^m).\label{eq:eta_narrow}  
    \end{align}
    The loss term converges by the convergence \eqref{eq:mu_uniform} of $\mu^n_1$ in $\Ptwo(\XtimesY)$ and the continuity of $\ell$ with $2$-growth, while the regularization term is lower semicontinuous under the narrow convergence \eqref{eq:eta_narrow} by \cite[Lemma 5.1.7]{AGS}; hence
    \(
        \int_{\XtimesY}\ell\dd{\mu^\ast_1}+\nicefrac{\epsilon}{2}\iint_{I\times\R^m}\abs{\theta}^2\dd{\eta_t^\ast(\theta)}\dd{t}\le\inf_{\Ltwotheta} J.
    \)
    We next show that $\mu^\ast=\mu^{\eta^\ast}$, i.e.,
    \[
    \begin{aligned}
        &\int_{0}^{1} \int_{\XtimesY}\partial_{t} \phi(t,x,y)\dd{\mu_t^{\ast}(x,y)}\dd{t}\\
        &\quad+\int_{0}^{1} \int_{\XtimesY}\int_{\R^m} \nabla_{x} \phi(t,x,y) \cdot v(x,\theta)\dd{\eta^\ast_t(\theta)}\dd{\mu_t^{\ast}(x,y)}\dd{t}=0,
    \end{aligned}
    \]
    for all $\phi\in \Ccinf((0,1)\times\XtimesY)$.
    Since the weak formulation of \eqref{eq:ODE2} holds for each $(\mu^n,\eta^n)$ and the nonlinear term splits, with $f=\nabla_x\phi\cdot v$, as $\int f\dd{\eta^n}\dd{\mu^n}-\int f\dd{\eta^\ast}\dd{\mu^\ast}=\int f\dd{(\eta^n-\eta^\ast)}\dd{\mu^\ast}+\int f\dd{\eta^n}\dd{(\mu^n-\mu^\ast)}$, it suffices to show that
    \begin{equation}
        \lim_{n\to\infty}\int_0^1\int_{\XtimesY}\int_{\R^m}\nabla_x\phi\cdot v\dd{(\eta^n_t-\eta^\ast_t)}\dd{\mu^\ast _t}\dd{t}=0\label{eq:I_3}
    \end{equation}
    and
    \begin{equation}
        \lim_{n\to\infty}\int_0^1\int_{\XtimesY}\int_{\R^m}\nabla_x\phi\cdot v\dd{(\mu^n_t-\mu^\ast_t)}\dd{\eta^n_t}\dd{t}=0.\label{eq:I_4}
    \end{equation}
    
    As noted above, the $2$-moments of $(\eta^n)_n$ are uniformly bounded, so the family has uniformly integrable $p$-moments for the exponent $p\in[0,2)$ in \cref{assump:NN}; together with \cite[Lemma 5.1.7]{AGS} and the narrow convergence \eqref{eq:eta_narrow}, this yields \eqref{eq:I_3}.
    
    For \eqref{eq:I_4}, the uniform convergence \eqref{eq:mu_uniform} and the Kantorovich--Rubinstein duality $W_1(\mu,\nu)=\sup_{\norm{\varphi}_{\textup{Lip}}\leq1}\int_X\varphi\dd{(\mu-\nu)}$ \cite[Theorem 1.14]{VillaniTopic}, with $\norm{\varphi}_{\textup{Lip}}$ the Lipschitz constant of $\varphi$, yield
    \begin{align}
        &\abs{\int_0^1\int_{\XtimesY}\int_{\R^m}\nabla_x\phi\cdot v\dd{(\mu^n_t-\mu^\ast_t)}\dd{\eta^n_t}\dd{t}}\\
        \leq{}&\int_{0}^1\int_{\R^m}\norm{\nabla_x\phi\cdot v}_{\textup{Lip}}W_1(\mu^n_t,\mu^\ast_t)\dd{\eta^n_t(\theta)}\dd{t}\\
        \leq{}&C\int_{0}^1W_1(\mu^n_t,\mu^\ast_t)\int_{\R^m}(1+\abs{\theta}^2)\dd{\eta^n_t(\theta)}\dd{t}\\
        \leq{}&C\sup_{s\in I}W_1(\mu^n_s,\mu^\ast_s)\int_{0}^1\int_{\R^m}(1+\abs{\theta}^2)\dd{\eta^n_t(\theta)}\dd{t}\leq C\sup_{s\in I}W_2(\mu^n_s,\mu^\ast_s)\to0,
    \end{align}
    as $n\to\infty$.
    The initial condition $\mu_0^\ast=\mu_0$ follows from the convergence \eqref{eq:mu_uniform}, and the uniqueness of the solution of \eqref{eq:ODE2} yields $\mu^\ast=\mu^{\eta^\ast}$.
    Hence the left-hand side above equals $J(\eta^\ast)\ge\inf_{\Ltwotheta}J$, and $\eta^\ast$ is a minimizer.
\end{proof}

\subsection{Well-posedness}\label{subsec:wellposed_proofs}
We prove the results stated in \cref{subsec:wellposed}.
\begin{proof}[Proof of \cref{cor:conv_JR}]
    The map $\eta\mapsto\int_0^1m_2(\eta_t)\dd{t}=W_2(\eta,\delta_{0,\bullet})_{L^2(I)}^2$, where $\delta_{0,\bullet}$ denotes the constant curve at the Dirac mass at the origin, is continuous on $\Ltwotheta$, so $\Dcal_R$ is closed in $\Ltwotheta$ and $J_R$ inherits properness and lower semicontinuity from $J$; since $J_R\ge0$, the coercivity condition of \cite[Definition 3.30]{Peszek2023} holds.
    For the convexity, let $\eta^{1\to2}$ be a generalized geodesic between $\eta^1$ and $\eta^2$.
    If $\eta^1\notin\Dcal_R$ or $\eta^2\notin\Dcal_R$, the defining inequality holds trivially.
    Otherwise, since $\abs{(1-\tau)\theta^1+\tau\theta^2}^2\leq(1-\tau)\abs{\theta^1}^2+\tau\abs{\theta^2}^2$, the geodesic remains in $\Dcal_R$; along it, $L$ is $\lambda_R$-convex by \cref{lem:conv_J} and the regularization term $\nicefrac{\epsilon}{2}\int_0^1m_2(\eta_t)\dd{t}$ is $\epsilon$-convex, since its second derivative along $\eta^{1\to2}(\tau)$ equals $\epsilon\int_0^1\int_{\R^{3m}}\abs{\theta^2-\theta^1}^2\dd{\boldsymbol{\eta}_t}\dd{t}=\epsilon W_{\boldsymbol{\eta}}^2(\eta^1,\eta^2)_{L^2(I)}$.
\end{proof}

\begin{proof}[Proof of \cref{lem:localization}]
    By the triangle inequality for $W_2(\bullet,\bullet)_{L^2(I)}$ with the curve $\delta_{0,\bullet}$, every $\zeta\in\Ltwotheta$ with $W_2(\zeta,\eta)_{L^2(I)}<R-(\int_0^1m_2(\eta_t)\dd{t})^{\nicefrac{1}{2}}$ belongs to $\Dcal_R$, so $J_R=J$ on this ball.
    The slope in \cref{subsec:grad_flow} is defined through the values of the functional in an arbitrarily small ball around $\eta$, and the subdifferential $\partial^-$ in \cref{sec:calc} through a first-order expansion with an $o(W_2(\zeta,\eta)_{L^2(I)})$ remainder; hence both are determined by the functional near $\eta$, and the equalities follow.
\end{proof}

\begin{proof}[Proof of \cref{thm:wellposedness_grad}]
    Fix $R>\sqrt{2J(\eta_0)/\epsilon}$.
    By \cref{cor:conv_JR}, the functional $J_R$ is proper, lower semicontinuous, coercive and $\lambda_R^J$-convex along generalized geodesics.
    Thus, by \cite[Theorem 3.35(i)]{Peszek2023}, the localized functional $J_R$ admits a unique gradient flow starting from $\eta_0$, which is a curve of maximal slope $\eta(\bullet)$ for $J_R$ with respect to $\abs{\partial J_R}$ by \cref{lem:maximal_slope=grad_flow}.
    Since $\tau\mapsto J_R(\eta(\tau))$ is nonincreasing, the curve stays in the sublevel set
    \(
        \Set{\eta\in\Ltwotheta|J(\eta)\le J(\eta_0)}\subset\Dcal_{\sqrt{2J(\eta_0)/\epsilon}},
    \)
    and every point of this sublevel set satisfies the strict bound in \cref{lem:localization}.
    Hence $J_R(\eta(\tau))=J(\eta(\tau))$ and $\abs{\partial J_R}(\eta(\tau))=\abs{\partial J}(\eta(\tau))$ for all $\tau\ge0$, so the energy-dissipation inequality defining the curve of maximal slope for $J_R$ coincides with the one for $J$, and $\eta(\bullet)$ is a curve of maximal slope for $J$ with respect to $\abs{\partial J}$.

    Conversely, since $J$ is nonincreasing along every curve of maximal slope for $J$, any such curve starting from a point with $J<\nicefrac{\epsilon R^2}{2}$ satisfies the strict bound in \cref{lem:localization} for all $\tau$, so it is a curve of maximal slope for $J_R$; by \cref{lem:maximal_slope=grad_flow}, it solves the gradient flow equation of $J_R$.
    Now let $\eta^1$ and $\eta^2$ be curves of maximal slope for $J$ starting from $\eta_0^1$ and $\eta_0^2$, and take $R>\max_{i=1,2}\sqrt{2J(\eta_0^i)/\epsilon}$.
    The contraction estimate then follows from \cite[Theorem 3.35(iii)]{Peszek2023}, which proves the uniqueness.
\end{proof}

\section{Technical proofs}\label{sec:technical_proofs}
\subsection{\texorpdfstring{$\lambda$}{lambda}-convexity of \texorpdfstring{$L$}{L}}\label{appendix:lambda_convex}
\begin{proof}[Proof of \cref{lem:conv_J}]
    Let $\eta^1$, $\eta^2\in\Dcal_R$, and let $\eta^{1\to2}$ be a generalized geodesic induced by a $3$-plan $\boldsymbol{\eta}$; as in the proof of \cref{cor:conv_JR}, $\eta^{1\to2}(\tau)\in\Dcal_R$ for every $\tau\in I$, and $\int_0^1\int_{\R^{3m}}(1+\abs{\theta^1}+\abs{\theta^2})^2\dd{\boldsymbol{\eta}_t}\dd{t}\le C(1+R^2)$.
    We prove that $I\ni\tau\mapsto L(\eta^{1\to2}(\tau))$ is differentiable with a Lipschitz continuous derivative (as in \cite[Lemma B.2]{NEURIPS2018_a1afc58c}). \cref{prop:grad} yields the differentiability, and the derivative is explicitly computed as follows:
    \begin{align}
    &\dv{L\qty(\eta^{1\to2}\qty(\tau))}{\tau}{}=\underbrace{\iint\limits_{I\times\R^{3m}}\la\nabla_x\varphi_t^{\eta^{1\to2}(\tau)},(\mdif{\theta} v)_{(1-\tau)\theta^1+\tau\theta^2}\qty[\theta^2-\theta^1]\ra_{\mu_t^{\eta^{1\to2}(\tau)}}\dd{\boldsymbol{\eta}_t\dd t}}_{\eqqcolon h(\tau)},
    \end{align}
    based on \cref{def:Wass_diff}. 
    Therefore, we need to prove that there exists a constant $C_R>0$, depending only on $R$, such that 
    \[
        \abs{h\qty(\tau^2)-h\qty(\tau^1)}\leq\int_0^1\int_{\R^{3m}}\qty(\abs{\one}+\abs{\two}+\abs{\three})\dd{\boldsymbol{\eta}_t}\dd{t}\leq C_RW^2_{\boldsymbol{\eta}}(\eta^1,\eta^2)_{L^2(I)}\abs{\tau^2-\tau^1},
    \]
    for $\tau^1$, $\tau^2\in I$, where
    \begin{align}
    \one\coloneqq&\la\nabla\qty(\varphi_t^{\eta(\tau^2)}-\varphi_t^{\eta(\tau^1)}),(\mdif{\theta} v)_{\theta(\tau^2)}\qty[\theta^2-\theta^1]\ra_{\mu_t^{\eta(\tau^2)}},\label{eq:first_h}\\
    \two
        \coloneqq&\la\nabla_x\varphi_t^{\eta(\tau^1)},\qty((\mdif{\theta} v)_{\theta(\tau^2)}-(\mdif{\theta} v)_{\theta(\tau^1)})\qty[\theta^2-\theta^1]\ra_{\mu_t^{\eta(\tau^2)}},\label{eq:second_h}\\
    \three
        \coloneqq&\la\nabla_x\varphi_t^{\eta(\tau^1)},(\mdif{\theta} v)_{\theta(\tau^1)}\qty[\theta^2-\theta^1]\ra_{\mu_t^{\eta(\tau^2)}-\mu_t^{\eta(\tau^1)}},\label{eq:third_h}
    \end{align}
    and for simplicity we write $\eta(\tau^i)\coloneqq\eta^{1\to2}(\tau^i)$ and $\theta(\tau^i)\coloneqq(1-\tau^i)\theta^1+\tau^i\theta^2$ for $i=1,2$.
    Hereafter, we evaluate the Lipschitz continuity in three parts.
    From \cref{lem:supp_bound} and $\eta(\tau^i)\in\Dcal_R$, there exists a ball $B_{\XtimesY}$, depending only on $R$, which includes the supports of $\mu^{\eta(\tau^i)}_t$ for each $t\in I$ and $i=1,2$; we write $B_X$ for the projection of $B_{\XtimesY}$ onto $\R^d$.
    Thus, for $\one$ of \eqref{eq:first_h}, we obtain
    \begin{align}
        &\int_0^1\int_{\R^{3m}}\abs{\one}\dd{\boldsymbol{\eta}_t}\dd{t}\\
        \leq{}&\sup_{\substack{s\in I,\\
        (x,y)\in B_{\XtimesY}}}\abs{\nabla\qty(\varphi_s^{\eta(\tau^2)}-\varphi_s^{\eta(\tau^1)})(x,y)}\\
        &\times{\int_0^1\int_{\R^{3m}}\underbrace{\abs{(\mdif{\theta} v)(x,\theta(\tau^2))}}_{\leq C_R(1+\abs{\theta^1}+\abs{\theta^2})\text{ by \cref{assump:bound_dNN}}}\abs{\theta^2-\theta^1}\dd{\boldsymbol{\eta}_t(\theta^0,\theta^1,\theta^2)}\dd{t}}\\
        \leq{}&C_RW_{\boldsymbol{\eta}}(\eta^1,\eta^2)_{L^2(I)}\sup_{\substack{s\in I,\\
        (x,y)\in B_{\XtimesY}}}\abs{\nabla\qty(\varphi_s^{\eta(\tau^2)}-\varphi_s^{\eta(\tau^1)})(x,y)},
    \end{align}
    where the last step uses the Cauchy--Schwarz inequality and the moment bound on $\Dcal_R$, so that a single factor $W_{\boldsymbol{\eta}}(\eta^1,\eta^2)_{L^2(I)}$ appears; the second factor will contribute the other one.
    The difference $\varphi_t^{\eta(\tau^2)}-\varphi_t^{\eta(\tau^1)}$ is computed as 
    \begin{align}
        &\qty(\varphi_t^{\eta(\tau^2)}-\varphi_t^{\eta(\tau^1)})(x,y)\\
        ={}&\ell(X_{t,1}^{\eta(\tau^2)}(x),y)-\ell(X_{t,1}^{\eta(\tau^1)}(x),y)\\
        ={}&\int_0^1\nabla_x\ell\qty(
        \qty(\qty(1-s)X_{t,1}^{\eta\qty(\tau^2)}+sX_{t,1}^{\eta\qty(\tau^1)})\qty(x)
        ,y)\cdot\qty(X_{t,1}^{\eta(\tau^2)}-X_{t,1}^{\eta(\tau^1)})(x)\dd{s}\\
        ={}&(\tau^2-\tau^1)\iint_{I^2}\nabla_x\ell\qty(
        \qty(\qty(1-s)X_{t,1}^{\eta(\tau^2)}+sX_{t,1}^{\eta(\tau^1)})(x)
        ,y)\\
        &\cdot{\eval{\dv{X_{t,1}^{\eta(\tau)}(x)}{\tau}}_{\tau=(1-u)\tau^1+u\tau^2}\dd{u\dd s}},
    \end{align}
    using the fundamental theorem of calculus.
    As in \cref{lem:linearizedODE}, the derivative $\dv{\tau}X_{t,1}^{\eta(\tau)}$ is calculated as follows:
    \begin{align}
        \dv{\tau}X_{t,1}^{\eta(\tau)}(x)&=\int_t^1\int_{\R^{3m}}M^{\eta(\tau)}(x;1,s)(\mdif{\theta} v)(X_{t,s}^{\eta(\tau)}(x),(1-\tau)\theta^1+\tau\theta^2)\\
        &\quad\times(\theta^2-\theta^1)\dd{\boldsymbol{\eta}_s(\theta^0,\theta^1,\theta^2)}\dd{s},
    \end{align}
    where the integration is over $s\in[t,1]$ because the flow $X_{t,\bullet}^{\eta(\tau)}(x)$ starts from $x$ at layer $t$, so the $\tau$-perturbation acts only on the layers after $t$, and $M^{\eta(\tau)}(x;1,s)$ is the fundamental solution from $s$ to $1$ of the linearized ODE
    \(
        \dot{z}_t=\int(\mdif{x}v)(X_t^{\eta(\tau)}(x),\theta)z_t\dd{\eta(\tau)_t(\theta)},
    \)
    which is bounded uniformly for $x\in B_X$ by \eqref{eq:Gronwall_bound_M}.
    This formula gives
    \(
        \abs{\dv{\tau}X_{t,1}^{\eta(\tau)}(x)}\le C_RW_{\boldsymbol{\eta}}(\eta^1,\eta^2)_{L^2(I)}
    \)
    by the Cauchy--Schwarz inequality and the moment bound on $\Dcal_R$, hence
    \(
        \sup_{t\in I,x\in B_X}\abs{X_{t,1}^{\eta(\tau^2)}(x)-X_{t,1}^{\eta(\tau^1)}(x)}\le C_R\abs{\tau^2-\tau^1}W_{\boldsymbol{\eta}}(\eta^1,\eta^2)_{L^2(I)}.
    \)
    The difference $M^{\eta(\tau^2)}(x;1,t)-M^{\eta(\tau^1)}(x;1,t)$ of the fundamental matrices obeys the same type of bound: it solves the linearized ODE for $\tau^2$ with the source given by the difference of the coefficients applied to $M^{\eta(\tau^1)}(x;1,t)$.
    This coefficient difference is bounded through \eqref{eq:Cx} by the state difference above and by
    \(
        \abs{\theta(\tau^2)-\theta(\tau^1)}=\abs{\tau^2-\tau^1}\abs{\theta^2-\theta^1}
    \),
    so integrating against $\boldsymbol{\eta}$ and applying the Gronwall inequality yield
    \[
        \sup_{t\in I,x\in B_X}\norm{M^{\eta(\tau^2)}(x;1,t)-M^{\eta(\tau^1)}(x;1,t)}_{\opn}\le C_R\abs{\tau^2-\tau^1}W_{\boldsymbol{\eta}}(\eta^1,\eta^2)_{L^2(I)}.
    \]
    Since $\nabla_x\varphi_t^{\eta(\tau)}(x,y)=M^{\eta(\tau)}(x;1,t)^\top\nabla_x\ell(X_{t,1}^{\eta(\tau)}(x),y)$ by \cref{prop:grad}, combining the two displays with the boundedness of $\nabla_x\ell$ and $\Hessian_x\ell$ on the compact set gives
    \[
        \sup_{\substack{t\in I,(x,y)\in B_{\XtimesY}}}\abs{\nabla\qty(\varphi_t^{\eta(\tau^2)}-\varphi_t^{\eta(\tau^1)})(x,y)}\leq C_R\abs{\tau^2-\tau^1}W_{\boldsymbol{\eta}}(\eta^1,\eta^2)_{L^2(I)},
    \]
    and therefore $\int_0^1\int_{\R^{3m}}\abs{\one}\dd{\boldsymbol{\eta}_t}\dd{t}\le C_R\abs{\tau^2-\tau^1}W^2_{\boldsymbol{\eta}}(\eta^1,\eta^2)_{L^2(I)}$.
    
    Subsequently, we evaluate $\abs{\two}$ of \eqref{eq:second_h}. 
    Using \cref{lem:supp_bound}, we obtain
    \begin{align}
        &\int_0^1\int_{\R^{3m}}\abs{\two}\dd{\boldsymbol{\eta}_t}\dd{t}\\
        \leq{}&\int_0^1\int_{\R^{3m}}\abs{\la\nabla_x\varphi_t^{\eta(\tau^1)},\qty((\mdif{\theta} v)_{\theta(\tau^2)}-(\mdif{\theta} v)_{\theta(\tau^1)})\qty[\theta^2-\theta^1]\ra_{\mu_t^{\eta(\tau^2)}}}\dd{\boldsymbol{\eta}_t}\dd{t}\\
        \leq{}&\abs{\tau^2-\tau^1}\int_0^1\int_{\R^{3m}}\int_{\XtimesY}\abs{\nabla_x\varphi_t^{\eta(\tau^1)}(x,y)}\underbrace{\int_0^1\norm{\qty(D^2_\theta v)(x,{(1-s)\theta(\tau^1)+s\theta(\tau^2)})}_{\opn}\dd{s}}_{\leq C_{\theta\theta}(1+\abs{x}^2)\text{ by \cref{assump:bound_dNN}}}\\
        &\pushright{\abs{\theta^2-\theta^1}^2\dd{\mu_t^{\eta(\tau^2)}}\dd{\boldsymbol{\eta}_t}\dd{t}}\\
        \leq{}&C_R\abs{\tau^2-\tau^1}\sup_{\substack{s\in I,\\
        (x,y)\in B_{\XtimesY}}}\abs{\nabla_x\varphi_s^{\eta(\tau^1)}}\int_0^1\int_{\R^{3m}}\abs{\theta^2-\theta^1}^2\dd{\boldsymbol{\eta}_t}\dd{t}\\
        \leq{}&C_R W^2_{\boldsymbol{\eta}}(\eta^1,\eta^2)_{L^2(I)}\abs{\tau^2-\tau^1}.
    \end{align}

    Finally, we bound \eqref{eq:third_h} using $\mu_t^{\eta(\tau^i)}=(X_t^{\eta(\tau^i)}\times\Id_\Y)_\#\mu_0$, which pairs the two measures through the common initial datum and uses no regularity of $\ell$ in $y$:
    \begin{align}
        &\int_0^1\int_{\R^{3m}}\abs{\three}\dd{\boldsymbol{\eta}_t}\dd{t}\\
        \leq{}& \sup_{t\in I,x\in B_X}\abs{X_t^{\eta(\tau^2)}(x)-X_t^{\eta(\tau^1)}(x)}\\
        &\times\int_0^1\int_{\R^{3m}}\sup_{(x,y)\in B_{\XtimesY}}\abs{\nabla_x\qty(\qty(\nabla_x\varphi_t^{\eta(\tau^1)}(x,y))^\top(\mdif{\theta} v)(x,\theta(\tau^1))(\theta^2-\theta^1))}\dd{\boldsymbol{\eta}_t}\dd{t}\\
        \leq{}& C_R\abs{\tau^2-\tau^1}W^2_{\boldsymbol{\eta}}(\eta^1,\eta^2)_{L^2(I)}.
    \end{align}
    Indeed, the $x$-derivative of the integrand is bounded, through \eqref{eq:Cth}, \eqref{eq:Cx} and the boundedness of $\nabla_x\varphi_t^{\eta(\tau^1)}$ and $\Hessian_x\varphi_t^{\eta(\tau^1)}$ on $B_{\XtimesY}$ from \cref{assump:bound_dNN}, by $C_R(1+\abs{\theta^1}+\abs{\theta^2})\abs{\theta^2-\theta^1}$, whose integral against $\boldsymbol{\eta}$ is at most $C_RW_{\boldsymbol{\eta}}(\eta^1,\eta^2)_{L^2(I)}$ by the Cauchy--Schwarz inequality and the moment bound on $\Dcal_R$; and, analogously to the terminal-state formula above, the flow started at layer $0$ satisfies
    \begin{align}
        \dv{\tau}X_t^{\eta(\tau)}(x)&=\int_0^t\int_{\R^{3m}}M^{\eta(\tau)}(x;t,s)(\mdif{\theta} v)(X_s^{\eta(\tau)}(x),\theta(\tau))(\theta^2-\theta^1)\dd{\boldsymbol{\eta}_s}\dd{s},
    \end{align}
    with integration over $s\in[0,t]$, so that
    \[
        \sup_{t\in I,x\in B_X}\abs{X_t^{\eta(\tau^2)}(x)-X_t^{\eta(\tau^1)}(x)}\le C_R\abs{\tau^2-\tau^1}W_{\boldsymbol{\eta}}(\eta^1,\eta^2)_{L^2(I)}.
    \]
    Consequently, $h$ is Lipschitz continuous with constant $C_RW^2_{\boldsymbol{\eta}}(\eta^1,\eta^2)_{L^2(I)}$, and \cref{rmk:another_convexity} yields the $\lambda_R$-convexity with $\lambda_R=-C_R$.
\end{proof}

\subsection{Strong compactness of the orbit of the gradient flow}\label{sec:compactness}
\begin{proof}[Proof of \cref{prop:Orbit_in_Sobolev}]
    As shown in \cref{prop:orbit_in_L^infty}, $\eta(\tau)$ has support on a bounded set $D\subset\R^m$ with diameter independent of $\tau$.
    By virtue of \cite[Proposition 6.4]{LAVENANT2019688}, it is sufficient to bound the Dirichlet energy
    \begin{equation}
        \Dir(\eta(\tau))=\lim_{\varepsilon\searrow0}\frac{C_1}{\varepsilon^{3}}\iint_{I_\varepsilon}W_2^2(\eta(\tau)_t,\eta(\tau)_s)\dd{t\dd s},\label{eq:Dirichlet}
    \end{equation}
    by a constant independent of $\tau$, where $I_\varepsilon\coloneqq\Set{(t,s)\in I^2|\abs{t-s}\leq\varepsilon}$ and $C_1>0$ is the constant defined in \cite[Definition 3.24]{LAVENANT2019688}.
    The integrand on the right-hand side of \eqref{eq:Dirichlet} is split, with the flow map $\Phi^{\R^m}$ of \eqref{eq:integral_eq_nonlocal}, as
    \begin{align}
        W_2(\eta(\tau)_t,\eta(\tau)_s)\leq{}&W_2\qty(\Phi^{\R^m}_\tau\qty[\eta_0](t,\bullet)_\#\eta(0)_t,\Phi^{\R^m}_\tau\qty[\eta_0](t,\bullet)_\#\eta(0)_s)\label{eq:initial_bound}\\
        &+W_2\qty(\Phi^{\R^m}_\tau\qty[\eta_0](t,\bullet)_\#\eta(0)_s,\Phi^{\R^m}_\tau\qty[\eta_0](s,\bullet)_\#\eta(0)_s).\label{eq:flow_bound}
    \end{align}
    Set $a\coloneqq a_{R_E}$, the coercivity margin \eqref{eq:coercivity_margin} at the energy level $E=J(\eta_0)$; it is positive by \cref{assump:curvature}.
    Since $J(\eta(\tau))\le J(\eta_0)=E$ along the curve of maximal slope, $\eta(\tau)\in\Dcal_{R_E}$, so the definition \eqref{eq:coercivity_margin} of $a=a_{R_E}$ gives $C_{\theta\theta}\int_{\XtimesY}\abs{\nabla_x\varphi_t^{\eta(\tau)}}(1+\abs{x}^2)\dd{\mu_t^{\eta(\tau)}}\le\epsilon-a$ for every $t\in I$ and $\tau\ge0$.

    \emph{Initial-layer term \eqref{eq:initial_bound}.}
    Let $D_0\subset\R^m$ be a bounded set containing $\supp\eta(0)_t$ for a.e.~$t\in I$, as in \cref{assump:initial_data}.
    For $\theta^1$, $\theta^2\in D_0$, a computation as in \eqref{eq:similar_comp_to_scagliotti} gives
    \begin{equation}
    \begin{aligned}
        &\dv{\tau}\abs{\Phi^{\R^m}_\tau\qty[\eta_0](t,\theta^1)-\Phi^{\R^m}_\tau\qty[\eta_0](t,\theta^2)}^2\\
        \leq{}&-2\epsilon\abs{\Phi^{\R^m}_\tau\qty[\eta_0](t,\theta^1)-\Phi^{\R^m}_\tau\qty[\eta_0](t,\theta^2)}^2
        +2\abs{\Phi^{\R^m}_\tau\qty[\eta_0](t,\theta^1)-\Phi^{\R^m}_\tau\qty[\eta_0](t,\theta^2)}\\
        &\times\abs{\nabla_\theta\qty(\fdv{L}{\eta}\qty[\eta(\tau)]\qty(\Phi_\tau\qty[\eta_0]\qty(t,\theta^1))-\fdv{L}{\eta}\qty[\eta(\tau)]\qty(\Phi_\tau\qty[\eta_0]\qty(t,\theta^2)))}.
    \end{aligned}
        \label{similar_comp_to_scagliotti2}
    \end{equation}
    By the fundamental theorem of calculus and $\norm{D^2_\theta v}_{\opn}\le C_{\theta\theta}(1+\abs{x}^2)$ from \eqref{eq:Cthth}, the last factor is bounded by
    \begin{equation}
    \begin{aligned}
        &\abs{\nabla_\theta\qty(\fdv{L}{\eta}\qty[\eta(\tau)]\qty(\Phi_\tau\qty[\eta_0]\qty(t,\theta^1))-\fdv{L}{\eta}\qty[\eta(\tau)]\qty(\Phi_\tau\qty[\eta_0]\qty(t,\theta^2)))}\\
        &\quad\le(\epsilon-a)\abs{\Phi^{\R^m}_\tau\qty[\eta_0](t,\theta^1)-\Phi^{\R^m}_\tau\qty[\eta_0](t,\theta^2)},
    \end{aligned}
    \label{eq:inital_bound1}
    \end{equation}
    because $C_{\theta\theta}\sup_{t}\int_{\XtimesY}\abs{\nabla_x\varphi_t^{\eta(\tau)}}(1+\abs{x}^2)\dd{\mu_t^{\eta(\tau)}}\le\epsilon-a$ by \eqref{eq:coercivity_margin} and the energy-sublevel bound.
    Hence the right-hand side of \eqref{similar_comp_to_scagliotti2} is at most
    \(
        -2a\abs{\Phi^{\R^m}_\tau\qty[\eta_0](t,\theta^1)-\Phi^{\R^m}_\tau\qty[\eta_0](t,\theta^2)}^2,
    \)
    and the Gronwall inequality gives
    \begin{equation}
        \abs{\Phi^{\R^m}_\tau\qty[\eta_0](t,\theta^1)-\Phi^{\R^m}_\tau\qty[\eta_0](t,\theta^2)}\le\e^{-a\tau}\abs{\theta^1-\theta^2}.\label{eq:initial_bound2}
    \end{equation}
    Integrating \eqref{eq:initial_bound2} against an optimal plan between $\eta(0)_t$ and $\eta(0)_s$ yields $\eqref{eq:initial_bound}\le\e^{-a\tau}W_2(\eta(0)_t,\eta(0)_s)$.

    \emph{Flow term \eqref{eq:flow_bound}.}
    We first record the layer-difference estimates: for every $\tau\ge0$ and a.e.~$s$, $t\in I$,
    \[
    \begin{gathered}
        \sup_{x\in B_X}\abs{X_{0,t}^{\eta(\tau)}(x)-X_{0,s}^{\eta(\tau)}(x)}\le C\abs{t-s},\qquad
        W_1\qty(\mu_t^{\eta(\tau)},\mu_s^{\eta(\tau)})\le C\abs{t-s},\\
        \sup_{(x,y)\in B_{\XtimesY}}\abs{\nabla_x\varphi_t^{\eta(\tau)}(x,y)-\nabla_x\varphi_s^{\eta(\tau)}(x,y)}\le C\abs{t-s},
    \end{gathered}
    \]
    where $C$ depends only on the support bound of \cref{prop:orbit_in_L^infty} and the energy level $E$.
    The first estimate follows by integrating the state equation between $s$ and $t$, the drift being bounded on the reachable compact set uniformly in $\tau$; the second by pushing the common initial datum $\mu_0$ forward with the two layers; and the third from the representation $\nabla_x\varphi_t^{\eta(\tau)}(x,y)=M^{\eta(\tau)}(x;1,t)^\top\nabla_x\ell(X_{t,1}^{\eta(\tau)}(x),y)$, the flow identity $X_{s,1}^{\eta(\tau)}=X_{t,1}^{\eta(\tau)}\circ X_{s,t}^{\eta(\tau)}$ and the corresponding cocycle identity for the fundamental matrices, together with the Gronwall bounds \cref{eq:Gronwall_bound_X,eq:Gronwall_bound_M} and the Lipschitz continuity of $\nabla_x\ell$ and $\mdif{x}v$ on that set.
    For $\theta\in\supp\eta(0)_s$, we apply the computation of \eqref{eq:similar_comp_to_scagliotti} to the difference $\Phi^{\R^m}_\tau[\eta_0](t,\theta)-\Phi^{\R^m}_\tau[\eta_0](s,\theta)$, splitting the drift terms at the common evaluation point $\Phi^{\R^m}_\tau[\eta_0](s,\theta)$; this gives a curvature term, bounded as in \eqref{eq:inital_bound1} and absorbed into $-2a\abs{\cdots}^2$, together with the source
    \begin{align}
        & \abs{\nabla_\theta\la\nabla\qty(\varphi_t^{\eta(\tau)}-\varphi_s^{\eta(\tau)}), v\qty(\bullet,\Phi^{\R^m}_\tau\qty[\eta_0]\qty(s,\theta))\ra_{\mu_t^{\eta(\tau)}}}\\
        &+\abs{\nabla_\theta\la\nabla_x\varphi_s^{\eta(\tau)}, v\qty(\bullet,\Phi^{\R^m}_\tau\qty[\eta_0]\qty(s,\theta))\ra_{\mu_t^{\eta(\tau)}-\mu_s^{\eta(\tau)}}}\leq C\abs{t-s},
    \end{align}
    where the two source terms are estimated as in \eqref{eq:first_h} and \eqref{eq:third_h} in \cref{appendix:lambda_convex}, now using the layer-difference estimates above; the constant $C$ is independent of $\tau$ and of $\Dir(\eta(\tau))$ for the same reason.
    Thus $\dv{\tau}\abs{\Phi^{\R^m}_\tau[\eta_0](t,\theta)-\Phi^{\R^m}_\tau[\eta_0](s,\theta)}^2\le-2a\abs{\cdots}^2+2C\abs{t-s}\abs{\cdots}$, and since $\Phi^{\R^m}_0[\eta_0](t,\theta)=\theta=\Phi^{\R^m}_0[\eta_0](s,\theta)$, the Gronwall inequality gives
    \(
        \abs{\Phi^{\R^m}_\tau\qty[\eta_0](t,\theta)-\Phi^{\R^m}_\tau\qty[\eta_0](s,\theta)}\le\frac{C}{a}\abs{t-s},
    \)
    so $\eqref{eq:flow_bound}\le(C/a)\abs{t-s}$.

    Combining the two terms,
    \begin{equation}
        W_2(\eta(\tau)_t,\eta(\tau)_s)\le\e^{-a\tau}W_2(\eta(0)_t,\eta(0)_s)+\frac{C}{a}\abs{t-s},\label{eq:orbit_contraction}
    \end{equation}
    with $C$ independent of $\tau$ and of the Dirichlet energy being estimated.
    Inserting \eqref{eq:orbit_contraction} into \eqref{eq:Dirichlet} yields a bound
    \(
        \Dir(\eta(\tau))\le2\e^{-2a\tau}\Dir(\eta(0))+2\qty(\frac{C}{a})^2\lim_{\varepsilon\searrow0}\frac{C_1}{\varepsilon^{3}}\iint_{I_\varepsilon}\abs{t-s}^2\dd{t\dd s}\le C^\prime,
    \)
     uniform in $\tau\ge0$, since $\Dir(\eta(0))<\infty$ by \cref{assump:initial_data}.
    Together with \eqref{eq:rep_for_nonlocal}, \cref{prop:Sobolev} and \cite[Proposition 6.4]{LAVENANT2019688}, this proves the claim.
\end{proof}

\section*{Acknowledgments}
The author would like to thank Goro Akagi for valuable discussions concerning the gradient inequality and Norikazu Saito for valuable advice on the exposition.

\printbibliography

\end{document}